\newcommand\numeq[1]%
 \newcommand\numgeq[1]%
\newtheorem{thm}{Theorem}
\newtheorem{lem}[thm]{Lemma}
\newtheorem{cor}[thm]{Corollary}
\DeclareMathOperator*{\argmax}{arg\,max}
\DeclareMathOperator*{\argmin}{arg\,min}
\newtheorem{defn}[thm]{Definition}
\def\D{{\mathcal D}}
\def\Pphi{\overline{\Phi}}
\def\F{{\mathcal F}}
\def\N{{\mathcal N}}
\def\E{{\mathbb E}}
\def\A{\Pi}
\def\B{\Sigma}
\def\diam{\text{diam}}
\def\R{\mathbb{R}}
\def\L{\mathcal L}
\def\H{\mathcal H}
\def\calH{\mathcal H}
\def\R{\mathbb R}
\def\Y{\{\pm 1\}}
\def\U{\mathbb U}
\def\dd{\Delta}
\def\g{g}
\def\rr{R}
\def\f{f}
\newcommand\rbb[1]{\textcolor{purple}{#1}}
\begin{document}

\twocolumn[
\icmltitle{Sample Complexity of Robust Linear Classification on Separated Data}

% It is OKAY to include author information, even for blind
% submissions: the style file will automatically remove it for you
% unless you've provided the [accepted] option to the icml2021
% package.

% List of affiliations: The first argument should be a (short)
% identifier you will use later to specify author affiliations
% Academic affiliations should list Department, University, City, Region, Country
% Industry affiliations should list Company, City, Region, Country

% You can specify symbols, otherwise they are numbered in order.
% Ideally, you should not use this facility. Affiliations will be numbered
% in order of appearance and this is the preferred way.
%\icmlsetsymbol{equal}{*}

\begin{icmlauthorlist}
\icmlauthor{Robi Bhattacharjee}{to}
\icmlauthor{Somesh Jha}{goo}
\icmlauthor{Kamalika Chaudhuri}{to}
\end{icmlauthorlist}

\icmlaffiliation{to}{University of California, San Diego}
\icmlaffiliation{goo}{University of Wisconsin-Madison}

\icmlcorrespondingauthor{Robi Bhattacharjee}{rcbhatta@eng.ucsd.edu}

% You may provide any keywords that you
% find helpful for describing your paper; these are used to populate
% the "keywords" metadata in the PDF but will not be shown in the document
\icmlkeywords{Machine Learning, ICML}

\vskip 0.3in
]

% this must go after the closing bracket ] following \twocolumn[ ...

% This command actually creates the footnote in the first column
% listing the affiliations and the copyright notice.
% The command takes one argument, which is text to display at the start of the footnote.
% The \icmlEqualContribution command is standard text for equal contribution.
% Remove it (just {}) if you do not need this facility.

%\printAffiliationsAndNotice{}  % leave blank if no need to mention equal contribution
\printAffiliationsAndNotice{\icmlEqualContribution} % otherwise use the standard text.

%\begin{abstract}
%Building classifiers that are robust to adversarial perturbations is a staple of modern machine learning. In most prior work, the set of adversarial perturbations for an instance $x$, $\U_x$, is typically assumed to be a ball centered at $x$ with fixed radius $r$. During training, the robustness radius, $r$, is typically chosen ahead of time as a hyper-parameter. One issue with this approach is that data in different regions of space could have different relevant distance scales. Furthermore, while balls are chosen for their simplicity, it is plausible that different directions could require different robustness radii. Motivated by these issues in this work we consider \textit{data dependent robustness}, where the adversary $\U$ is matched with a data distribution $\D$. We first characterize neighborhood preserving pairs, $(\D, \U)$, which are pairs of distributions and adversaries that satisfy certain natural properties. We then provide sufficient conditions for non-parametric algorithms (including nearest neighbors and kernel classifiers) to converge towards robust solutions for \textit{any} neighborhood preserving pair. Importantly, we show that these algorithms converge even without prior knowledge on what adversary $\U$ is chosen. 
%\end{abstract}

\begin{abstract}
We consider the sample complexity of learning with adversarial robustness. Most prior theoretical results for this problem have considered a setting where different classes in the data are close together or overlapping. We consider, in contrast, the well-separated case where there exists a classifier with perfect accuracy and robustness, and show that the sample complexity narrates an entirely different story. Specifically, for linear classifiers, we show a large class of well-separated distributions where the expected robust loss of any algorithm is at least $\Omega(\frac{d}{n})$, whereas the max margin algorithm has expected standard loss $O(\frac{1}{n})$. This shows a gap in the standard and robust losses that cannot be obtained via prior techniques. Additionally, we present an algorithm that, given an instance where the robustness radius is much smaller than the gap between the classes, gives a solution with expected robust loss is $O(\frac{1}{n})$. This shows that for very well-separated data, convergence rates of $O(\frac{1}{n})$ are achievable, which is not the case otherwise. Our results apply to robustness measured in any $\ell_p$ norm with $p > 1$ (including $p = \infty$).
\end{abstract}

\section{Introduction}

Motivated by the use of machine learning in safety-critical settings, adversarially robust classification has been of much recent interest. Formally, the problem is as follows. A learner is given training data drawn from an underlying distribution $D$, a hypothesis class $\calH$, a robustness metric $d$, and a radius $r$. The learner's goal is to find a classifier $h \in \calH$ which has the lowest robust loss at radius $r$. The robust loss of a classifier is the expected fraction of examples where either $f(x) \neq y$ or where there exists an $x'$ at distance $d(x, x') \leq r$ such that $f(x) \neq f(x')$.  Robust classification thus aims to find a classifier that maximizes accuracy on examples that are distance $r$ or more from the decision boundary, where distances are measured according to the metric $d$.

In this work, we ask: how many samples are needed to learn a classifier with low robust loss when $\calH$ is the class of linear classifiers, and $d$ is an $\ell_p$-metric? Prior work has provided both upper~\cite{bartlett19, ravikumar20} as well as lower bounds~\cite{schmidt18, ravikumar20} on the sample complexity of the problem. However, almost all look at settings where the data distribution itself is not separated --  data from different classes overlap or are close together in space. In this case, the classifier that minimizes robust loss is quite different from the one that minimizes error, which often leads to strong sample complexity gaps. Many real tasks where robust solutions are desired however tend to involve well-separated data~\cite{yang2020}, and hence it is instructive to look at what happens in these cases.

With this motivation, we consider in this work robust classification of data that is linearly $r$-separable. Specifically, there exists a linear classifier which has zero robust loss at robustness radius $r$. This case is thus the analog of the realizable case for robust classification, and we consider both upper and lower bounds in this setting.

For lower bounds, prior work \cite{nips18_lame} shows that both standard and robust linear classification have VC-dimension $O(d)$, and consequently have similar bounds on the expected loss in the worst case. However, these results do not apply to this setting since we are specifically considering well-separated data, which greatly restricts the set of possible worst-case distributions.  For our lower bound, we provide a family of distributions that are linearly $r$-separable and where the maximum margin classifier, given $n$ independent samples, has error $O(1/n)$. In contrast, any algorithm for finding the minimum robust loss classifier has robust loss at least $\Omega(d/n)$, where $d$ is the data dimension. These bounds hold for all $\ell_p$-norms provided $p > 1$, including $p=2$ and $p=\infty$. Unlike prior work, our bounds do not rely on the difference in loss between the solutions with optimal robust loss and error, and hence cannot be obtained by prior techniques. Instead, we introduce a new geometric construction that exploits the fact that learning a classifier with low robust loss when data is linearly $r$-separated requires seeing a certain number of samples close to the margin.

For upper bounds, prior work \cite{bartlett19} provides a bound on the Rademacher complexity of adversarially robust learning, and show that it can be worse than the standard Rademacher complexity by a factor of $d^{1/q}$ for $\ell_p$-norm robustness where $1/p + 1/q = 1$. Thus, an interesting question is whether dimension-independent bounds, such as those for the accuracy under large margin classification, can be obtained for robust classification as well. Perhaps surprisingly, we show that when data is really well-separated, the answer is yes. Specifically, if the data distribution is linearly $r + \gamma$-separable, then there exists an algorithm that will find a classifier with robust loss $O(\Delta^2/\gamma^2 n)$ at radius $r$ where $\Delta$ is the diameter of the instance space. Observe that much like the usual sample complexity results on SVM and perceptron, this upper bound is independent of the data dimension and depends only on the excess margin (over $r$). This establishes that when data is really well-separated, finding robust linear classifiers does not require a very large number of samples. 

While the main focus of this work is on linear classifiers, we also show how to generalize our upper bounds to Kernel Classification, where we find a similar dynamic with the loss being governed by the excess margin in the embedded kernel space. However, we defer a thorough investigation of robust kernel classification as an avenue for future work.

Our results imply that while adversarially robust classification may be more challenging than simply accurate classification when the classes overlap, the story is different when data is well-separated. Specifically, when data is linearly (exactly) $r$-separable, finding an $r$-separated solution to robust loss $\epsilon$ may require $\Omega(d/\epsilon)$ samples for some distribution families where finding an accurate solution is easier. Thus in this case, there is a gap between the sample complexities of robust and simply accurate solutions, and this is true regardless of the $\ell_p$ norm in which robustness is measured. In contrast, if data is even more separated -- linearly $r + \gamma$-separable --  then we can obtain a dimension-independent upper bound on the sample complexity, much like the sample complexity of SVMs and perceptron. Thus, how separable the data is matters for adversarially robust classification, and future works in the area should consider separability while discussing the sample complexity

\subsection{Related Work}

There is a large body of work \cite{Carlini17, Liu17, Papernot17, Papernot16, Szegedy14, Hein17, Katz17, Wu16,Steinhardt18, Sinha18} empirically studying adversarial examples primarily in the context of neural networks. Several works \cite{schmidt18, Raghunathan20, Tsipras19} have empirically investigated trade-offs between robust and standard classification.

On the theoretical side, this phenomenon has been studied in both the parametric and non-parametric settings. On the parametric side, several works \cite{loh18, attias19, Srebro19, bartlett19, pathak20} have focused on finding distribution agnostic bounds of the sample complexity for robust classification. In \cite{Srebro19}, Srebro et. al. showed through an example that the VC dimension of robust learning may be much larger than standard or accurate learning indicating that the sample complexity bounds may be higher. However, their example did not apply to linear classifiers. 

\cite{Kane20} considers learning linear classifiers robustly, but is primarily focused on computational complexity as opposed to sample complexity.

In \cite{bartlett19}, Bartlett et. al. investigated the Rademacher complexity of robustly learning linear classifiers as well as neural networks. They showed that in both cases, the robust Rademacher complexity can be bounded in terms of the dimension of the input space -- thus indicating a possible gap between standard and robust learning. However, as with the works considering VC dimension, this work is fundamentally focused on upper bounds  -- they do not show true lower bounds on data requirements.

Because of it's simplicity and elegance, the case where the data distribution is a mixture of Gaussians has been particularly well-studied. The first such work was \cite{schmidt18}, in which Schmidt et. al. showed an $\Omega(\sqrt{d})$ gap between the standard and robust sample complexity for a mixture of two Gaussians using the $\ell_\infty$ norm. This was subsequently expanded upon in \cite{Bhagoji19}, \cite{robey20} and  \cite{ravikumar20}. \cite{Bhagoji19} introduces a notion of ``optimal transport," which they subsequently apply to the Gaussian case, deriving a closed form expression for the optimally robust linear classifier. Their results apply to any $\ell_p$ norm. \cite{robey20} applies expands upon \cite{schmidt18} by consider mixtures of three Gaussians in both the $\ell_2$ and $\ell_\infty$ norms. Finally, \cite{ravikumar20} fully generalizes the results of \cite{schmidt18} providing tight upper and lower bounds on the standard and robust sample complexities of a mixture of two Gaussians, in any norm (including $\ell_p$ for $p \in [1, \infty]$). \cite{schmidt18} and \cite{ravikumar20} bear the most relevance with our work, and we consequently carefully compare our results in section \ref{sec:comparison}.

Another approach for lower and upper bounds on sample complexities for linear classifiers can be found in \cite{nips18_lame}, which examines the robust VC dimension of learning linear classifiers. They show that the VC dimension is $d+1$, just as it is in the standard case. This implies that the bounds in the robust case match the bounds in the standard case and in particular shows a lower bound of $\Omega(d/n)$ on the expected loss of learning a robust linear classifier from $n$ samples.

While this result appears to match our lower bound, there is a crucial distinction between the bounds. Our bound implies that there exists some distribution with a large $\ell_2$ margin for which the expected robust loss must be $\Omega(d/n)$. On the other hand, standard results about learning linear classifiers on large margin data implies that the expected standard loss will be $O(1/n)$ (when running the max-margin algorithm). For this reason, our paper provides a case in the well-separated setting in which learning linear classifiers is provably more difficult (in terms of sample complexity) in the robust setting than in the standard setting. By contrast, \cite{nips18_lame} does not show this. Their paper only implies (through standard VC constructions) the existence of \textit{some} distribution that is difficult to learn, and the standard PAC bounds cannot ensure that such a distribution also has a large $\ell_2$ margin.

In the non-parametric setting, there are several works which contrast standard learning with robust learning. \cite{WJC18} considers the nearest neighbors algorithm, and shows how to adapt it for converging towards a robust classifier. In \cite{YRWC19}, Yang et. al. propose the $r$\textit{-optimal classifier}, which is the robust analog of the Bayes optimal classifier. Through several examples they show that it is often a fundamentally different classifier - which can lead to different convergence behavior in the standard and robust settings. \cite{Bhattacharjee20} unified these approaches by specifying conditions under which non-parametric algorithms can be adapted to converge towards the $r$-optimal classifier, thus introducing $r$-consistency, the robust analog of consistency. 

\section{Preliminaries}
We consider binary classification over $\R^d \times \Y$. Our metric of choice is the $\ell_p$ norm, where $p > 1$ (including $p = \infty$) is arbitrary. For $x \in \R^d$, we will use $||x||_p$ to denote the $\ell_p$ norm of $x$, and consequently will use $||x - y ||_p$ to denote the $\ell_p$ distance between $x$ and $y$. We will also let $\ell_q$ denote the dual norm to $\ell_p$ - that is, $\frac{1}{q} + \frac{1}{p}= 1$.

 We use $B_p(x,r)$ to denote the closed $\ell_p$ ball with center $x$ and radius $r$. For any $S \subset \R^d$, we let $diam_p(S)$ denote its diameter: that is, $diam_p(S) = \sup_{x, y \in S} ||x - y||_p.$

\subsection{Standard and Robust Loss}

In classical statistical learning, the goal is to learn an accurate classifier, which is defined as follows:

\begin{defn}
Let $\D$ be a distribution over $\R^d \times \Y$, and let $f \in \Y^{\R^d}$ be a classifier. Then the \textbf{standard loss} of $f$ over $\D$, denoted $\L(f, \D)$, is the fraction of examples $(x,y) \sim \D$ for which $f$ is not accurate. Thus $$\L(f, \D) = P_{(x,y) \sim \D}[f(x) \neq y].$$
\end{defn}

Next, we define robustness, and the corresponding robust loss.

\begin{defn}
A classifier $f \in \Y^{\R^d}$ is said to be \textbf{robust} at $x$ with radius $r$ if $f(x) = f(x')$ for all $x' \in B_p(x,r)$. 
\end{defn}

\begin{defn}
The \textbf{robust loss} of $f$ over $\D$, denoted $\L_r(f, \D)$, is the fraction of examples $(x,y) \sim \D$ for which $f$ is either inaccurate at $(x,y)$, or $f$ is not robust at $(x,y)$ with radius $r$. Observe that this occurs if and only if there is some $x' \in B_p(x,r)$ such that $f(x') \neq y$. Thus $$\L_r(f, \D) = P_{(x, y) \sim \D}[\exists x' \in B_p(x,r)\text{ s.t. }f(x') \neq y].$$ 
\end{defn}

\subsection{Expected Loss and  Sample Complexity}

The most common way to characterize the performance of a learning algorithm is through an $(\epsilon, \delta)$ guarantee, which computes $\epsilon_n, \delta_n$ such that an algorithm trained over $n$ samples has loss at most $\epsilon_n$ with probability at least $1 - \delta_n$. 

In this work, we use the simpler notion of \textit{expected loss}, which is defined as follows: 

\begin{defn}
Let $A$ be a learning algorithm and let $\D$ be a distribution over $\R^d \times \{\pm 1\}$. For any $S \sim \D^n$, we let $A_S$ denote the classifier learned by $A$ from training data $S$. Then the \textbf{expected standard loss} of $A$ with respect to $\D$, denoted $EL^n(A, \D)$ where $n$ is the number of training samples, is defined as $$E\L^n(A, \D) = \E_{S \sim \D^n} \L(A_S, \D).$$ Similarly, we define the \textbf{expected robust loss} of $A$ with respect to $\D$ as $$E\L_r^n(A , \D) = \E_{S \sim \D^n}\L_r(A_S, \D).$$ 
\end{defn}

Our main motivation for using this criteria is simplicity. Our primary goal is to compare and contrast the performances of algorithms in the standard and robust cases, and this contrast clearest when the performances are summarized as a single number (namely the expected loss) rather than an $(\epsilon, \delta)$ pair. 

Next, we address the notion of sample complexity. As above, sample complexity is typically defined as the minimum number of samples needed to guarantee $(\epsilon, \delta)$ performance. In this work, we will instead define it solely with respect to $\epsilon$, the expected loss. 

\begin{defn}
Let $\D$ be a distribution over $\R^d \times \{\pm 1\}$ and $A$ be a learning algorithm. Then the \textbf{standard sample complexity} of $A$ with respect to $\D$, denoted $m^\epsilon(A, \D)$, is the minimum number of training samples needed such that $A$ has  expected standard loss at most $\epsilon$. Formally, $$m^\epsilon(A, \D) = \min(\{n: E\L^n(A, D) \leq \epsilon\}).$$ Similarly, we can define the \textbf{robust sample complexity} as $$m_r^\epsilon(A, \D) = \min(\{n: E\L^n(A, D) \leq \epsilon\}).$$
\end{defn}

\subsection{Linear classifiers}

In this work, we consider linear classifiers, formally defined as follows:
\begin{defn}
Let $w \in \R^d$ be a vector. Then the \textbf{linear classifier} with parameters $w \in \R^d$ and $b \in \R$ over $\R^d \times {\pm 1}$, denoted $f_{w, b}$, is defined as , $$f_{w,b}(x) = \begin{cases} +1 & \langle w, x \rangle  \geq b \\ -1 & \langle w, x \rangle < b  \end{cases}.$$ 
\end{defn}

Learning linear classifiers is well understood in the standard classification setting. We now consider the linearly \textit{separable} case, in which some linear classifier has perfect accuracy. We will later define linear $r$-separability as the robust analog of separability.

\begin{defn}
A distribution $\D$ over $\R^d \times Y$ is \textbf{linearly separable} if its support can be partitioned into sets $S^+$ and $S^-$ such that:

1. $S^+$ and $S^-$ correspond to the positively and negatively labeled subsets of $\R^d$. In particular, $P_{(x,y) \sim \D}[x \in S^y] = 1.$

2. There exists a linear classifier, $f_{w, b}$, that has perfect accuracy. That is, $\L(f_{w, b}, \D) = 0$. 
\end{defn}

The standard sample complexity for linearly separable distributions can be characterized through their margin, which is defined as follows.

\begin{defn}\label{defn:margin}
Let $\D$ be a linearly separable distribution over $\R^d \times \{\pm 1\}$. Let $S^+$ and $S^-$ be as above. Then $\D$ has \textbf{margin} $\gamma$ if $\gamma$ is the largest real number such that there exists a linear classifier $f_{w,b}$ with the following properties:

1. $f_{w,b}$ has perfect accuracy. That is, $\L(f_{w,b}, \D) = 0$.

2. Let $H_{w,b} = \{x: \langle x, w \rangle = b\}$ denote the decision boundary of $f_{w,b}$. Then for all $x \in (S^+ \cup S^-)$, $x$ has $\ell_2$ distance at least $\gamma$ from $H_{w,b}$. That is, $$\inf_{x \in S^+ \cup S^-, z \in H_{w,b}} ||x - z||_2 \geq \gamma.$$ We let $\gamma(\D)$ denote the margin of $\D$.
\end{defn}

Observe that although we use a general norm, $\ell_p$, to measure robustness, the margin is always measured in $\ell_2$. This is because the $\ell_2$ norm plays a fundamental role in  bounding the number of samples needed to learn a linear classifier. 

The basic idea is that when the $\ell_2$ margin is large relative to the $\ell_2$ diameter of the distribution, the max margin algorithm requires fewer samples needed to learn a linear classifier. In particular, the ratio between the $\ell_2$ margin and the $\ell_2$ diameter fully characterizes the standard sample complexity of the max margin algorithm. To further simplify our notation, we define this ratio as the aspect ratio.

\begin{defn}\label{defn:aspect_ratio}
Let $\D$ be a linearly separable distribution over $\R^d \times \{\pm 1\}$. Then the \textbf{aspect ratio} of $\D$, $\rho(\D)$ is defined as, $$\rho(\D) = \frac{diam_2(S^+ \cup S^-)}{\gamma(\D)},$$ where $\diam_2(S^+ \cup S^-)$ denotes its diameter in the $\ell_2$ norm.
\end{defn}

We now have the following well-known result, which characterizes the expected standard loss with the aspect ratio.
\begin{thm}\label{thm:standard}
\emph{(Chapter 10 in \cite{vapnik1998})} Let $M$ denote the hard margin SVM algorithm. If $\D$ is a distribution with aspect ratio $\rho = \rho(\D)$, then for any $n > 0$ we have $\E_{S \sim \D^n}\L(M_S, \D) \leq O(\frac{\rho^2}{n}),$ where $M_S$ denotes the classifier learned by $M$ from training data $S$. 
\end{thm}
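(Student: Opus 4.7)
The plan is to use the classical leave-one-out argument combined with a bound on the number of support vectors of the hard-margin SVM, which is essentially Vapnik's own proof strategy.

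First, I would apply the leave-one-out identity, which uses the symmetry of i.i.d.\ sampling. If $T = \{(x_1, y_1), \dots, (x_{n+1}, y_{n+1})\}$ is drawn from $\D^{n+1}$, then
\[
\E_{S \sim \D^{n}}\!\left[\L(M_S, \D)\right] \;=\; \frac{1}{n+1}\,\E_{T \sim \D^{n+1}}\sum_{i=1}^{n+1}\mathbf{1}\!\left[M_{T\setminus\{i\}}(x_i)\neq y_i\right].
\]
This rewrites the population error as the expected fraction of points that are misclassified when they are held out of training.

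Second, I would invoke a standard stability property of the hard-margin SVM: if $x_i$ is \emph{not} a support vector of $M_T$, then removing $x_i$ leaves the optimal separator unchanged, so $M_{T\setminus\{i\}}(x_i) = M_T(x_i) = y_i$. Hence the sum above is upper bounded by the number of support vectors of $M_T$, and it suffices to bound the expected number of support vectors.

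Third, I would bound the number of support vectors by $\rho^2$ using a Novikoff-type argument. The support vectors all lie in the $\ell_2$-ball realizing $\diam_2(S^+\cup S^-)$, and the separator found by SVM has $\ell_2$ margin at least $\gamma(\D)$ on them. Running the perceptron update rule on the sequence of support vectors, Novikoff's convergence theorem guarantees at most $(\diam_2/\gamma)^2 = \rho^2$ updates; since every support vector must trigger at least one update (otherwise it would not be active in the KKT conditions), the total count is at most $\rho^2$. Chaining the three steps yields $\E[\L(M_S,\D)] \le \rho^2/(n+1) = O(\rho^2/n)$.

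The main obstacle is the third step: rigorously connecting the KKT-based notion of ``support vector'' used by SVM to the sequential mistake count used by the perceptron, so that the Novikoff bound legitimately controls the cardinality of the support set. Everything else is routine: the leave-one-out identity is purely combinatorial, and the non-support-vector stability property follows directly from the uniqueness of the hard-margin solution in the linearly separable case.
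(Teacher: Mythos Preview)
The paper does not actually prove this theorem---it is quoted as a classical result from Vapnik (1998)---so there is no in-paper argument to compare against. Your overall architecture (the leave-one-out identity, then a $\rho^2$ cap on the number of held-out points that get misclassified) is exactly Vapnik's strategy, and steps~1 and~2 are correct as stated.

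Step~3, however, has a real gap. The assertion that ``every support vector must trigger at least one perceptron update'' is false: a support vector of $M_T$ satisfies $y_i\langle w^*,x_i\rangle = \gamma > 0$, so it is \emph{correctly} classified and need not cause a perceptron mistake. Novikoff's theorem therefore does not control the cardinality of the support set, and indeed the number of support vectors is not bounded by $\rho^2$ in general---place arbitrarily many points exactly on the two margin hyperplanes and all of them are support vectors while $\rho$ is unchanged. What you actually need to bound is the number of leave-one-out \emph{errors}, which can be strictly smaller than the number of support vectors. The standard dimension-free argument goes through the SVM dual: with $w^* = \sum_i \alpha_i y_i x_i$ and $\sum_i \alpha_i = \|w^*\|_2^2 = 1/\gamma^2$ (normalizing so that margin points have functional margin $1$), one shows from the KKT conditions that a leave-one-out error at $x_i$ forces $\alpha_i\,\diam_2^2$ to be bounded below by a constant, and summing against $\sum_i \alpha_i = 1/\gamma^2$ then yields at most $O(\rho^2)$ such indices. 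You were right to flag step~3 as the crux, but the perceptron--KKT link you propose is the wrong mechanism.
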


We can also express this result in terms of standard sample complexity.
\begin{cor}\label{cor:standard}
Let $M$ denote the hard margin SVM algorithm. If $\D$ is a distribution with aspect ratio $\rho = \rho(\D)$, then for any $\epsilon > 0$ we have $m^\epsilon(M_S, \D) \leq O(\frac{\rho^2}{\epsilon}),$ where $M_S$ denotes the classifier learned by $M$ from training data $S$. 
\end{cor}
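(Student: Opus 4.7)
The plan is to derive this corollary directly from Theorem \ref{thm:standard} by inverting the bound on expected loss as a function of $n$. Since $m^\epsilon(M, \D)$ is defined as the smallest $n$ for which $E\L^n(M, \D) \leq \epsilon$, the strategy is to find the smallest $n$ guaranteeing that the upper bound $O(\rho^2/n)$ from Theorem \ref{thm:standard} falls below $\epsilon$.

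Concretely, I would write the bound as $E\L^n(M, \D) \leq C\rho^2/n$ for the absolute constant $C$ hidden inside the $O(\cdot)$ in Theorem \ref{thm:standard}. Setting $C\rho^2/n \leq \epsilon$ and solving yields $n \geq C\rho^2/\epsilon$. For any such $n$, the expected loss is at most $\epsilon$, so by the definition of $m^\epsilon$ we conclude $m^\epsilon(M, \D) \leq \lceil C\rho^2/\epsilon \rceil = O(\rho^2/\epsilon)$. There is no real obstacle here: the content of the statement is entirely inherited from Theorem \ref{thm:standard}, and the corollary merely repackages the $1/n$ decay rate into a sample complexity bound. The only minor care needed is to note that $m^\epsilon$ is defined as a minimum over integers, which is absorbed harmlessly by the $O(\cdot)$ notation.
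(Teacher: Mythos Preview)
Your proposal is correct and matches the paper's treatment: the paper states this corollary immediately after Theorem~\ref{thm:standard} without an explicit proof, treating it as a direct inversion of the $O(\rho^2/n)$ expected-loss bound, which is exactly the argument you give.
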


Theorem \ref{thm:standard} and Corollary \ref{cor:standard} will serve as a benchmark for comparison with the robust sample complexity.  
\subsection{Linear $r$-separability}

Finally, we introduce linear $r$-separability, which is the key characteristic of distributions considered in this paper. This can be thought of as the robust analog of linear separability.
\begin{defn}\label{defn:r_separability}
For any $r > 0$, a distribution $\D$ over $\R^d \times \{\pm 1\}$ is \textbf{linearly} $r$-\textbf{separable} if there exists a linear classifier $f_{w, b}$ such that $\L_r(f_{w, b}, \D) = 0$.
\end{defn}
This definition is the fundamental property considered in this paper. Our goal is to understand the sample complexity required for learning robust linear classifiers on linearly $r$-separable distributions, and compare it with the standard sample complexity given in Theorem \ref{thm:standard}.

\section{Lower Bounds}\label{sec:lower_bounds}

In this section, we consider $r$-separated distributions whose aspect ratio is constant. By Theorem \ref{thm:standard}, the standard sample complexity for learning them is independent of $d$. We will show that in contrast, the robust sample complexity has a linear dependence on $d$, and consequently establish a substantial gap between the standard and robust cases.

We begin by defining the family of such distributions.
\begin{defn}
For any $\rho, r$, the set $\F_{r, \rho}$ is defined as the set of all distributions $\D$ over $\R^d \times \{\pm 1\}$ such that $\D$ is $r$-separated and has aspect ratio at most $\rho$.
\end{defn} 

We now state our main result.
\begin{thm}\label{thm:lower}
Let $r > 0$ and $\rho > 20$. Then the following hold.
\begin{enumerate}
	\item For every learning algorithm $A$, and any $n > 0$, there exists $\D \in \F_{r, \rho}$ such that the expected robust loss when $A$ is trained on a sample of size $n$ from $\D$ is at least $\Omega(\frac{d}{n})$. Formally, there exists a constant $c > 0$ such that $\E_{S \sim \D^n}[\L_r(A_S, \D)] \geq \frac{cd}{n}.$
	\item  In contrast, by Theorem \ref{thm:standard}, for \textit{any} $\D \in \F_{r, D}$, the max margin algorithm has expected standard loss $O(\frac{\rho^2}{n})$, when trained on a sample of size $n$ from $\D$. Formally, there exists a constant $c' > 0$ such that $\E_{S \sim \D^n}[\L(A_S, \D)] \leq \frac{c'\rho^2}{n}.$
\end{enumerate}
\end{thm}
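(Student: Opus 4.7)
The second claim is immediate from Theorem~\ref{thm:standard}, so I focus on the lower bound in part~1. My plan is a Yao-type minimax argument: construct a family $\{\D_\sigma\}_{\sigma \in \{\pm 1\}^{d-1}}$ of distributions in $\F_{r,\rho}$ and show that for $\sigma$ drawn uniformly, every deterministic algorithm $A$ has expected robust loss $\Omega(d/n)$ when we average over both $S \sim \D_\sigma^n$ and $\sigma$; the probabilistic method then extracts a fixed $\sigma^\ast$ realizing the bound, and randomized algorithms reduce to the deterministic case in the usual way.

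The construction I have in mind places two ``anchor'' atoms at $(Me_1,+1)$ and $(-Me_1,-1)$, each with probability $\tfrac12 - O(d/n)$, together with, for every coordinate $i \in \{2,\ldots,d\}$, a single ``challenge'' positive atom at $(r-\delta)e_1 + c\sigma_i e_i$ with probability $\epsilon = \Theta(1/n)$. The parameters $M, \delta, c$ are chosen so that the tilted classifier $w_\sigma = e_1 + a\sum_{i\geq 2}\sigma_i e_i$ achieves $\ell_p$-robust margin $r$ on every support point, placing $\D_\sigma$ in $\F_{r,\rho}$ with $\ell_2$-aspect ratio strictly below $20 < \rho$. The heart of the argument is a geometric sign-forcing lemma: any classifier $\hat w$ that $r$-robustly classifies the challenge point in direction $i$ must have $\operatorname{sign}(\hat w_i) = \sigma_i$, because the $(r-\delta)$ shift along $e_1$ costs exactly $\delta$ of the margin that $w = e_1$ would have provided, and the only way to recover it is to tilt $\hat w$ toward $+\sigma_i e_i$ rather than against it. Now fix $i$ and condition on the event $\bar E_i$ that no challenge sample in coordinate $i$ appears; this has probability $(1-\epsilon)^n \geq 1/2$, and conditionally $\sigma_i$ is uniform and independent of $\hat w = A_S$, so $\Pr[\operatorname{sign}(\hat w_i) \neq \sigma_i \mid \bar E_i] \geq 1/2$. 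Each such miss contributes $\epsilon$ to $\L_r(\hat w, \D_\sigma)$, and summing over the $d-1$ test coordinates gives $\E_{\sigma,S}[\L_r(A_S,\D_\sigma)] \geq (d-1)\epsilon/4 = \Omega(d/n)$.

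The main obstacle is the geometric design: I have to pack $d-1$ orthogonal challenge directions while simultaneously (i) keeping the $\ell_2$ diameter bounded so the aspect ratio is constant, (ii) making $w_\sigma$ attain $\ell_p$-robust margin $r$ on every atom, and (iii) preserving the sharpness of the sign-forcing lemma so that any wrong coordinate sign really kills robustness at the corresponding challenge point. For $p = 2$ the constraints close neatly with $c = \Theta(\sqrt{r\delta(d-1)})$ and $a = \Theta(\delta/c)$, and all three conditions hold with small explicit constants. For general $p > 1$ the tradeoff between $\|w_\sigma\|_q$ and $\|w_\sigma\|_2$ changes --- most severely when $p = \infty$, where $q = 1$ and the dual norm is no longer controlled by sparsity --- so the same template does not immediately produce constant aspect ratio. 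I would expect to resolve this either by rescaling the challenge displacements with $p$ or by swapping the orthogonal basis $\{e_i\}$ for a different arrangement that is still $r$-separable in $\ell_p$ and still forces a per-coordinate sign guess. This reparametrization to cover all $p > 1$ with a single constant aspect ratio is the piece where I expect the real work to concentrate.
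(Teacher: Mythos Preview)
Your Yao-style argument with discrete sign parameters is a genuinely different route from the paper's, which instead places a continuous prior on a parameter $a$ and takes $\D_a$ supported on $d$ parallel line segments, then lower-bounds the loss via the posterior uncertainty in $a$ after seeing $S$. Your approach is more elementary in spirit, but the sign-forcing lemma as you state it has a real gap: the claim that ``the only way to recover [the lost $\delta$ of margin] is to tilt $\hat w$ toward $+\sigma_i e_i$'' ignores the bias $b$. With $\hat w = e_1$ and $b$ shifted toward the negative anchor (any $b$ just above $r - M$ still keeps both anchors $r$-robust), every positive challenge point gains $M - r$ of margin for free, with no tilt whatsoever. So whenever $M \geq r + \delta$, the learner can output $f_{e_1,\,b}$ and $r$-robustly classify every challenge point regardless of $\sigma$, and your family carries no information about the hidden signs.

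This is patchable: if you pin $M \in (r,\, r+\delta)$, so the anchors are themselves only barely $r$-separable, then robustly handling both anchors forces $\hat w_1/\|\hat w\|_2 > r/M$ and squeezes $b$ into a window of width $O(M-r)$; a short calculation then shows that $r$-robustly classifying challenge $i$ does force $\sigma_i \hat w_i > 0$ after all. But this constraint---together with the consequence $\delta = \Theta(r/d)$ needed to keep the $\ell_2$ diameter (which is $\Theta(c)$ between challenge points) comparable to the margin---is absent from your writeup, and it is precisely where the construction is delicate. The paper avoids the bias issue structurally: because its support lies on $d$ line segments and any $r$-robust hyperplane must thread through a specific gap on each one, the classifier (bias included) is essentially pinned down by the hidden parameter, leaving the learner no bias-shifting escape to exploit.
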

The condition $\rho > 20$ is required to rule out degenerate cases. This is because for small values of $\rho$, the $\ell_2$ diameter of $\D$ is not much larger than the $\ell_2$ margin of $\D$. This forces $\D$ to be mostly clustered around a line which leads to more complicated behavior.

Observe that when $\rho$ is a constant independent of $d$, the expected  standard loss is $O(\frac{1}{n})$ while the expected robust loss is $\Omega(\frac{d}{n})$. Thus, the ratio between the expected robust loss and the expected standard loss is $\Omega(d)$, leading to a dimensional dependent gap between the robust and standard cases. 

We also note that these bounds hold regardless of which $\ell_p$ ($p \in (1, \infty])$ norm is being used. This is because our construction of $\D \in \F_{r, \rho}$ for which the lower bound holds is given in terms of the norm $p$. More generally, the family $\F_{r, \rho}$ is implicitly defined with respect to $p$.

Furthermore, our lower bound differs from the lower bound of $\Omega(\frac{d}{n})$ shown in prior work \cite{nips18_lame} because it specifically holds for $\F_{r, \rho}$, a linearly $r$-separated family of distributions with constant aspect ratio. Thus, while \cite{nips18_lame} has shown the existence of distributions satisfying the first condition of Theorem \ref{thm:lower}, our result is the first to exhibit a distribution satisfying both conditions.

Finally, we note that Theorem \ref{thm:lower} can also be expressed in terms of sample complexities. We include this in the following corollary.

\begin{cor}
Let $r > 0$ and $\rho > 20$. Then the following hold.

1. For every learning algorithm $A$, and any $\epsilon > 0$, there exists $\D \in \F_{r, \rho}$ such that the robust sample complexity of $A$ with respect to $\D$ is at least $\Omega(\frac{d}{\epsilon})$. Formally, there exists a constant $c > 0$ such that $m_r^\epsilon(A, \D) \geq \frac{cd}{\epsilon}.$

2. In contrast, by Theorem \ref{thm:standard}, for \textit{any} $\D \in \F_{r, D}$, the max margin algorithm has standard sample complexity $O(\frac{\rho^2}{\epsilon})$. Formally, there exists a constant $c' > 0$ such that $m^\epsilon(A, \D) \leq \frac{c'\rho^2}{\epsilon}.$

\end{cor}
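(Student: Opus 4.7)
The plan is to derive this corollary as a mechanical conversion of the expected-loss bounds in Theorem \ref{thm:lower} and Corollary \ref{cor:standard} into sample-complexity bounds, so essentially all of the work has already been done in those statements.

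For part 1, I would fix an algorithm $A$ and a target accuracy $\epsilon > 0$ and then apply Theorem \ref{thm:lower} at the sample size $n = \lfloor cd/\epsilon \rfloor$, where $c$ is the constant produced by that theorem. This produces a distribution $\D \in \F_{r,\rho}$ for which $\E_{S\sim\D^n}[\L_r(A_S,\D)] \geq cd/n \geq \epsilon$, and in fact the same inequality holds at every $n' \leq n$ because the hard distribution constructed in Section \ref{sec:lower_bounds} satisfies $\E\L_r^{n'}(A,\D) \geq cd/n'$ uniformly in $n'$ (the geometric construction used there does not depend on $n$, so the same lower-bound estimate applies for every sample size in the relevant range). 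Consequently no $n' \leq n$ belongs to the set $\{n' : E\L_r^{n'}(A,\D) \leq \epsilon\}$, so by the definition of robust sample complexity, $m_r^\epsilon(A,\D) \geq n + 1 \geq \Omega(d/\epsilon)$, with the constant $c$ inherited from Theorem \ref{thm:lower}.

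Part 2 requires nothing new: for any $\D \in \F_{r,\rho}$, the aspect ratio is at most $\rho$ by definition of $\F_{r,\rho}$, so Corollary \ref{cor:standard} directly gives $m^\epsilon(M,\D) \leq c'\rho^2/\epsilon$ for the hard-margin SVM algorithm $M$. Combining this with part 1 yields the dimensional gap claimed in the statement.

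The only delicate point in the argument is the quantifier handling in part 1: Theorem \ref{thm:lower} nominally gives a hard distribution $\D$ per sample size, but the definition of $m_r^\epsilon$ as a minimum demands that the expected robust loss exceeds $\epsilon$ at every smaller sample size too. I expect this to be unproblematic because the construction in Section \ref{sec:lower_bounds} is a fixed distribution (depending on $r$, $\rho$, and $d$ but not on $n$) whose lower bound is established via a counting argument on samples that happen to fall near the margin, and the same counting argument delivers an $\Omega(d/n')$ lower bound for every $n'$ in the relevant regime. If anything gets in the way, it would be handling this uniformity cleanly; once that is in hand the rest of the proof reduces to arithmetic on the relation $n \approx cd/\epsilon$.
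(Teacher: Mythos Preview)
Your approach is exactly what the paper intends: the corollary is stated without any proof, as an immediate reformulation of Theorem~\ref{thm:lower} (for part 1) and Corollary~\ref{cor:standard} (for part 2), and your mechanical conversion is the natural way to fill that in.

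One correction to your handling of the delicate point. You are right that the prior $\A$ built in the appendix does not depend on $n$, but the specific hard distribution $\D$ in Theorem~\ref{thm:lower} is obtained from $\A$ by an averaging step: Lemma~\ref{lem:lower_bound} bounds $\E_{\D\sim\A}\E_{S\sim\D^n}[\L_r(A_S,\D)]\geq\Omega(d/n)$, and then the proof of Theorem~\ref{thm:lower} picks some $\D$ beating this average. That extracted $\D$ may in principle depend on $n$, so your assertion that ``the same lower-bound estimate applies for every sample size'' for a single fixed $\D$ is not literally what the appendix establishes. The paper itself does not address this subtlety either; for the intended reading of the corollary it is enough to exhibit, for the given $\epsilon$, a sample size $n=\Theta(d/\epsilon)$ and a $\D\in\F_{r,\rho}$ at which the expected robust loss still exceeds $\epsilon$, which is precisely what Theorem~\ref{thm:lower} delivers.
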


\subsection{Comparison with \cite{ravikumar20} and~\cite{schmidt18}}\label{sec:comparison}

The first work to provide a robust sample complexity lower bound that applied to linear classifiers is~\cite{schmidt18}; they showed a gap of $\Omega(\sqrt{d})$ between the robust and accuracy loss for a specific mixture of two Gaussians. This was later generalized to mixtures of any two Gaussians by~\cite{ravikumar20}, who also established more general lower bounds for any $\ell_p$ norm. Since \cite{ravikumar20} is a strict generalization of \cite{schmidt18}, we next explain how our lower bounds differ from~\cite{ravikumar20}, and why their techniques do not lead to our results. We begin by summarizing their results.

\paragraph{Summary of \cite{ravikumar20}} \cite{ravikumar20} considers data distributions $\D$ that are parametrized by $\mu \in \R^d$ and $\Sigma \in \R^{d \times d}$, $\Sigma \succcurlyeq 0$. $\D_{\mu, \Sigma}$ is the mixture of two Gaussians, $\N(\mu, \Sigma)$ and $\N(-\mu, \Sigma)$, with equal mass, where instances drawn from $\N(\mu, \Sigma)$ are labeled as $+$, and instances drawn from $\N(-\mu, \Sigma)$ are labeled as $-$. They consider robustness measured in any normed metric in $\R^d$, including the $\ell_p$ norm for $p \in (1, \infty]$. Although their bounds apply to any classifier, this effectively deals with linear classifiers since it can be shown that the optimally robust and accurate classifiers are both linear.

For any distribution $\D_{\mu, \Sigma}$, let $L_{rob}$ denote the optimal robust loss of any classifier on $\D_{\mu,\Sigma}$, and let $L_{std}$ denote the optimal standard loss. Then the bounds shown in \cite{ravikumar20} can restated as follows (a detailed derivation from \cite{ravikumar20} appears in Appendix \ref{sec:appendix_comparison}). 

\begin{thm}\label{thm:ravikumar}
\cite{ravikumar20}
\begin{enumerate}
	\item For any learning algorithm $A$ and any $n > 0$, there exists some mixture of Gaussians, $\D_{\mu, \Sigma}$ such that the expected \textit{excess} robust loss is at least $\Omega(L_{rob}\frac{d}{n}),$ when $A$ is trained on a sample of size $n$ from $\D$. 
	\item For any distribution $\D_{\mu, \Sigma}$, it is possible to learn a classifier with expected \textit{excess} standard loss at most $O(L_{std}\frac{d}{n})$.
	\item By (1.) and (2.), the ratio between the expected excess loss and expected excess standard loss can be expressed as $ratio \geq \Omega(\frac{L_{rob}}{L_{std}}).$
\end{enumerate}
\end{thm}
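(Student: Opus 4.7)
The plan is to derive Theorem \ref{thm:ravikumar} from the main results of \cite{ravikumar20}; the full derivation appears in Appendix \ref{sec:appendix_comparison}, so I only sketch the key ideas here. Throughout, I use the fact that for $\D_{\mu,\Sigma}$ both the Bayes-optimal classifier and the optimally robust classifier are linear and depend smoothly on $\mu$ (with $\Sigma$ fixed), so all three parts reduce to statements about parameter estimation for $\mu$.

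\emph{Step 1 (upper bound on excess standard loss).} The Bayes-optimal classifier for $\D_{\mu,\Sigma}$ is the linear classifier with normal vector $w^\ast = \Sigma^{-1}\mu$ and bias $0$; its standard loss equals $L_{std} = \Phi(-\|\mu\|_{\Sigma^{-1}})$, where $\|\cdot\|_{\Sigma^{-1}}$ is the Mahalanobis norm. A natural plug-in algorithm forms $\hat\mu = \tfrac{1}{n}\sum_{i=1}^n y_i x_i$ (with $\Sigma$ either known or estimated in a preliminary step), which concentrates at rate $\E\|\hat\mu-\mu\|_2^2 = O(d/n)$. Because $\mu$ is the unique minimizer of the population standard loss, the first-order term in a Taylor expansion of the plug-in loss around $\mu$ vanishes; the second-order term carries a Hessian proportional to the Gaussian density at the decision boundary, which by the tail identity $\phi(t)\sim t\,\Phi(-t)$ is of order $L_{std}$ (up to polynomial factors in $\|\mu\|_{\Sigma^{-1}}$). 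Taking expectations yields an expected excess standard loss of $O(L_{std}\cdot d/n)$.

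\emph{Step 2 (lower bound on excess robust loss).} This is a local minimax argument. Fix $\Sigma$ and a reference $\mu_0$, and construct a packing $\mu_1,\ldots,\mu_N$ of size $N\geq 2^{\Omega(d)}$ in a small ball around $\mu_0$ via a Varshamov-Gilbert style construction. The KL divergence satisfies $\mathrm{KL}(\D_{\mu_i}^{\otimes n}\,\|\,\D_{\mu_j}^{\otimes n}) = O(n\|\mu_i-\mu_j\|_{\Sigma^{-1}}^2)$, so Fano's inequality (or Assouad's lemma) forces any estimator of $\mu$ to incur squared error $\Omega(d/n)$ on at least one $\mu_i$. The optimal robust linear classifier depends smoothly on $\mu$ and has strictly positive Hessian at the minimum---again proportional to a Gaussian density at the robust decision boundary, hence of order $L_{rob}$---so the squared estimation error translates into an excess robust loss of $\Omega(L_{rob}\cdot d/n)$.

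\emph{Step 3 (ratio).} Applying Step 1 to the worst-case distribution exhibited in Step 2, dividing the two bounds yields ratio $\geq \Omega(L_{rob}/L_{std})$, as claimed.

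The main obstacle is identifying the quadratic coefficient of the loss expansion with the optimal loss itself ($L_{std}$ in Step 1, $L_{rob}$ in Step 2). The former hinges on the Gaussian tail identity $\phi(t)/\Phi(-t)\to t$; the latter additionally requires relating the shape of the robust decision boundary to the support function of the $\ell_p$ ball, which is where the norm $p$ enters the analysis and which constitutes the technical core of \cite{ravikumar20}.
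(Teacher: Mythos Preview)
Your sketch does considerably more than what the paper actually does. The paper's derivation in Appendix \ref{sec:appendix_comparison} is purely a \emph{reparametrization} of results quoted verbatim from \cite{ravikumar20}: those results are stated there in terms of the signal-to-noise ratios
\[
s_{std}(\mu,\Sigma)=2\sqrt{\mu^t\Sigma^{-1}\mu},\qquad
s_{rob}(\mu,\Sigma)=\min_{\|z\|_p\le r}2\sqrt{(\mu-z)^t\Sigma^{-1}(\mu-z)},
\]
yielding a robust lower bound of order $e^{(-\frac18+o(1))s_{rob}^2}\tfrac{d}{n}$ and a standard upper bound of order $s_{std}e^{-\frac18 s_{std}^2}\tfrac{d}{n}$. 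The paper then simply invokes the Gaussian tail estimate $\Omega\bigl(\tfrac{x}{x^2+1}e^{-x^2/2}\bigr)\le \overline{\Phi}(x)\le O\bigl(\tfrac{e^{-x^2/2}}{x}\bigr)$ together with $L_{std}=\overline{\Phi}(\tfrac12 s_{std})$ and $L_{rob}=\overline{\Phi}(\tfrac12 s_{rob})$ to rewrite these bounds as $O(L_{std}\tfrac{d}{n})$ and $\Omega(L_{rob}\tfrac{d}{n})$. That is the entire argument: no estimators, no Taylor expansions, no packing, no Fano.

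Your Steps 1--3 instead outline a from-scratch proof of the underlying \cite{ravikumar20} bounds (plug-in estimator plus second-order expansion for the upper bound; a local minimax packing argument for the lower bound). That strategy is reasonable and is in the spirit of what \cite{ravikumar20} itself does, but it is not what this paper's Appendix contains, and your opening sentence (``the full derivation appears in Appendix \ref{sec:appendix_comparison}'') is therefore misleading. If you want to match the paper, replace the three steps by: (i) state the $s_{std},s_{rob}$ bounds from \cite{ravikumar20}; (ii) apply the Mills-ratio tail bound to substitute $L_{std},L_{rob}$; (iii) divide.
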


Observe that their bounds are given through \textit{excess} losses, which is the amount by which the loss exceeds to the optimal loss. This is necessary because in their setting, the optimal classifiers do not have $0$ loss. 

\paragraph{Comparison with our bounds} Recall that in our work, we are concerned with the \textit{linearly $r$-separated case}, which occurs precisely when the optimal robust and standard losses both equal $0$. However, from Theorem \ref{thm:ravikumar}, we see that although \cite{ravikumar20} proves a gap between standard and robust sample complexity, this gap is predicated on distributions for which the optimal robust loss, $L_{rob}$ and optimal standard loss, $L_{std}$ differ. Furthermore, in the case where they obtain a gap of $\Omega(d)$, we see that this requires $\frac{L_{rob}}{L_{std}} = \Omega(d)$ which is a substantial difference. By contrast, our results characterize a gap exclusively in the case that this does not occur. 

Finally, in the limiting case where the Gaussians they consider are sufficiently far apart, their data will begin to appear linearly $r$-separated, meaning both $L_{rob}$ and $L_{std}$ are close to $0$. However, even in this case, it can be shown that the ratio $\frac{L_{rob}}{L_{std}}$ diverges towards infinity, meaning that their lower bound characterizes a very different dynamic from ours. Precise details on this comparison can be found in appendix \ref{sec:appendix_comparison}.

\subsection{Intuition behind Theorem \ref{thm:lower}}

The proof idea for Theorem \ref{thm:lower} can be summarized with a simple example (Figure \ref{fig:small_margin_robust}). In this example, we seek to learn a linear classifier for a linearly $r$-separated distribution in $\R^2$. The key idea is to contrast the necessary conditions for learning a robust classifier, and the necessary conditions for learning an accurate classifier. 

Observe that the distribution is \textit{precisely} linearly $r$-separated, that is, it is not possible to achieve robustness for radii larger than $r$. Because of this, there is a unique linear classifier $f_{rob}$ that has perfect robustness. In order to learn this classifier, we must see examples from $S^+ \cup S^-$ that are close to the ``boundary" of $S^+ \cup S^-$. In our figure, this consists of points that are close to the dotted blue and red lines. Moreover, it can be shown that the number of such examples we must see is related to $d$, the dimension.

By contrast, any classifier that separates $S^+$ from $S^-$ has perfect accuracy (take for example $f_{std}$ shown in the figure). It is possible to exploit this by using margin based algorithms for learning linear classifiers. In particular, we no longer need to see points that are extremely close to the boundary of $S^+ \cup S^-$. 
 
\begin{figure}[h]
\vspace{.3in}
\includegraphics[scale=0.55]{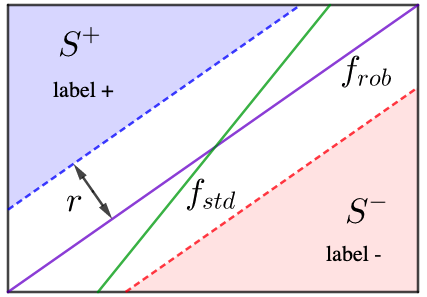}
\vspace{.3in}
\caption{An example of a linearly $r$-separated distribution, with positively and negatively labeled examples in $S^+$ and $S^-$ respectively. The optimally robust classifier, $f_{rob}$ is shown in purple, while the (not necessarily unique) optimally accurate classifier, $f_{std}$, is shown in green.}
\label{fig:small_margin_robust}
\end{figure}

\paragraph{General Hypothesis Classes:} We now briefly consider how to extend our methods to other hypothesis classes. For any hypothesis class $\H$ and distribution $\D$ let $$\H_{\D, \alpha} = \{h: h \in \H,\L(h, \D) \leq \alpha\}$$ and let $$\H_{\D, \alpha}^r = \{h: h \in \H,\L_r(h, \D) \leq \alpha\}.$$ $\H_{\D, \alpha}$ can be thought of as the set of accurate classifiers while $\H_{\D, \alpha}^r$ can be thought of as the set of astute classifiers. By their definitions, it is clear that $\H_{\D, \alpha}^r \subseteq \H_{\D, \alpha}$. However, in the case when $\H$ is the set of linear classifiers, we see that for small $\alpha$, $\H_{\D, \alpha}^r$ is a much ``smaller" set than $\H_{\D, \alpha}$. By exploiting the geometric structure inherent to $\H$, we can much more efficiently search for some $h \in \H_{\D, \alpha}$ than we can in $\H_{\D, \alpha}^r$. This dynamic is the crux of our lower bound: as we essentially show that there are far more critical points (i.e. points near the decision boundary) that we must see for learning $\H_{\D, \alpha}^r$ that aren't required for $\H_{\D, \alpha}$. 

Thus, for our methods to extend to an arbitrary hypothesis class, we would require a similar dynamic. We need two properties to hold: (1) $\H_{\D, \alpha}^r$ must be a very strict subset of $\H_{\D, \alpha}$ for sufficiently small alpha. (2) We must have some kind of exploitable geometric structure about $\H$ which allows us to exploit this gap. For the case of linear classifiers, this was the $\ell_2$ measured aspect ratio, $\gamma(\D)$. 

\paragraph{Kernel Classifiers: } A natural choice of a more general hypothesis class would be Kernel Classifiers, which are linear classifiers that operate in an embedded space, $H$. The main difficulty in expanding our lower bound to this more general setting comes from the behavior near the margin: the effects of the robustness radius in the embedded space are considerably less behaved than they are in the standard linear case. Nevertheless, we leave this as an important avenue for future work.

\section{Upper Bounds}\label{sec:upper_bound}

\begin{figure}
\begin{algorithm}[H]
   \caption{Adversarial-Perceptron}
   \label{alg:upper_bound}
\begin{algorithmic}[1]
    \STATE \textbf{Input}:  $S = \{(x_1, y_1), \dots, (x_n, y_n)\} \sim \D^n,$
    \STATE $w \leftarrow 0$ 
    \FOR{$i = 1 \dots n$}
    	\STATE $z = \argmin_{||z - x_i||_p \leq r}  y_i\langle w, z \rangle$ \COMMENT{\rbb{finds adv. ex.}}
        \IF{$\langle w, y_iz \rangle \leq 0$ \COMMENT{\rbb{checks label}}}
            \STATE $w \leftarrow w + y_iz$ \COMMENT{\rbb{perceptron update}}
        \ENDIF           
    \ENDFOR
    \STATE return $f_{w, 0}$
\end{algorithmic}
\end{algorithm}
\caption{An algorithm combining adversarial training with the perceptron algorithm. For each $(x_i, y_i)$, we first attack it, to get $z$. If $z$ is labeled incorrectly, we do a perceptron update using $z$.}
\end{figure}

In the previous section, we showed that for any algorithm, there is some distribution $\D \in \F_{r, \rho}$ that is difficult (i.e. requires high sample complexity) to learn robustly. A natural follow-up question is: what about distributions for which the margin, $\gamma$ is very large compared to $r$. 

Observe that in Figure \ref{fig:small_margin_robust} the robustness radius $r$ is very close to the margin. In particular, we can find adversarial examples from $S^+$ and $S^-$ that are very close to the decision boundary $f_{rob}$. By contrast, if $\gamma >> r$, then this no longer holds which suggests that better robust sample complexities might be possible.

In this section, we will describe a subset of $\F_{r, \rho}$ that can be learned with expected loss $O(\frac{1}{n})$, thus matching the standard sample complexity up to a constant factor. To do so, we will introduce a novel concept: the \textit{robust margin}. The basic intuition is that distributions for which the margin greatly exceeds the robustness radius are precisely distributions with a large robust margin. We use the following notation.

Observe that if $\D$ is a linearly $r$-separated distribution, then $\D$ must also be linearly separable. As earlier, let $S^+, S^- \subset \R^d$ denote the positively and negatively labeled examples from $\D$. We now define \begin{equation}\label{eqn:s_plus_s_minus} S_r^+ = \cup_{s \in S^+} B_p(s, r)\text{ and }S_r^{-} = \cup_{s \in S^-} B_p(s,r).\end{equation} It follows that the decision boundary of any linear classifier with perfect robustness over $\D$ must separate $S_r^+$ and $S_r^-$. We now define the robust margin as a measurement of this separation.

\begin{defn}\label{def:robust_margin}
Let $\D$ be a linearly $r$-separable distribution over $\R^d \times \{\pm 1\}$. Let $S_r^+$ and $S_r^-$ be as above. Then $\D$ has \textbf{robust margin} $\gamma_r$ if $\gamma_r$ is the largest real number such that there exists a linear classifier $f_{w,b}$ with the following properties: 

1. $f_{w,b}$ has perfect astuteness. That is, $\L_r(f_{w,b}, \D) = 0$. 

2. Let $H_{w,b} = \{x: \langle x, w \rangle = b\}$ denote the decision boundary of $f_{w,b}$. Then for all $x \in (S_r^+ \cup S_r^-)$, $x$ has $\ell_2$ distance at least $\gamma$ from $H_{w,b}$. That is, $$\inf_{x \in S_r^+ \cup S_r^-} \inf_{z \in H_{w,b}} ||x - z||_2 \geq \gamma.$$ We let $\gamma_r(\D)$ denote the margin of $\D$, and say that such a distribution is $r, \gamma_r$-separated. 
\end{defn}

It is crucial to note that although adversarial perturbations are measured in $\ell_p$, the robust margin is measured in $\ell_2$. This is because while the metric $\ell_p$ plays a role in constructing $B(x,r)$, it can be completely disregarded once the sets $S_r^+$ and $S_r^-$ are considered, as any hyperplane separating $S_r^+$ and $S_r^-$ will have perfect robustness.  

We now define the robust aspect ratio, which is the robust analog of standard aspect ratio.

\begin{defn}
Let $\D$ be a distribution over $\R^d \times \{\pm 1\}$. Then the \textbf{robust aspect ratio} of $\D$, $\rho_r(\D)$ is defined as $$\rho_r(\D) = \frac{diam_2(S_r^+ \cup S_r^-)}{\gamma_r(\D)},$$ where as before, $diam_2(S_r^+ \cup S_r^-)$ denotes its diameter in the $\ell_2$ norm.
\end{defn}

We will now show that just as the aspect ratio, $\rho(\D)$, characterized the sample complexity for standard classification, the robust aspect ratio, $\rho_r(\D)$ will characterize the sample complexity for robust learning. To do so, we present a perceptron-inspired algorithm (Algorithm \ref{alg:upper_bound}) for learning a robust classifier on $r$-separated data with robust aspect ratio $\rho_r$. 

The basic idea behind Algorithm \ref{alg:upper_bound} is to combine the standard perceptron algorithm with adversarial training. In particular, we iterate through the training set and do the following on each point (refer to Algorithm \ref{alg:upper_bound} for precise details). 

1. Find an adversarial example $(z, y_i)$ by attacking our classifier, $f_{w, 0}$, at $(x_i, y_i)$ \rbb{(line 4)}. This is a straightforward convex optimization problem for linear classifiers.

2. If $f_{w,0}(z) \neq y_i$, we update our weight vector with $(z, y_i)$ by using the standard perceptron update \rbb{(lines 5-6)}.

We have the following upper bound on the expected robust loss of our algorithm.

\begin{thm}\label{thm:upper_bound}
Let $\D$ be a distribution with robust aspect ratio $\rho_r(\D)$. Then for any $n > 0$, we have $$\E_{S \sim \D^n} [\L_r(A_S, \D)] \leq O(\frac{\rho_r(\D)^2}{n}),$$ where $A_S$ denotes the classifier learned by Algorithm \ref{alg:upper_bound} from training data $S$. 
\end{thm}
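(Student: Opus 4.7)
The plan is to mimic the classical Novikoff perceptron analysis, applied to the stream of adversarial examples $z_t$ produced in line~4 of Algorithm~\ref{alg:upper_bound}, and then convert the resulting mistake bound to an expected loss bound via a standard online-to-batch argument. The key structural observation is that each $z_t$ lies in $S_r^{y_t}$: since $z_t \in B_p(x_t, r)$ and $x_t \in S^{y_t}$, we have $z_t \in S_r^{y_t}$ by~(\ref{eqn:s_plus_s_minus}). Moreover, by Definition~\ref{def:robust_margin} there exists a unit-norm separator $f_{w^*, b^*}$ whose decision boundary is at $\ell_2$-distance at least $\gamma_r := \gamma_r(\D)$ from every point of $S_r^+ \cup S_r^-$, so each adversarial point $(z_t, y_t)$ encountered by the algorithm is correctly classified by $f_{w^*, b^*}$ with $\ell_2$-margin at least $\gamma_r$.

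Next I would prove a deterministic mistake bound: over any run of Algorithm~\ref{alg:upper_bound}, the number of updates $M$ is at most $O(\rho_r(\D)^2)$. At each update, $\langle w^*, w_t\rangle$ grows by $y_t \langle w^*, z_t\rangle \ge \gamma_r$ (after absorbing the bias $b^*$, e.g., by passing to the augmented representation $(z,1) \in \R^{d+1}$ or by pairing positive and negative updates), and because the update fires only when $y_t \langle w_{t-1}, z_t\rangle \le 0$, the squared norm $\|w_t\|_2^2$ grows by at most $diam_2(S_r^+ \cup S_r^-)^2$ per update. Combining these two inequalities via Cauchy--Schwarz yields $M \le (diam_2(S_r^+ \cup S_r^-)/\gamma_r)^2 = \rho_r(\D)^2$.

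I would then convert the mistake bound into the claimed expected loss bound. Because the training data are i.i.d., the conditional probability that Algorithm~\ref{alg:upper_bound} updates at step $t$, given $w_{t-1}$, is exactly $\L_r(f_{w_{t-1}, 0}, \D)$: an update happens iff $w_{t-1}$ fails to be robust on the fresh draw $(x_t, y_t)$. Summing and applying the mistake bound gives $\sum_{t=1}^n \E[\L_r(f_{w_{t-1}, 0}, \D)] \le \rho_r(\D)^2$, so a uniformly random iterate already has expected robust loss $O(\rho_r(\D)^2/n)$. To transfer this bound to the final iterate $A_S$ returned in line~9, I would exploit the fact that Algorithm~\ref{alg:upper_bound} is a sample compression scheme of size $\le \rho_r(\D)^2$ --- the output weight vector is determined purely by the subsample of points on which an update occurred --- and then apply a standard compression-based generalization bound.

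The main obstacle I foresee is this last step: translating the online mistake bound into a sharp bound on the \emph{final} iterate $w_n$ (as opposed to a random or averaged iterate) without picking up extra $\log n$ factors, given that after only one pass $w_n$ need not be consistent with all of the training sample. A secondary but genuine subtlety is the bias: Algorithm~\ref{alg:upper_bound} returns $f_{w, 0}$ with zero bias while $\gamma_r$ is defined over classifiers with arbitrary bias, so the perceptron analysis must either work in the augmented representation $(z, 1) \in \R^{d+1}$ or explicitly pair positive and negative updates so that the bias terms cancel and the bound depends cleanly on the $\ell_2$ diameter of $S_r^+ \cup S_r^-$ rather than on a norm-from-origin.
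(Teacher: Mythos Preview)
Your Novikoff-style mistake bound on the stream of adversarial points $z_t \in S_r^{y_t}$ is exactly what the paper does (Lemma~\ref{lem:update_count}), and your augmented-representation suggestion for handling the bias is also exactly the paper's approach in the general case (Lemma~\ref{lem:general_upper_bound}). So the skeleton is right.

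The one place you diverge from the paper is the online-to-batch step, and the obstacle you flag is real: a compression argument on the last iterate will cost you a $\log n$ factor, and a single pass does not guarantee $w_n$ is even robustly consistent with the training set. The paper does \emph{not} solve this problem. Instead it quietly replaces Algorithm~\ref{alg:upper_bound} by a modified version (Algorithm~\ref{alg:upper_bound_modified}) that draws $k$ uniformly from $\{0,1,\dots,n\}$ and returns the iterate after $k$ steps. For that randomized-stopping variant, your own calculation already gives the bound: $\sum_{t=1}^{n+1} \E[X_t] \le R^2/\gamma_r^2$ with $X_t$ the indicator of an update at step $t$, and $\E[X_t] = \E[\L_r(f_{w_{t-1},0},\D)]$, so averaging over $t$ gives expected robust loss $\le R^2/(\gamma_r^2(n+1))$ directly (this is the paper's Lemma~\ref{thm:upper_bound_origin}, citing the Freund--Schapire online-to-batch conversion). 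In other words, the paper buys the clean $O(\rho_r^2/n)$ rate by changing the algorithm rather than by a sharper analysis of the final iterate; your compression route is a genuinely different attempt that would yield a slightly weaker $O(\rho_r^2 \log n / n)$ bound for the algorithm as literally stated.
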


Observe that this expected loss is still larger than the expected standard loss in  Theorem \ref{thm:standard} as $\rho_r(\D) > \rho(\D)$ for any $\D$. We also note that this result is not contradictory with our lower bound; there exist distributions $\D \in \F_{r, \rho}$ such that $\gamma_r(\D) = 0$, and these are precisely the distributions for which our lower bounds hold. 

\subsection{Generalization to Kernel Classifiers}

Algorithm \ref{alg:upper_bound} can be thought of as the robust analog to the perceptron algorithm. We now generalize this algorithm to obtain a robust variant of the \textit{kernel perceptron algorithm}. We first briefly review kernel classifiers. A detailed explanation of our generalized algorithm along with requisite background material can be found in Appendix \ref{sec:kernel_appendix}

\begin{defn}\label{defn:kernel}
Let $K: \R^d \times \R^d \to \R$ be a kernel similarity function, $T = \{(x_1, y_1), \dots, (x_m, y_m)\} \subset \R^d \times \{\pm 1\}$ be a set of labeled points, and $\alpha \in \R^m$ be a vector of $m$ real numbers. Then the \textbf{kernel classifier} with similarity function $K$, parameters $T, \alpha$, and denoted by $f_{T, K}^\alpha$ is defined as $$f_{T, \alpha}^K(x) = \begin{cases} +1 &  \sum_1^m \alpha_iy_iK(x_i, x) \geq 0\\ -1 &  \sum_1^m \alpha_iy_iK(x_i, x) < 0  \end{cases}.$$
\end{defn}

Conceptually, kernel classifiers are linear classifiers operating in embedded space. With each kernel similarity function $K$, there is a map $\phi: \R^d \to H$ (where $H$ is some Hilbert space) such that $K(x, x') = \langle \phi(x), \phi(x') \rangle$. Thus we can think of kernel classifiers as having a linear decision boundary in $H$. 

We now present an analog of Algorithm \ref{alg:upper_bound} that we call the Adversarial Kernel-Perceptron. The essence of this algorithm has not changed. For each $(x_t, y_t)$ in our training set, we do the following.

1. Find an adversarial example $(z, y_i)$ by attacking our classifier, $f_{T, \alpha}^K$, at $(x_i, y_i)$ \rbb{(line 4)}. 

2. If $f_{T, \alpha}^K(z) \neq y_i$, we update our weight vector with $(z, y_i)$ by appending $(z, y_i)$ to $T$ \rbb{(lines 5-6)}. This corresponds to a kernel-perceptron update that uses $(z, y_i)$ instead of $(x_i, y_i)$.  

\begin{figure}
\begin{algorithm}[H]
   \caption{Adversarial-Kernel-Perceptron}
   \label{alg:upper_bound_kernel}
\begin{algorithmic}[1]
    \STATE \textbf{Input}:  $S = \{(x_1, y_1), \dots, (x_n, y_n)\} \sim \D^n,$ Similarity function, $K$
    \STATE $T \leftarrow \emptyset$, $\alpha \leftarrow 0$
    \FOR{$i = 1 \dots n$}
    	\STATE $z = \argmin_{||z - x||_p \leq r}  y_if_{T, \alpha}^K(z)$ \COMMENT{\rbb{finds adv. ex.}}
        \IF{$f_{T, \alpha}^k(z) \leq 0$\COMMENT{\rbb{checks label}}}
            \STATE $T = T \cup \{(z, y_i)\}$ \COMMENT{\rbb{kern. percep. update}}
            \STATE $\alpha = (1, \dots, 1)_{|T|}$
        \ENDIF           
    \ENDFOR
    \STATE return $f_{T, \alpha}^K$
\end{algorithmic}
\end{algorithm}
\caption{A kernel version of Algorithm \ref{alg:upper_bound}. We replace the perceptron update step with a kernel-perceptron update step.}
\end{figure}

One challenging aspect of this algorithm is minimizing $f_{T, \alpha}^k(z)$. For linear classifiers, this has a closed form solution that utilizes the dual norm. For arbitrary Kernel classifiers, this is a somewhat more challenging problem. However, we note that this can be solved using standard optimization techniques, and in some cases (when $K$ is particularly simple), it can be solved with basic gradient descent.

Finally, we show that this Algorithm has similar performance to the linear case. Instead of using the robust aspect ratio, $\rho_r(\D)$, to bound the performance, we will require the \textbf{robust $K$-aspect ratio}, which is the kernel analog of this quantity. It can be thought of as the robust aspect ratio in the embedded space $H$. Details about this quantity (along with the proof of the theorem) can be found in Appendix \ref{sec:kernel_appendix}.

\begin{thm}\label{thm:upper_bound_kernel}
Let $\D$ be a distribution with robust $K$-aspect ratio $\rho_r^K(\D)$. Then for any $n > 0$, we have $$\E_{S \sim \D^n} [\L_r(A_S, \D)] \leq O(\frac{\rho_r^K(\D)^2}{n}),$$ where $A_S$ denotes the classifier learned by Algorithm \ref{alg:upper_bound_kernel} from training data $S$. 
\end{thm}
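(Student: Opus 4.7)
The plan is to reduce Theorem \ref{thm:upper_bound_kernel} to Theorem \ref{thm:upper_bound} by lifting Algorithm \ref{alg:upper_bound_kernel} into the Hilbert space $H$ associated with the kernel $K$. Let $\phi: \R^d \to H$ be the feature map with $K(x,x') = \langle \phi(x), \phi(x')\rangle_H$. My first step is to observe that the current hypothesis maintained by Algorithm \ref{alg:upper_bound_kernel} coincides, when evaluated through kernel sums, with a weight vector $w_t = \sum_{(z_j,y_j)\in T_t} y_j \phi(z_j) \in H$: that is, $f_{T,\alpha}^K(x) = \mathrm{sign}\langle w_t, \phi(x)\rangle_H$. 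Consequently, the adversarial-search step (line 4) and the update step (lines 5--7) are exactly the linear updates of Algorithm \ref{alg:upper_bound} applied in $H$, using the attacked points $\phi(z)$.

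Next, I would make the analogues of $S_r^+$ and $S_r^-$ explicit in $H$. Define $\tilde S_r^{\pm} = \{\phi(z) : z \in B_p(x,r),\ x \in S^{\pm}\}\subset H$; by the definition of the robust $K$-aspect ratio, there exists $w^* \in H$ of unit norm separating $\tilde S_r^+$ from $\tilde S_r^-$ with $\ell_2$ (Hilbert) margin $\gamma_r^K(\D)$, and $\mathrm{diam}_H(\tilde S_r^+ \cup \tilde S_r^-) \le \Delta_K$, where $\rho_r^K(\D) = \Delta_K / \gamma_r^K(\D)$. Whenever the algorithm updates on $(x_i,y_i)$ with attacked point $z$, we have $\phi(z) \in \tilde S_r^{y_i}$, so $y_i \langle w^*, \phi(z)\rangle \ge \gamma_r^K(\D)$ and $\|\phi(z)-\phi(z')\| \le \Delta_K$ for any other attacked point $\phi(z')$. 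This is precisely the setting of the classical perceptron mistake bound in a Hilbert space, so the total number of updates performed by Algorithm \ref{alg:upper_bound_kernel} over any (possibly infinite) stream of examples is at most $\rho_r^K(\D)^2$. This step mirrors the mistake-bound step used to establish Theorem \ref{thm:upper_bound}; the only adjustments are that inner products and norms are those of $H$, and that the Hilbert-space perceptron analysis does not require finite dimension.

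Finally, I would convert this mistake bound into a bound on expected robust loss via the same online-to-batch argument used for Theorem \ref{thm:upper_bound}. By a standard leave-one-out/symmetrization argument, for a sample $S\sim \D^n$, the expected robust loss of the final hypothesis is at most $\tfrac{1}{n}$ times the expected number of mistakes made in a single pass over $S$. Combining with the mistake bound $\rho_r^K(\D)^2$ yields
\begin{equation}
\E_{S\sim \D^n}[\L_r(A_S, \D)] \le \frac{\rho_r^K(\D)^2}{n},
\end{equation}
as claimed.

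The main obstacle I expect is justifying the lift to $H$ cleanly, in particular the claim that the adversarial search in input space (line 4 of Algorithm \ref{alg:upper_bound_kernel}), which minimizes $y_i f_{T,\alpha}^K(z)$ over $z \in B_p(x_i,r)$, produces a point whose image $\phi(z)$ is sufficient for the perceptron bound to go through. The key observation is that the mistake bound only requires that \emph{some} $\phi(z) \in \tilde S_r^{y_i}$ triggering the update satisfies both $y_i \langle w^*, \phi(z)\rangle \ge \gamma_r^K(\D)$ and $y_i \langle w_t, \phi(z)\rangle \le 0$; the first holds for every point in $\tilde S_r^{y_i}$ by the robust-margin definition, and the second is precisely the update condition (line 5). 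Hence the input-space optimization, even if suboptimal in $H$, suffices to drive the analysis, which completes the reduction.
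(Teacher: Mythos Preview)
Your proposal is correct and follows essentially the same approach as the paper: lift Algorithm \ref{alg:upper_bound_kernel} into the feature space $H$ via the kernel trick, observe it coincides with the linear adversarial perceptron applied in $H$, invoke the dimension-free perceptron mistake bound there, and convert to an expected-loss statement via the standard online-to-batch argument. Your final paragraph, explaining why the input-space adversarial search still yields a point $\phi(z)$ satisfying both inequalities needed for the mistake-bound telescoping, is a useful clarification that the paper leaves implicit.
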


This result indicates that for small values of $\rho_r^k(\D)$, we can achieve a very good robust sample complexity for kernel classifiers. However, as the size of the perturbations approach this margin, this quantity goes to infinity. This phenomenon mirrors the linearly separable case, and suggests that a similar overall dynamic holds for kernel classification. We leave finding a full generalization (including our lower bound) for a direction in future work. 

\section*{Acknowledgments}

We thank NSF under CNS 1804829 for research support.

\bibliography{references}
\bibliographystyle{icml2021}

\onecolumn
\newpage

\appendix

\section{Expanded summary of \cite{ravikumar20}}\label{sec:appendix_comparison}

In this section, we derive the formulation of Theorem \ref{thm:ravikumar} directly from their results. In particular, their results are not stated in terms of $L_{rob}$ and $L_{std}$, and are instead framed in terms of different parameters. To account for this, we first review these alternative parameters, and then show how the statements in Theorem \ref{thm:ravikumar} can be 

Recall, that \cite{ravikumar20} consider the setting in which the data distribution $\D_{\mu, \Sigma}$ can be characterized as a pair of Gaussians in $\R^d$, $\N(\mu, \Sigma)$ and $\N(-\mu, \Sigma)$, that are symmetric about the origin with each of them representing one label class. They consider robustness measured in any normed metric in $\R^d$, including the $\ell_p$ norm for $p \in [1, \infty]$. 

For any such distribution (and robustness radius $r$), they introduce parameters $s_{rob}(\mu, \Sigma)$ and $s_{std}(\mu, \Sigma)$, which they refer to as the robust and standard signal-to-noise ratios respectively, that are defined as follows:

$$s_{std}(\mu, \Sigma) = 2\sqrt{\mu^t\Sigma^{-1}\mu},$$ $$s_{rob}(\mu, \Sigma) = \min_{||z||_p \leq r} 2\sqrt{(\mu - z)^t\Sigma^{-1}(\mu - z)},$$ where $r$ represents the robustness radius and $\ell_p$ is the distance norm under which adversarial perturbations are measured. 

They then show that these parameters fully characterize the sample complexity for robust and standard learning respectively. They express this through the following results:
\begin{enumerate}
	\item Let $\Phi$ denote the cumulative density function of the standard normal distribution, and let $\overline{\Phi}(x) = 1 - \Phi(x)$. Then for any $\D_{\mu, \Sigma}$, 
		\begin{itemize}
			\item the optimally accurate classifier has standard loss $\Pphi(\frac{1}{2}s_{std})$.
			\item the optimally robust classifier has robust loss $\Pphi(\frac{1}{2}s_{rob})$.
		\end{itemize}		 
	\item For any learning algorithm, there exists some mixture of $\D_{\mu, \Sigma}$ such that the expected robust loss is at least $\Omega(e^{(-\frac{1}{8} + o(1))s_{rob}^2}\frac{d}{n})$.
	\item By contrast, for any distribution $\D_{\mu , \Sigma}$, it is possible to learn a classifier with expected standard loss at most $O(s_{std}e^{-\frac{1}{8}s_{std}^2}\frac{d}{n})$.
	\item Thus, by (2.) and (3.), the gap between the robust sample complexity and the standard complexity can be bounded as $$gap \geq \Omega\left(\frac{e^{(-\frac{1}{8} + o(1))s_{rob}^2}\frac{d}{n}}{s_{std}e^{-\frac{1}{8}s_{std}^2}\frac{d}{n}}\right) \simeq \Omega(e^{\frac{-1}{8}(s_{std}^2 - s_{rob}^2)}).$$ They then qualitatively analyze this gap, and observe that for large values of $\mu$ and large values of $r$, this gap can be arbitrarily large, even as a function of $d$, the dimension.
\end{enumerate}

We now show how to convert (2.), (3.), and (4.) into the statements appearing in Theorem \ref{thm:ravikumar}. As before, let us define $L_{std}$ and $L_{rob}$ as the best possible standard and robust losses for $\D_{\mu , \Sigma}$ respectively. In particular, by (1.), we have $$L_{std} = \Pphi(\frac{1}{2}s_{std}^2),\text{ and }L_{rob} = \Pphi(\frac{1}{2}s_{rob}^2).$$ We now express the bounds in (2.) and (4.) in terms of $L_{std}$ and $L_{rob}$. To do so, we use the well known inequality bounding $\Pphi(x)$ as $$\Omega(\frac{x}{x^2 + 1}e^{-x^2/2}) < \Phi(x) <  O(\frac{e^{-x^2/2}}{x}).$$ Substituting this into (2.) through (4.) imply the following, alternative forms.

\begin{enumerate}
	\item[2.] For any learning algorithm, there exists some mixture of Gaussians, $\D_{\mu, \Sigma}$ such that the expected robust loss is at least $\Omega(L_{rob}\frac{d}{n}).$
	\item[3.] For any distribution $\D_{\mu, \Sigma}$, it is possible to learn a classifier with expected standard loss at most $O(L_{std}\frac{d}{n})$.
	\item[4.] By (2.) and (3.), the gap between robust sample complexity and standard sample complexity can be expressed as $$gap \geq \Omega(\frac{L_{rob}}{L_{std}}).$$
\end{enumerate}

Together, these three statements comprise Theorem \ref{thm:ravikumar}. 

\subsection{The limiting case}

While a core difference between our works is that we consider separated distributions whereas Gaussians are non-separated, we now consider the limiting case in which a pair of Gaussians \textit{appear} separated. To do this, we will consider a case in which $L_{rob}$ is small, and $n \sim O(\frac{1}{L_{rob}})$. In this case, with high probability, a sample of size $n$ will \textit{appear} linearly $r$-separated. Examining the bound in part 1 of Theorem \ref{thm:ravikumar}, we see that their lower bound on the expected robust loss reduces to $O(\frac{1}{n}\frac{d}{n}) = O(\frac{d}{n^2})$, which is significantly weaker than ours (Theorem \ref{thm:lower}). Thus, considering Gaussians that appear linearly $r$-separated does not generalize to the general, linearly $r$-separated case.

\section{Proof of Theorem \ref{thm:lower}}

We begin by broadly outlining our proof of Theorem \ref{thm:lower}. Let $\A$ be a probability distribution over $\F_{r, \rho}$, and let $A$ be a learning algorithm that returns a linear classifier.

\begin{enumerate}
	\item Sample $\D \sim \A$.
	\item Sample $S \sim \D^n$.
	\item Learn the classifier $A_S$ using algorithm $A$ and training sample $S$.
	\item Evaluate $A_S$ on $\D$. That is, compute $\L_r(A_S, \D)$. 
\end{enumerate}

The basic idea of our proof is to show that for an appropriate choice of $\A$, the overall expected loss of this procedure, $\L_r(A_S, \D)$, satisfies  $$\E_{D \sim \A}[\E_{S \sim \D^n}[\L_r(A_S, \D)]] \geq \Omega(\frac{d}{n}).$$ Our primary method for doing this is switching expectations. In particular, observe that $$\E_{D \sim \A}[\E_{S \sim \D^n}[\L_r(A_S, \D)]] = \E_{S \sim \B}[\E_{\D \sim \Pi|S}[\L_r(A_S, \D)]],$$ where $\B$ denotes the distribution over all $S$ obtained from first sampling $\D \sim \A$ and then sampling $S \sim \D^n$, and $\Pi|S$ denotes the posterior distribution of $\D$ after observing $S$. It then suffices to bound the quantity $\E_{\D \sim \Pi|S}[\L_r(A_S, \D)]$, which is a significantly more tractable problem since we no longer need to deal with any specifics of the Algorithm $A$. In particular, $S$ is fixed in this expectation and consequently $A_S$ is just a fixed linear classifier. This bound subsequently follows from the distribution $\Pi|S$ having enough ``variation" for this expectation to be sufficient large. 

Our proof will have the following main steps, each of which is given its own subsection.

\begin{enumerate}
	\item In section \ref{subsec:constructing_A}, we construct the distribution $\A$, and prove several important properties about it. 
	\item In section \ref{subsec:bound_expectation}, we show that the desired property of $\A$ holds, by bounding $\E_{\D \sim \A|S}[\L_r(A_S, \D)].$
\end{enumerate}

\subsection{Constructing $\A$}\label{subsec:constructing_A}

We let $r$ be a fixed robustness radius, and $\ell_p$ be our norm with which we measure robustness. Our construction of $\A$ is a somewhat technical and lengthy process. We will organize this construction into 4 subsections, outlined here:
\begin{itemize}
	\item In section \ref{subsubsec:D_a}, we define the distribution $\D_a$, characterized by parameter $a \in [0,1]^d$. This forms the basis for constructing $\A$, which will comprise of distributions $\D_a$ for certain choices of $a$. We also show that $\D_a$ is linearly $r$-separated.
	\item In section \ref{subsubsec:dd}, we define the constant $\dd$, which will be essential for specifying which values of parameter $a$ are permissible. 
	\item In section \ref{subsubsec:g1g2}, we define functions $g_1, g_2: [0, \frac{\dd}{3}] \to [0, \frac{\dd}{3}]$ that will be used to construct $\A$. 
	\item In section \ref{subsubsec:finalA}, we finally put together the previous 3 sections and construct $\A$. We also show that any $\D_a \sim \A$ satisfies $\rho(\D_a) \leq C$. 
\end{itemize}

\subsubsection{Defining $\D_a$}\label{subsubsec:D_a}

Let $e_1, e_2, \dots, e_d$ denote the standard normal basis in $\R^d$. Define $v_i = R e_i$ and $u = \frac{\rr}{\sqrt{d}} \sum_1^d e_i$, where $\rr = \frac{9rd^{1/q}}{2\sqrt{d}}$. It will also be convenient to define the following function, which we will frequently use throughout the entirety of the appendix.
\begin{defn}\label{defn:function_f}
For $1 \leq l \leq \infty$, let $\f_l: [0,1]^d \to \R^+$ be the function defined as $$\f_l(a) = \sqrt[l]{\sum_1^d|\frac{1}{\sqrt{d}} + \overline{a} - a_i|^l},$$ where $\overline{a} = \frac{1}{d}\sum_1^d a_i$. For $l = \infty$, we take the convention that $\sqrt[\infty]{\sum_1^d |x_i|^\infty} = \max_{1 \leq i \leq d} |x_i|.$ 
\end{defn}

To define $\D_a$, we first define the concept of a line segment in $\R^d$.
\begin{defn}\label{defn:line_segment}
Let $x_1, x_2 \in \R^d$ be two points. A \textbf{line segment} joining $x_1, x_2$ is defined as one of the following four sets. 
\begin{itemize}
	\item $(x_1, x_2) = \{tx_1 + (1-t)x_t: 0 < t < 1\}$.
	\item $[x_1, x_2) = \{tx_1 + (1-t)x_t: 0 \leq t < 1\}$.
	\item $(x_1, x_2] = \{tx_1 + (1-t)x_t: 0 < t \leq 1\}$.
	\item $[x_1, x_2] = \{tx_1 + (1-t)x_t: 0 \leq t \leq 1\}$.
\end{itemize}
We will always distinguish which set we mean by using the notation above. In all cases, $x_1, x_2$ are said to be the endpoints of the line segment. 
\end{defn}
We now define $\D_a$.
\begin{defn}\label{def:w_dist}
Let $a \in [0,1]^d$ be a vector, and let $\overline{a} = \frac{1}{d}\sum_1^d a_i$. Set $\lambda_a = \frac{r}{\rr}f_q(a)$, where $q$ is the dual norm of $p$. Assume that for all $1 \leq i \leq d$, $a_i > \lambda_a$ (i.e. we only $\D_a$ for $a$ for which this holds). Let $S^-$ and $S^+$ be two sets of $d$ disjoint line segments (as defined in Definition \ref{defn:line_segment}) defined as $$S^- = \{[v_i, v_i + (a_i - \lambda_a)u): 1 \leq i \leq d\},$$ $$S^+ = \{(v_i + (a_i + \lambda_a)u, v_i + u]: 1 \leq i \leq d\}.$$ Then $D_a$ is defined as the probability distribution of random variables $(X,Y)$ where 
\begin{itemize}
	\item $X$ is chosen by the following random procedure. First, sample an arbitrary segment from $S^+ \cup S^-$ with each segment chosen with probability proportional to its $\ell_2$ length. Next, $X$ is selected from the uniform distribution over the chosen line segment. In particular, the probability that $X$ lies on any interval on any line segment contained within $S^+ \cup S^-$ is directly proportional to the length of the interval. 
	\item $Y$ is $-1$ if $X \in \cup S^-$ and $+1$ if $X \in \cup S^+$.
\end{itemize}
\end{defn}
\begin{figure}[h]
\vspace{.3in}
\includegraphics[scale=0.5]{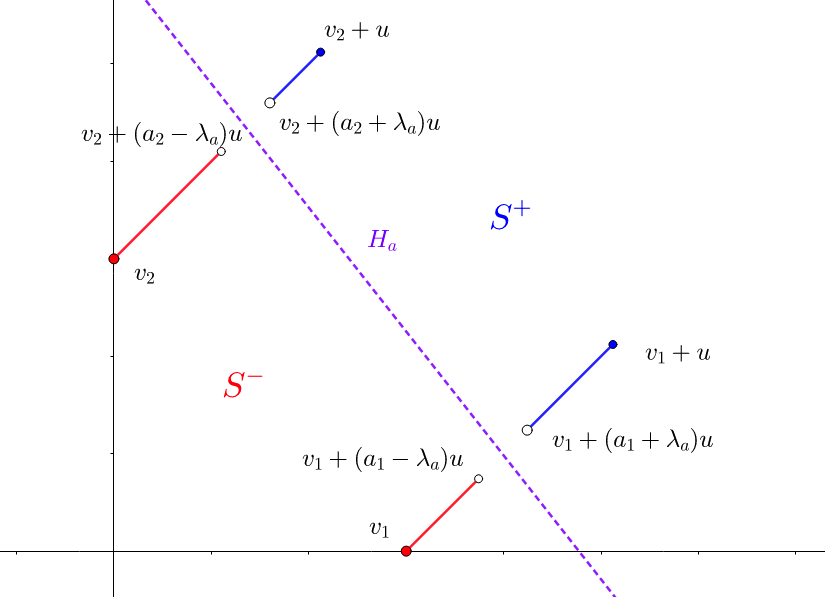}
\vspace{.3in}
\caption{An illustration of $\D_a$ in two dimensions. $S^-$ is shown in red, and $S^+$ is shown in blue. The decision boundary, $H_a$, of the optimal linear classifier, $f_{w^a, 1}$, is shown in purple. }
\label{fig:d_a_illustration}
\end{figure}

We include an example of such a distribution in Figure \ref{fig:d_a_illustration}. Next, we explicitly compute a linear classifier that linearly $r$-separates $\D_a$.

\begin{defn}\label{def:normal_vector}
Let $a \in [0,1]^d$, and let $\overline{a} = \sum_{i=1}^d a_i.$ Then let $w^a$ be defined as $$w_i^a = \frac{1}{\rr} - \frac{da_i}{\rr\sqrt{d} + d\rr\overline{a}}.$$ 
\end{defn}

\begin{lem}\label{lem:normal_vector_works}
$w^a$ satisfies $\langle w^a, u\rangle = \frac{d}{\sqrt{d} + d\overline{a}}$ and $\langle w^a, v_i + a_iu \rangle = 1,$ for all $1 \leq i \leq d.$ 
\end{lem}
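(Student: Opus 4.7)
The plan is a direct computation from the definitions; linearity of the inner product reduces both identities to summing components, and the two identities are tightly linked so that the second follows almost immediately from the first. No geometric argument or hypotheses on $a$ beyond those needed for the expressions to be well-defined are invoked.

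First I would verify $\langle w^a, u\rangle = \frac{d}{\sqrt{d}+d\overline{a}}$. Since $u = \frac{R}{\sqrt{d}}\sum_i e_i$, this reduces to computing $\sum_i w_i^a$. Unfolding the definition of $w_i^a$ and using $\sum_i a_i = d\overline{a}$ gives
\[
\sum_{i=1}^d w_i^a \;=\; \frac{d}{R} - \frac{d\cdot d\overline{a}}{R\sqrt{d}+dR\overline{a}} \;=\; \frac{d\sqrt{d}}{R(\sqrt{d}+d\overline{a})},
\]
after placing the two terms over the common denominator $R(\sqrt{d}+d\overline{a})$. Multiplying by $R/\sqrt{d}$ yields $\langle w^a, u\rangle = \frac{d}{\sqrt{d}+d\overline{a}}$.

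For the second identity, I would split using linearity: $\langle w^a, v_i + a_i u\rangle = \langle w^a, v_i\rangle + a_i\langle w^a, u\rangle$. Since $v_i = R e_i$, the first term equals $R w_i^a = 1 - \frac{d a_i}{\sqrt{d}+d\overline{a}}$, directly from the definition of $w_i^a$. By the identity just established, the second term equals $\frac{d a_i}{\sqrt{d}+d\overline{a}}$, which exactly cancels the subtracted term, leaving $\langle w^a, v_i + a_i u\rangle = 1$.

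There is no real obstacle here; the only thing to watch is the algebraic bookkeeping producing the common denominator $R(\sqrt{d}+d\overline{a})$. In fact, the coefficients in Definition~\ref{def:normal_vector} are engineered so that $R w_i^a$ contributes exactly $-\frac{d a_i}{\sqrt{d}+d\overline{a}}$, which is precisely the negative of $a_i\langle w^a, u\rangle$. This cancellation is what makes the proof reduce to routine calculation rather than requiring any deeper structural argument, and it also explains why the somewhat unusual definition of $w^a$ takes the form it does.
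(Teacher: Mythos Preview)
Your proof is correct and follows essentially the same approach as the paper: both verify $\langle w^a, u\rangle$ by summing the components of $w^a$ (the paper via $\sum Rw_i^a$, you via $\sum w_i^a$, which is the same up to a factor of $R$) and then obtain the second identity by splitting $\langle w^a, v_i + a_i u\rangle = Rw_i^a + a_i\langle w^a, u\rangle$ and observing the cancellation. The algebra and structure are identical.
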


\begin{proof}
By the definitions of $v_i, u$, we have that 
\begin{equation*}
\begin{split}
\langle w^a, u \rangle &= \langle w^a, \frac{1}{\sqrt{d}}\sum_1^d v_i \rangle \\
&= \frac{1}{\sqrt{d}} \sum_1^d Rw_i^a \\
&= \frac{1}{\sqrt{d}} \sum_1^d 1 - \frac{da_i}{\sqrt{d} + d\overline{a}} \\
&= \frac{1}{\sqrt{d}} \sum_1^d \frac{\sqrt{d} + d\overline{a} - da_i}{\sqrt{d} + d\overline{a}} \\
&= \frac{1}{\sqrt{d}} \frac{d\sqrt{d}}{\sqrt{d} + d\overline{a}} = \frac{d}{\sqrt{d} + d\overline{a}},
\end{split}
\end{equation*}
Which proves the first claim. Next, we also have that $\langle w^a, v_i \rangle = Rw_i^a$. Summing these, we get $$Rw_i^a + \frac{da_i}{\sqrt{d} + d\overline{a}} = 1 - \frac{da_i}{\sqrt{d} + d\overline{a}} + \frac{da_i}{\sqrt{d} + d\overline{a}} = 1,$$ as desired. 
\end{proof}

We now prove that $\D_a$ is linearly $r$-separated.

\begin{lem}\label{lem:separation}
$\D_a$ is linearly $r$-separated by the classifier $f_{w_a, 1}$.
\end{lem}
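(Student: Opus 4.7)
The plan is to analyze $\langle w^a, \cdot \rangle$ on the segments comprising $S^+ \cup S^-$ and then use the $\ell_p$--$\ell_q$ duality $\max_{\|z\|_p \le r}\langle w^a, z\rangle = r\|w^a\|_q$ to control an $\ell_p$ adversary of radius $r$. Correct classification will be immediate from Lemma \ref{lem:normal_vector_works}; the content is exhibiting enough room at every support point so that no adversarial perturbation can cross the hyperplane $\langle w^a, x\rangle = 1$.

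First I would parametrize the support. Every $x$ in the support of $\D_a$ has the form $v_i + tu$ for some $1 \le i \le d$ and some $t \in [0, a_i-\lambda_a) \cup (a_i+\lambda_a, 1]$, with label $-1$ in the first case and $+1$ in the second. By Lemma \ref{lem:normal_vector_works}, $\langle w^a, v_i + a_i u\rangle = 1$ and $\langle w^a, u\rangle = d/(\sqrt d + d\overline{a})$, so
\[
\langle w^a, v_i + tu\rangle \;=\; 1 + (t-a_i)\,\tfrac{d}{\sqrt d + d\overline a}.
\]
This quantity is strictly less than $1$ on every negative subsegment and strictly greater than $1$ on every positive subsegment, giving $f_{w^a,1}$ perfect accuracy.

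Second I would compute $\|w^a\|_q$ explicitly. From Definition \ref{def:normal_vector},
\[
w_i^a \;=\; \tfrac{1}{R}\cdot\tfrac{\sqrt d + d\overline a - da_i}{\sqrt d + d\overline a} \;=\; \tfrac{d}{R(\sqrt d + d\overline a)}\!\left(\tfrac{1}{\sqrt d} + \overline a - a_i\right),
\]
so by Definition \ref{defn:function_f}, $\|w^a\|_q = \tfrac{d}{R(\sqrt d + d\overline a)}\,f_q(a)$. Since $\lambda_a = (r/R)\,f_q(a)$ by construction, this yields the key identity
\[
r\|w^a\|_q \;=\; \lambda_a\cdot\tfrac{d}{\sqrt d + d\overline a}.
\]

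Finally, for $x = v_i + tu$ on a negative subsegment and any $x'$ with $\|x-x'\|_p \le r$, H\"older's inequality gives
\[
\langle w^a, x'\rangle \;\le\; \langle w^a, x\rangle + r\|w^a\|_q \;=\; 1 + (t - a_i + \lambda_a)\tfrac{d}{\sqrt d + d\overline a} \;<\; 1,
\]
since $t < a_i - \lambda_a$ strictly on the half-open segment; hence $f_{w^a,1}(x')=-1$. The positive case is symmetric (using $t > a_i + \lambda_a$ strictly and the lower Hölder bound $\langle w^a,x'\rangle \ge \langle w^a,x\rangle - r\|w^a\|_q$). Therefore $\L_r(f_{w^a,1}, \D_a)=0$, proving the lemma. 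The only subtlety is recognizing that $\lambda_a$ was engineered so that the adversary's $\ell_p$ budget $r\|w^a\|_q$ matches exactly the scalar gap along $u$ between the hyperplane and the nearest endpoint of each segment; once the identity above is written down, robustness is automatic, and no real obstacle arises beyond careful bookkeeping of factors of $R$, $d$, $\sqrt d$, and $\overline a$.
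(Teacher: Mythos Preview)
Your proof is correct and takes essentially the same approach as the paper: both compute $\|w^a\|_q$ via the identity $w_i^a = \tfrac{d}{R(\sqrt d + d\overline a)}(\tfrac{1}{\sqrt d}+\overline a - a_i)$ and use $\ell_p$--$\ell_q$ duality to show the adversary's budget exactly matches the gap $\lambda_a$ along $u$. The only cosmetic difference is framing: the paper isolates the nearest endpoint $v_i+(a_i-\lambda_a)u$ and computes its $\ell_p$ distance to $H_a$ (showing it equals $r$), whereas you parametrize every support point and bound $\langle w^a,x'\rangle$ directly via H\"older, which handles all points uniformly and is arguably a little cleaner.
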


\begin{proof}
Let $H_a$ denote the hyperplane passing through $\{v_i + a_iu: 1 \leq i \leq d\}$. By Lemma \ref{lem:normal_vector_works}, $H_a$ is the decision boundary of $f_{w_a, 1}$.  Referring to Figure \ref{fig:d_a_illustration}, we see that $\cup S^+$ lies entirely above $H_a$ while the set $\cup S^-$ lies entirely below the hyperplane $H_a$, which the classifier $f_{w^a, 1}$ has accuracy $1$ with respect to $\D_a$. It suffices to show that $f_{w^a, 1}$ is robust everywhere. In order to do this, we must show that all points in the support of $\D_a$ have $\ell_p$ distance at least $r$ from $H_a$. 

Fix any $1 \leq i \leq d$. Since the $\ell_p$ distance metric is invariant under translation and scales linearly with dilations, it follows that the point $x_i = v_i + (a_i - \lambda_a)u$ is the closest point on the segment $[v_i, v_i+(a_i - \lambda_a)u)$ to $H_a$. Suppose $x_i$ has distance $D$ under the $\ell_p$ norm to $H_a$. Then the key observation is that the $\ell_p$ ball, $B_p(x_i, D)$, must be tangent to $H_a$. Expressing this as an equation, we have $\max_{z \in B_p(x_i, D)} \langle z, w^a \rangle = 1,$ which can be re-written as $$\max_{||z - x_i||_p \leq D} \langle z - x_i, w^a \rangle = 1 - \langle x_i, w^a \rangle.$$ By Lemma \ref{lem:normal_vector_works} , $\langle w^a, u \rangle = \frac{d}{\sqrt{d} + d\overline{a}}$ and $\langle w^a, v_i + a_iu \rangle = 1$. Substituting this, we see that 
\begin{equation*}
\begin{split}
1 - \langle x_i, w^a \rangle &= 1 - \langle v_i + a_iu - \lambda_a u, w^a \rangle \\
&= 1 - \langle v_i + a_iu, w^a \rangle + \langle \lambda_a u, w^a \rangle \\
&= \langle \lambda_a u, w^a \rangle \\
&= \frac{d\lambda_a}{\sqrt{d} + d\overline{a}}.
\end{split}
\end{equation*}

However, by using the dual norm, we see that $\max_{||z - x_i||_p \leq D} \langle z - x_i, w^a \rangle = D||w^a||_q$. Thus it follows that
\begin{equation*}
\begin{split}
D &= \frac{d\lambda_a}{(\sqrt{d} + d\overline{a})||w^a||_q} \\
&= \frac{d\frac{r}{\rr}f_q(a)}{(\sqrt{d} + d\overline{a})||w^a||_q} \\
&= \frac{d\frac{r}{\rr}\sqrt[q]{\sum_1^d |\frac{1}{\sqrt{d}} + \overline{a} - a_i|^q}}{(\sqrt{d} + d\overline{a})||w^a||_q} \\
&= \frac{r\sqrt[q]{\sum_1^d |\frac{1}{\rr}\frac{\sqrt{d} + d\overline{a} - da_i}{(\sqrt{d} + d\overline{a})}|^q}}{||w^a||_q} \\
&= \frac{r||w^a||_q}{||w^a||_q} = r.
\end{split}
\end{equation*}
We can use an analogous argument holds for $v_i + (a_i + r_a)u$, the closest point to $H_a$ in $S^+$. Thus each point in the support of $D^a$ has distance strictly larger than $r$ (as the endpoints were not included) to $H_a$. Consequently $f_{w^a, 1}$ linearly $r$-separates $D^a$, as desired. 
\end{proof}

\subsubsection{Defining $\dd$}\label{subsubsec:dd}

Now that we have defined $\D_a$, we turn our attention to defining $\A$, which requires us to specify a distribution over valid choices of $a$. In particular, although $\D_a$ is defined for $a \in [0, 1]^d$, we will require a more stringent condition on $a$ for our construction to work. To this end, we begin by defining $\Delta$, a key parameter that characterizes the domain of $a$. To define $\dd$, we use the following lemma.

\begin{lem}\label{lem:dd}
There exists a real number $\dd > 0$ such that for all $l \in \{2, q\}$, and for all $a \in [\frac{1}{2} - \dd, \frac{1}{2} + \dd]^d$, $$||\nabla f_l(a)||_2 \leq \frac{1}{d^2\sqrt{d}},$$ where $f_l$ is as defined in Definition \ref{defn:function_f}.
\end{lem}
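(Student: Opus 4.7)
The plan is to show that $\nabla f_l$ vanishes at the symmetric point $a^* = (\tfrac12, \ldots, \tfrac12)$ and then use continuity of $\nabla f_l$ near $a^*$ to conclude. To handle the non-smoothness of the absolute value, write $c_i(a) = \frac{1}{\sqrt{d}} + \overline{a} - a_i$ and note that $c_i(a^*) = \frac{1}{\sqrt{d}} > 0$ for every $i$ since $\overline{a^*} = \tfrac12$. By continuity, each $c_i$ remains strictly positive throughout some box $[\tfrac12 - \dd_0, \tfrac12 + \dd_0]^d$, and on this box the absolute values drop away and $f_l(a) = \bigl(\sum_i c_i(a)^l\bigr)^{1/l}$ becomes a smooth function of $a$.

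Next I would compute the gradient directly. Using $\partial_{a_j} c_i = \tfrac{1}{d} - \delta_{ij}$, the chain rule yields
\begin{equation*}
\partial_{a_j} f_l(a) = \Bigl(\sum_i c_i(a)^l\Bigr)^{1/l - 1} \sum_i c_i(a)^{l-1}\Bigl(\tfrac{1}{d} - \delta_{ij}\Bigr).
\end{equation*}
At $a = a^*$ all $c_i$ coincide, so the inner sum factors as $\bigl(\tfrac{1}{\sqrt{d}}\bigr)^{l-1}\sum_i \bigl(\tfrac{1}{d} - \delta_{ij}\bigr) = \bigl(\tfrac{1}{\sqrt{d}}\bigr)^{l-1}(1 - 1) = 0$. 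Hence $\nabla f_l(a^*) = 0$ for every exponent $l$, reflecting the permutation symmetry of $f_l$ at the diagonal point.

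Finally I would invoke continuity: since $\nabla f_l$ is continuous on the box from the first paragraph and vanishes at $a^*$, for each $l \in \{2, q\}$ there exists $\dd_l \in (0, \dd_0]$ so that $||\nabla f_l(a)||_2 \leq \frac{1}{d^2\sqrt{d}}$ throughout $[\tfrac12 - \dd_l, \tfrac12 + \dd_l]^d$. Setting $\dd = \min(\dd_2, \dd_q)$ finishes the argument. The only real obstacle is the non-differentiability of $f_l$ where some $c_i$ vanishes, which is sidestepped by localizing to a box where every $c_i$ is positive; after that, everything reduces to a standard symmetry-plus-continuity argument, since only existence of $\dd$ (rather than a sharp quantitative estimate) is asserted.
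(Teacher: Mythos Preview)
Your proof is correct and follows essentially the same two-step outline as the paper: show $\nabla f_l$ vanishes at the symmetric point $a^* = (\tfrac12,\dots,\tfrac12)$, then invoke continuity of $\nabla f_l$ on a small box around $a^*$. The only difference is in how the first step is carried out: the paper observes that $f_l(a) = \bigl(\sum_i h_l(a_i - \overline{a})\bigr)^{1/l}$ with $h_l(x) = (\tfrac{1}{\sqrt{d}} - x)^l$ convex, and uses Jensen's inequality (applied to the zero-mean variables $a_i - \overline{a}$) to conclude that $a^*$ is a local minimizer of $f_l$, hence a critical point. Your direct chain-rule computation is more elementary and arguably cleaner, since it avoids the convexity detour and makes the symmetry cancellation $\sum_i(\tfrac{1}{d} - \delta_{ij}) = 0$ completely explicit; the paper's argument, on the other hand, yields the slightly stronger fact that $a^*$ is actually a minimizer, though this extra information is not used elsewhere.
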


\begin{proof}
Since $1 \leq q < \infty$, we see that for both choices of $l$, the function $h_l(x) = (\frac{1}{\sqrt{d}} - x)^l$ is a convex function for $x \in [-\frac{1}{2\sqrt{d}}, \frac{1}{2\sqrt{d}}]$. Thus, if $\sum_1^d x_i = 0$, then by Jensen's inequality, $\sum_1^d h_l(x_i) \geq \sum_1^d h_l(0)$. Applying this, we see that for all $l \in \{2, q\}$ and for all $a \in [\frac{1}{2} - \frac{1}{4\sqrt{d}}, \frac{1}{2} + \frac{1}{4\sqrt{d}}]^d$, 
\begin{equation*}
\begin{split}
f_l(a) &= \sqrt[l]{\sum_1^d |\frac{1}{\sqrt{d}} + \overline{a} - a_i|^l} \\
&= \sqrt[l]{\sum_1^d \left(\frac{1}{\sqrt{d}} + \overline{a} - a_i\right)^l} \\
&= \sqrt[l]{\sum_1^d h_l(a_i - \overline{a})} \\
&\geq \sqrt[l]{\sum_1^d h_l(0)} \\
&= f_l((\frac{1}{2}, \frac{1}{2}, \dots, \frac{1}{2})),
\end{split}
\end{equation*}
with the first equality holding since $\overline{a} - a_i < \frac{1}{\sqrt{d}}$ and the first inequality holding since $\sum_1^d a_i - \overline{a} = 0$. Thus $f_l(a)$ must be locally minimized when $a = (\frac{1}{2}, \frac{1}{2}, \dots, \frac{1}{2})$, and it follows that $$||\nabla f_l(\frac{1}{2}, \frac{1}{2}, \dots, \frac{1}{2})||_2 = 0, \text{ for } l = 2, q.$$ Now observe that the map $H(a) = \max_{l \in \{2, q\}} ||\nabla f_l(a)||_2$ is a continuous map as long as $|a_i - \overline{a}| < \frac{1}{\sqrt{d}}$ for all $1 \leq i \leq d$. Thus there exists an open neighborhood $U$ about $(\frac{1}{2}, \frac{1}{2}, \dots, \frac{1}{2})$ such that $H(a) \leq \frac{1}{d^2\sqrt{d}}$ for all $a \in U$. Taking $\dd$ so that $[\frac{1}{2} - \dd, \frac{1}{2} + \dd]^d \subseteq U$ suffices. 
\end{proof}

\begin{defn}\label{defn:Delta}
Let $\Delta$ be any constant for which Lemma \ref{lem:dd} holds. In particular, $\Delta$ only depends on $\ell_p$, the robustness norm, and $d$, the dimension.
\end{defn}

\subsubsection{Defining $\g_1$ and $\g_2$}\label{subsubsec:g1g2}

In this section, we define functions $\g_1, \g_2: [0, \frac{\Delta}{3}] \to [0, \frac{\Delta}{3}]$ which we will use to specify $\A$. Before defining $\g_1$ and $\g_2$, we will first prove several technical lemmas.

\begin{lem}\label{lem:phi_s}
Let $I \subseteq \R$ be an interval, and $\Phi:I \to \R$ be a strictly convex function.  For any $s \in \R$ and $t \geq 0$, let $\Phi_s(t) = \Phi(s-t) + \Phi(s + t)$. Then $\Phi_s$ is a strictly increasing function.
\end{lem}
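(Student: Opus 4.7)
The plan is to fix any $0 \le t_1 < t_2$ in the domain of $\Phi_s$ and prove $\Phi_s(t_1) < \Phi_s(t_2)$ by applying strict convexity of $\Phi$ directly at the four points $s \pm t_1, s \pm t_2$. The guiding observation is that $s - t_1$ and $s + t_1$ both lie in the interior of the segment $[s-t_2,\, s+t_2]$, so each is a proper convex combination of the two endpoints, and crucially these two combinations use \emph{complementary} weights.

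Concretely, I would set $\alpha = \tfrac{1}{2}(1 + t_1/t_2)$ (well defined since $t_2 > 0$) and verify in one line that
\begin{equation*}
s - t_1 = \alpha(s-t_2) + (1-\alpha)(s+t_2), \qquad s + t_1 = (1-\alpha)(s-t_2) + \alpha(s+t_2),
\end{equation*}
with $\alpha \in [\tfrac12, 1)$. Strict convexity of $\Phi$, combined with the facts that $s - t_2 \ne s + t_2$ (since $t_2 > 0$) and $\alpha, 1-\alpha$ lie strictly between $0$ and $1$, then yields two strict Jensen inequalities bounding $\Phi(s - t_1)$ and $\Phi(s + t_1)$ above by the corresponding weighted averages of $\Phi(s-t_2)$ and $\Phi(s+t_2)$.

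The final step is to add these two strict inequalities. Because the two weight vectors are complementary and their sum is $(1,1)$, the right-hand side telescopes to $\Phi(s-t_2) + \Phi(s+t_2) = \Phi_s(t_2)$, while the left-hand side is exactly $\Phi_s(t_1)$. This produces $\Phi_s(t_1) < \Phi_s(t_2)$ and establishes strict monotonicity.

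I do not anticipate any real obstacle. The only point that needs care is preserving \emph{strict} (rather than merely non-strict) inequality when invoking convexity: this requires both that the convex combination be nontrivial and that the two endpoints being combined be distinct, and both conditions follow immediately from $0 \le t_1 < t_2$ together with $t_2 > 0$. No differentiability of $\Phi$ is used, so the argument applies in the generality stated.
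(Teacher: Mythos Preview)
Your proof is correct and follows essentially the same strategy as the paper: apply strict Jensen at two pairs of points and add. The implementation differs slightly. The paper expresses each inner point $s\pm t_1$ as a convex combination of one outer point $s\pm t_2$ and the \emph{other} inner point $s\mp t_1$; after summing, a $\Phi_s(t_1)$ term appears on both sides and must be rearranged (and the case $t_1=0$ needs separate handling, since the rearrangement divides by $t_1$). Your choice---writing each inner point directly as a convex combination of the two outer points $s\pm t_2$ with complementary weights $\alpha,1-\alpha$---makes the sum telescope immediately to $\Phi_s(t_2)$, handles $t_1=0$ uniformly, and avoids any algebra beyond the one-line verification of the coefficients. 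Both arguments are elementary and use only strict convexity; yours is the cleaner execution of the same idea.
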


\begin{proof}
Fix $s$, and let $0 \leq t_1 < t_2$. Then we see that by Jensen's inequality (for strictly convex functions), $$\Phi(s + t_1) < \frac{(t_2-t_1)\Phi(s+t_2)}{t_1 + t_2} + \frac{2t_1\Phi(s-t_1)}{t_1 + t_2},$$ and $$\Phi(s - t_1) < \frac{(t_2-t_1)\Phi(s-t_2)}{t_1 + t_2} + \frac{2t_1\Phi(s+t_1)}{t_1 + t_2}.$$ Summing these inequalities, we see that 
\begin{equation*}
\begin{split}
\Phi_s(t_1) &= \Phi(s - t_1) + \Phi(s + t_1) \\
&< \frac{(t_2-t_1)\Phi(s+t_2)}{t_1 + t_2} + \frac{2t_1\Phi(s-t_1)}{t_1 + t_2} + \frac{(t_2-t_1)\Phi(s-t_2)}{t_1 + t_2} + \frac{2t_1\Phi(s+t_1)}{t_1 + t_2} \\
&= \frac{t_2 - t_1}{t_1 + t_2}(\Phi(s +t_2) + \Phi(s - t_2)) + \frac{2t_1}{t_1 + t_2}(\Phi(s - t_1) + \Phi(s + t_1)) \\
&= \frac{t_2 - t_1}{t_1 + t_2}\Phi_s(t_2) + \frac{2t_1}{t_1 + t_2}\Phi_s(t_1).
\end{split}
\end{equation*}
Rearranging this yields $\Phi_s(t_1) < \Phi_s(t_2)$, as desired.
\end{proof}

\begin{lem}\label{lem:lipschitz}
Let $I \subseteq \R$ be an interval, $\Phi:I \to \R$ be a strictly convex continuous function, and  $x, y, z \in I$ be real numbers with $x < y < z$. Let $\epsilon >0$ be such that $x - \epsilon \in I$ and $y + \epsilon \leq z - \epsilon$. Then there exist unique $\delta, \gamma >0$ such that the following hold: $$\delta + \gamma = \epsilon,$$ $$\Phi(x-\delta) + \Phi(y + \epsilon) + \Phi(z - \gamma) = \Phi(x) + \Phi(y) + \Phi(z)$$
\end{lem}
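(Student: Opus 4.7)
The plan is to reduce the two-variable equation to a one-variable problem by writing $\gamma = \epsilon - \delta$ and then invoking the intermediate value theorem together with a strict monotonicity argument. Concretely, I would define
$$F(\delta) = \Phi(x - \delta) + \Phi(y + \epsilon) + \Phi(z - (\epsilon - \delta)), \quad \delta \in [0, \epsilon],$$
and seek a unique $\delta \in (0, \epsilon)$ with $F(\delta) = \Phi(x) + \Phi(y) + \Phi(z)$. Continuity of $F$ is immediate from continuity of $\Phi$, and the hypotheses $x - \epsilon \in I$ and $y + \epsilon \le z - \epsilon$ ensure every point at which $\Phi$ is evaluated lies in $I$.

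The key step is to show that $F$ is strictly increasing on $[0, \epsilon]$. For $0 \le \delta_1 < \delta_2 \le \epsilon$ with $h = \delta_2 - \delta_1$, the difference $F(\delta_2) - F(\delta_1)$ equals
$$\bigl[\Phi(z - \epsilon + \delta_2) - \Phi(z - \epsilon + \delta_1)\bigr] - \bigl[\Phi(x - \delta_1) - \Phi(x - \delta_2)\bigr].$$
Both bracketed terms are increments of $\Phi$ over intervals of length $h$, namely $[z - \epsilon + \delta_1, z - \epsilon + \delta_2]$ and $[x - \delta_2, x - \delta_1]$. The first interval lies strictly to the right of the second because $x < y \le z - \epsilon \le z - \epsilon + \delta_1$, so by the standard secant-slope characterization of strict convexity (the map $a \mapsto \Phi(a+h) - \Phi(a)$ is strictly increasing in $a$), the first bracket strictly exceeds the second. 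Hence $F(\delta_2) > F(\delta_1)$.

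Next I would verify that the target value $T = \Phi(x) + \Phi(y) + \Phi(z)$ is strictly bracketed: $F(0) < T < F(\epsilon)$. The inequality $F(0) < T$ reduces to $\Phi(y + \epsilon) - \Phi(y) < \Phi(z) - \Phi(z - \epsilon)$, which is again the secant-slope comparison applied to $[y, y+\epsilon]$ and $[z - \epsilon, z]$; these are disjoint and correctly ordered because $y + \epsilon \le z - \epsilon$. Symmetrically, $T < F(\epsilon)$ reduces to $\Phi(x) - \Phi(x - \epsilon) < \Phi(y + \epsilon) - \Phi(y)$, which follows from the same comparison applied to $[x - \epsilon, x]$ and $[y, y + \epsilon]$, disjoint because $x < y$.

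Combining these, the intermediate value theorem yields some $\delta \in (0, \epsilon)$ with $F(\delta) = T$, and strict monotonicity of $F$ forces uniqueness; setting $\gamma = \epsilon - \delta$ gives the required pair, both strictly positive. The only genuine technical point is the secant-slope version of strict convexity, which I would either cite or derive in one line from the three-point definition; Lemma \ref{lem:phi_s} is not used directly but has the same flavor. I do not anticipate a real obstacle, since continuity plus strict convexity give everything, but care is needed to ensure the secant comparisons always involve disjoint, correctly ordered intervals — this is precisely what the hypothesis $y + \epsilon \le z - \epsilon$ is there to guarantee.
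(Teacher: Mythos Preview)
Your proposal is correct and follows essentially the same route as the paper: reduce to a one-parameter function of $\delta$ (the paper calls it $\Theta$), prove it is strictly increasing and continuous, bracket the target value at the two endpoints, and apply the intermediate value theorem. The only cosmetic difference is that the paper packages the monotonicity and bracketing steps via Lemma~\ref{lem:phi_s} (the symmetric-sum $\Phi_s(t)=\Phi(s-t)+\Phi(s+t)$ is strictly increasing), whereas you invoke the secant-slope characterization of strict convexity directly; these are equivalent formulations of the same fact.
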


\begin{proof}
Fix any $\epsilon$ satisfying the desired conditions, and define $\Theta: [0, \epsilon] \to \R$ as $\Theta(t) = \Phi(x - t) + \Phi(y + \epsilon) + \Phi(z + t - \epsilon)$. Then, utilizing the definition of $\Phi_s$ from Lemma \ref{lem:phi_s}, we see that $$\Theta(t) = \Phi_{\frac{x + z - \epsilon}{2}}(\frac{z - x - \epsilon}{2} + t) + \Phi(y + \epsilon).$$ By Lemma \ref{lem:phi_s}, it follows that $\Theta$ is strictly increasing in $t$, and since $\Phi$ is continuous, so is $\Theta$. Next, we bound $\Theta(0)$ and $\Theta(\epsilon)$ to put us in the configuration to apply the intermediate value theorem. To bound $\Theta(0)$, we have
\begin{equation*}
\begin{split}
\Theta(0) &= \Phi(x) + \Phi(y + \epsilon) + \Phi(z - \epsilon) \\
&= \Phi(x) + \Phi_{\frac{y+z}{2}}(\frac{z - y}{2} - \epsilon) \\
&< \Phi(x) + \Phi_{\frac{y+z}{2}}(\frac{z - y}{2}) \\
&= \Phi(x) + \Phi(y) + \Phi(z),
\end{split}
\end{equation*}
and to bound $\Theta(\epsilon)$, we have 
\begin{equation*}
\begin{split}
\Theta(\epsilon) &= \Phi(x - \epsilon) + \Phi(y + \epsilon) + \Phi(z) \\
&= \Phi_{\frac{x+y}{2}}(\frac{y- x}{2} + \epsilon) + \Phi(z)\\
&> \Phi_{\frac{x+y}{2}}(\frac{y - x}{2}) + \Phi(z) \\
&= \Phi(x) + \Phi(y) + \Phi(z).
\end{split}
\end{equation*}
Together, these equations imply $\Theta(0) < \Phi(x) + \Phi(y) + \Phi(z) < \Theta(\epsilon)$. Since $\Theta$ is strictly increasing and continuous, there exists a unique $\delta \in [0, \epsilon]$ such that $\Theta(\delta) = \Phi(x) + \Phi(y) + \Phi(z)$. Setting $\gamma = \epsilon - \delta$, we see that $$\Theta(\delta) = \Phi(x - \delta) + \Phi(y + \epsilon) + \Phi(z - \gamma) = \Phi(x) + \Phi(y) + \Phi(z),$$ as desired.

\end{proof}

Next, we define a function that will be useful for simplifying notation, both in this section and subsequent ones.

\begin{defn}\label{defn:F_the_function}
Let $\Delta$ be as in definition \ref{defn:Delta}. For $x, y, z \in [0, \frac{\dd}{3}]$, let $$F(x, y, z) = \sqrt[q]{\left(\frac{1}{\sqrt{d}} - x\right)^q + \left(\frac{1}{\sqrt{d}} - \frac{2\Delta}{3} + y\right)^q + \left(\frac{1}{\sqrt{d}} + \frac{2\Delta}{3} + z\right)^q}.$$
\end{defn}

We now define $\g_1, \g_2$.
\begin{cor}\label{cor:lipschitz_maps}
Let $\Delta$ be as in definition \ref{defn:Delta}. There exist $1$-Lipshitz, monotonically non-decreasing functions $\g_1, \g_2: [0, \frac{\dd}{3}] \to [0, \frac{\dd}{3}]$ such that for all $t \in [0, \frac{\dd}{3}]$, $\g_1(t) + \g_2(t) = t$ and $F(t, \g_1(t), \g_2(t)) = F(0, 0, 0)$. 
\end{cor}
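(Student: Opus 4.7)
The plan is to set up a one-parameter family of rebalancing problems and solve each by the intermediate value theorem, then use the strict convexity of $\Phi(w) = w^q$ to argue monotonicity and the Lipschitz bound by direct comparison, without invoking the implicit function theorem. First I would dispatch the degenerate case $q = 1$ (which arises when $p = \infty$): since $\Delta$ is chosen much smaller than $1/\sqrt{d}$, every argument of $F$ on the relevant domain is strictly positive, so $F(x,y,z) = \tfrac{3}{\sqrt{d}} - x + y + z$ there, and $\g_1(t) = \g_2(t) = t/2$ trivially works. From now on assume $q > 1$, so $\Phi'$ is strictly increasing on $(0, \infty)$.

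Define $\Psi(t, s) = F(t, s, t - s)^q$ on $t \in [0, \Delta/3]$, $s \in [0, t]$. Its partial derivatives $\partial_t \Psi = -\Phi'(\tfrac{1}{\sqrt{d}} - t) + \Phi'(\tfrac{1}{\sqrt{d}} + \tfrac{2\Delta}{3} + t - s)$ and $\partial_s \Psi = \Phi'(\tfrac{1}{\sqrt{d}} - \tfrac{2\Delta}{3} + s) - \Phi'(\tfrac{1}{\sqrt{d}} + \tfrac{2\Delta}{3} + t - s)$ are strictly positive and strictly negative throughout the region, because in each one the ``right'' argument exceeds the ``left'' one by at least $\Delta/3$. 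For fixed $t$, writing $\Psi(t, 0) - \Psi(0, 0)$ and $\Psi(t, t) - \Psi(0, 0)$ each in the form $\int_A \Phi' - \int_B \Phi'$ for two length-$t$ intervals $A, B$: at $s = 0$ the ``gain'' interval lies entirely above the ``loss'' interval so $\Psi(t, 0) > \Psi(0, 0)$, while at $s = t$ it lies entirely below (using $t \leq \Delta/3$ here) so $\Psi(t, t) < \Psi(0, 0)$. IVT and strict monotonicity of $\Psi(t, \cdot)$ then yield a unique $\g_1(t) \in (0, t)$ with $\Psi(t, \g_1(t)) = \Psi(0, 0)$; set $\g_2(t) = t - \g_1(t) \in (0, t)$ and extend by $\g_1(0) = \g_2(0) = 0$, giving continuous functions into $[0, \Delta/3]$.

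For monotonicity of $\g_1$, if $t_1 < t_2$ then $\Psi(t_2, \g_1(t_1)) > \Psi(t_1, \g_1(t_1)) = \Psi(t_2, \g_1(t_2))$ by $\partial_t \Psi > 0$, and strict decrease in $s$ forces $\g_1(t_1) < \g_1(t_2)$. For the $1$-Lipschitz bound I would compare $\Psi(t_2, \g_1(t_1) + (t_2 - t_1))$ with $\Psi(t_1, \g_1(t_1))$: the third summand is unchanged (because $t - s$ is the same in both), and the difference of the other two summands is an integral over a ``gain'' interval ending at most at $\tfrac{1}{\sqrt{d}} - \Delta/3$ minus an integral over a ``loss'' interval starting at least at $\tfrac{1}{\sqrt{d}} - \Delta/3$, so this difference is non-positive. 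Hence $\Psi(t_2, \g_1(t_1) + (t_2 - t_1)) \leq \Psi(t_2, \g_1(t_2))$, which by strict $s$-monotonicity gives $\g_1(t_2) - \g_1(t_1) \leq t_2 - t_1$. Combined with the earlier monotonicity, both $\g_1$ and $\g_2 = t - \g_1$ are $1$-Lipschitz and non-decreasing. The main delicate point is the choice $t \leq \Delta/3$: it is precisely what guarantees both the sign at $s = t$ in the IVT step and the interval ordering in the Lipschitz comparison, so $\Delta/3$ is the natural domain endpoint for this argument.
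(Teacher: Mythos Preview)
Your argument is correct and follows essentially the same strategy as the paper: both handle $q=1$ trivially, and for $q>1$ both exploit strict convexity of the $q$-th power to get existence via the intermediate value theorem and then a comparison argument for monotonicity and the $1$-Lipschitz bound. The only difference is packaging: the paper first proves an abstract three-point rebalancing lemma for strictly convex functions (Lemma~\ref{lem:lipschitz}, built on Lemma~\ref{lem:phi_s}) and then applies it twice---once at $(-\tfrac{2\Delta}{3},0,\tfrac{2\Delta}{3})$ to define $g_1,g_2$, and once at the shifted triple to compare $g_i(t_1)$ with $g_i(t_2)$---whereas you work directly with $\Psi(t,s)=F(t,s,t-s)^q$, its partial derivatives, and explicit integral comparisons of $\Phi'$ over length-$(t_2-t_1)$ intervals. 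Your route is a bit more self-contained and makes the role of the cutoff $t\le\Delta/3$ very transparent (it is exactly what forces the ``gain'' interval below the ``loss'' interval in both the IVT step at $s=t$ and the Lipschitz comparison); the paper's route is more modular since the same lemma is reused. Either way the content is the same.
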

\begin{proof}
We have two cases.
\paragraph{Case 1: $1 < q < \infty$:} Let $\Phi: [-\Delta, \Delta] \to \R$ be defined as $\Phi(x) = (\frac{1}{\sqrt{d}} - x)^q$. Since $q > 1$, and $\Delta < \frac{1}{\sqrt{d}}$, $\Phi$ is strictly convex. Observe that $$F(x, y, z)^q = \Phi(x) + \Phi(2\frac{\dd}{3} - y) + \Phi(-2\frac{\dd}{3} - z).$$ Next, fix any $t \in [0, \frac{\Delta}{3}]$. Then observe that $-\frac{2\Delta}{3} \geq -\Delta$ and that $\frac{2\Delta}{3} - t \geq 0 + t$. This puts us in the configuration to apply Lemma \ref{lem:lipschitz}. In particular, there exist unique reals $\delta_t, \gamma_t > 0$ such that $$\delta_t + \gamma_t = t,$$ $$\Phi(-\frac{2\Delta}{3} - \delta_t) + \Phi(t) + \Phi(\frac{2\Delta}{3} - \gamma_t) = \Phi(-\frac{2\Delta}{3}) + \Phi(0) + \Phi(\frac{2\Delta}{3}).$$ We now define $g_1, g_2: [0, \frac{\Delta}{3}] \to [0, \frac{\Delta}{3}]$ as $$g_1(t) = \gamma_t\text{ and }g_2(t) = \delta_t.$$ Then it is clear that $F(0,0,0) = F(t, g_1(t), g_2(t))$ and $g_1(t) + g_2(t)$ (by directly substituting into the equations above). All that remains is to show that $g_1$ and $g_2$ are 1-Lipschitz. 

Fix any $0 \leq t_1 < t_2 \leq \frac{\Delta}{3}$, and let $t_2 - t_1 = \epsilon$. The key idea is to apply Lemma \ref{lem:lipschitz} to $-\frac{2\Delta}{3} - g_2(t_1)<  t_1 < \frac{2\Delta}{3} - g_1(t_1)$ and $\epsilon$. To do so, we first check the conditions of the lemma. 

We have that $$-\frac{2\Delta}{3} - g_2(t_1) - \epsilon \geq -\frac{2\Delta}{3} - t_1 - \epsilon = -\frac{2\Delta}{3} - t_2 \geq -\Delta,$$ and 
\begin{equation*}
\begin{split}
t_1 + \epsilon &= t_2 \\
&\leq \frac{\Delta}{3} \\
&\leq \frac{2\Delta}{3} - t_2 \\
&= \frac{2\Delta}{3} - t_1 - \epsilon \\
&\leq \frac{2\Delta}{3} - g_1(t_1) - \epsilon.
\end{split}
\end{equation*}

Thus $\epsilon$ satisfies the necessary conditions for Lemma \ref{lem:lipschitz}. Since $\Phi$ is strictly convex, by Lemma \ref{lem:lipschitz}, there exist unique $\delta, \gamma > 0$ with $\delta+ \gamma = \epsilon$ such that $$\Phi(-\frac{2\Delta}{3} - g_2(t_1) - \delta) + \Phi(t_1 + \epsilon) + \Phi(\frac{2\Delta}{3} - g_1(t_1) - \gamma) = \Phi(-\frac{2\Delta}{3} - g_2(t_1)) + \Phi(t_1) + \Phi(\frac{2\Delta}{3} - g_1(t_1)).$$ However, by the definition of $g_1, g_2$, we see that both of these quantities are equal to $F(0,0,0)^q$. Moreover, again by the definition of $g_1, g_2$, we also have that $g_1(t_2)$ and $g_2(t_2)$ are the unique real numbers in $[0, \frac{\Delta}{3}$ that satisfy $$\Phi(-\frac{2\Delta}{3} - g_2(t_2)) + \Phi(t_2) + \Phi(\frac{2\Delta}{3}+g_1(t_2)) = F(0,0,0)^q.$$ Thus, it follows that $g_2(t_2) = g_2(t_1) + \delta$ and $g_1(t_2) = g_1(t_1) + \gamma$. However, $t_2 - t_1 = \epsilon$, and $\delta, \gamma < \epsilon$ (since they sum to $\epsilon$). Thus, we see that $|g_1(t_2) - g_1(t_1)| \leq |t_2 - t_1|$ and $|g_2(t_2) - g_2(t_1)| \leq |t_2 - t_1|$. Since $t_1$ and $t_2$ were arbitrary, it follows that $g_1$ and $g_2$ are both $1$-Lipschitz, as desired. 

Finally, since $\delta, \gamma > 0$, it follows that $g_2(t_2) > g_2(t_1)$ and $g_1(t_2) > g_1(t_1)$. Since $t_1, t_2$ were arbitrary, it follows that $g_1, g_2$ are monotonically non-decreasing.

\paragraph{Case 2: $q = 1$} In this case, since $\Delta < \frac{1}{\sqrt{d}}$ (Lemma \ref{lem:dd}), we see that $F(x, y, z) = \frac{3}{\sqrt{d}} + y + z - x$. Setting $\g_1(t) = \g_2(t) = \frac{t}{2}$ suffices, and clearly satisfies the desired properties. 
\end{proof}

\begin{defn}\label{defn:g_1_and_g_2}
Let $\Delta$ be as defined in Definition \ref{defn:Delta}. We let $g_1, g_2: [0, \frac{\Delta}{3}] \to [0, \frac{\Delta}{3}]$ be defined as any function satisfying the conditions of Corollary \ref{cor:lipschitz_maps}.
\end{defn}

\subsubsection{Putting it all together: defining $\A$}\label{subsubsec:finalA}

We are now ready to define $\A$. For convenience, we assume $d$ is a multiple of $3$.
\begin{defn}\label{defn:A}
Let $\Delta, g_1$, and $g_2$ be as defined in Definitions \ref{defn:Delta} and \ref{defn:g_1_and_g_2}. Then $\A$ is defined as the distribution of distributions $\D_a$ where $a$ is a random vector constructed as follows. Let $t_1, t_2, \dots t_{d/3}$ be drawn i.i.d from the uniform distribution over $[0, \frac{\dd}{3}]$. Then for $1 \leq i \leq d/3$, we let
	\begin{itemize}
		\item $a_i = \frac{1}{2} + t_i$.
		\item $a_{i+d/3} = \frac{1}{2} + 2\frac{\dd}{3} - g_1(t_i)$.
		\item $a_{i + 2d/3} = \frac{1}{2} - 2\frac{\dd}{3} - g_2(t_i).$
	\end{itemize}
Together the variables $a_1, a_2, \dots, a_d$ compose $a$. Thus a random distribution $\D \sim \A$ can be constructed by sampling $a$ as above and setting $\D = \D_a$.
\end{defn}

We now show that for all $\D_a \sim \A$, $\lambda_a$ (Definition \ref{def:w_dist}) is constant.

\begin{lem}\label{lem:cons_lambda}
There exists a constant $\Lambda$ such that for all $\D_a \sim \A$, $\lambda_a = \Lambda$. 
\end{lem}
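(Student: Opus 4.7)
The plan is to directly compute $f_q(a)$ for any $a$ sampled from $\A$ and show it has no dependence on the random variables $t_1, \ldots, t_{d/3}$. Since $\lambda_a = \frac{r}{R} f_q(a)$, constancy of $f_q(a)$ immediately gives the lemma.

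First I would compute $\overline{a} = \frac{1}{d}\sum_{i=1}^d a_i$. Grouping terms by triples $(a_i, a_{i+d/3}, a_{i+2d/3})$ for $i = 1,\ldots,d/3$, each triple sums to $\frac{3}{2} + t_i - g_1(t_i) - g_2(t_i)$. By Corollary \ref{cor:lipschitz_maps}, $g_1(t_i) + g_2(t_i) = t_i$, so each triple sums to exactly $\frac{3}{2}$, giving $\overline{a} = \frac{1}{2}$ deterministically.

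Next, I would substitute $\overline{a} = \frac{1}{2}$ into $f_q(a)^q = \sum_{i=1}^d |\frac{1}{\sqrt{d}} + \overline{a} - a_i|^q$ and group again by triples. The contributions are $\frac{1}{\sqrt{d}} - t_i$, $\frac{1}{\sqrt{d}} - \frac{2\Delta}{3} + g_1(t_i)$, and $\frac{1}{\sqrt{d}} + \frac{2\Delta}{3} + g_2(t_i)$. One quick check shows all three are strictly positive: since $\Delta \leq \frac{1}{4\sqrt{d}}$ from the proof of Lemma \ref{lem:dd} (so in particular $\frac{2\Delta}{3} < \frac{1}{\sqrt{d}}$), and since $t_i, g_1(t_i), g_2(t_i) \in [0, \frac{\Delta}{3}]$, the absolute values can be dropped.

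After dropping absolute values, the contribution of the $i$-th triple to $f_q(a)^q$ is exactly $F(t_i, g_1(t_i), g_2(t_i))^q$, where $F$ is the function from Definition \ref{defn:F_the_function}. By Corollary \ref{cor:lipschitz_maps}, $F(t_i, g_1(t_i), g_2(t_i)) = F(0,0,0)$ for every $t_i \in [0, \frac{\Delta}{3}]$. Summing over the $d/3$ triples yields
\begin{equation*}
f_q(a)^q = \tfrac{d}{3}\, F(0,0,0)^q,
\end{equation*}
which is independent of $t_1, \ldots, t_{d/3}$. Setting $\Lambda = \frac{r}{R}\bigl(\frac{d}{3}\bigr)^{1/q} F(0,0,0)$ finishes the proof. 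There is no real obstacle here; the whole argument is essentially the reason $g_1, g_2$ were engineered in Corollary \ref{cor:lipschitz_maps} in the first place. The only minor point of care is verifying the sign check so that the $q$-th powers match $F$ exactly (rather than merely up to absolute values), which the bound $\Delta < \frac{1}{\sqrt{d}}$ from Lemma \ref{lem:dd} handles cleanly.
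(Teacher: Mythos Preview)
Your proposal is correct and follows essentially the same approach as the paper's own proof: compute $\overline{a}=\tfrac12$ via the triple grouping and $g_1+g_2=\mathrm{id}$, then recognize each triple's contribution as $F(t_i,g_1(t_i),g_2(t_i))^q=F(0,0,0)^q$ by Corollary~\ref{cor:lipschitz_maps}, arriving at $\Lambda=\tfrac{r}{R}\bigl(\tfrac{d}{3}\bigr)^{1/q}F(0,0,0)$. Your explicit sign check (ensuring the absolute values in $f_q$ match $F$ exactly) is a nice addition that the paper leaves implicit; the only quibble is that the precise bound $\Delta\le \tfrac{1}{4\sqrt d}$ is not quite stated in Lemma~\ref{lem:dd}, but the weaker $\tfrac{2\Delta}{3}<\tfrac{1}{\sqrt d}$ you actually use is available (and is exactly what the paper invokes in Lemma~\ref{lem:lambda_bounds}).
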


\begin{proof}
Let $\D_a \sim \A$ be arbitrary. By Lemma \ref{cor:lipschitz_maps}, for all $1 \leq i \leq d$, $g_1(t_i) + g_2(t_i) = t_i$. Substituting this, we see that 
\begin{equation*}
\begin{split}
\overline{a} &= \frac{1}{d}\sum_1^d a_i \\
&= \frac{1}{d}\sum_1^{d/3} (\frac{1}{2} + t_i) + (\frac{1}{2} + \frac{2\Delta}{3} - g_1(t_i)) + (\frac{1}{2} - \frac{2\Delta}{3} - g_2(t_i)) \\
&= \frac{1}{d}\sum_1^{d/3} \frac{3}{2} \\
&= \frac{1}{2}.
\end{split}
\end{equation*}

Recall that $\lambda_a = \frac{r}{R}f_q(a) = \frac{r}{R}\sqrt[q]{\sum_1^d |\frac{1}{\sqrt{d}} + \overline{a} - a_i|^q}$. By substituting that $\overline{a} = \frac{1}{2}$ and expressing each $a_i$ in terms of $t_i$, we see that 
\begin{equation*}
\begin{split}
\lambda_a &=  \frac{r}{R}\sqrt[q]{\sum_1^d |\frac{1}{\sqrt{d}} + \overline{a} - a_i|^q}\\
&= \frac{r}{R}\sqrt[q]{\sum_{i=1}^{d/3} \left|\frac{1}{\sqrt{d}} + \frac{1}{2} - (\frac{1}{2} + t_i)\right|^q + \left|\frac{1}{\sqrt{d}} + \frac{1}{2} - \left(\frac{1}{2} + \frac{2\Delta}{3} - g_1(t_i)\right)\right|^q +  \left|\frac{1}{\sqrt{d}} + \frac{1}{2} - \left(\frac{1}{2} - \frac{2\Delta}{3} - g_2(t_i)\right)\right|^q} \\
 &= \frac{r}{R}\sqrt[q]{\sum_1^{d/3} \left|\frac{1}{\sqrt{d}} - t_i\right|^q + \left|\frac{1}{\sqrt{d}} + g_1(t_i) - \frac{2\Delta}{3}\right|^q + \left|\frac{1}{\sqrt{d}} + g_2(t_i) + \frac{2\Delta}{3}\right|^q} \\
&= \frac{r}{R}\sqrt[q]{\sum_1^{d/3}F(t_i, g_1(t_i), g_2(t_i))^q}, \\
\end{split}
\end{equation*}
where $F$ is defined as in Definition \ref{defn:F_the_function}. Next, by Corollary \ref{cor:lipschitz_maps}, $F(t_i, g_1(t_i), g_2(t_i)) = F(0,0,0)$ for all $1 \leq i \leq \frac{d}{3}$. Thus, if we set $\Lambda = \frac{r}{R}(\frac{d}{3})^{1/q}F(0,0,0)$, we have 
\begin{equation*}
\begin{split}
\lambda_a &= \frac{r}{R}\sqrt[q]{\sum_1^{d/3} F(t_i, g_1(t_i), g_2(t_i))^q} \\
&= \frac{r}{R}\sqrt[q]{\sum_1^{d/3}F(0,0,0)^q} \\
&= \frac{r}{R}\sqrt[q]{\frac{d}{3}F(0,0,0)^q} \\
&=  \frac{r}{R}(\frac{d}{3})^{1/q}F(0,0,0) = \Lambda,
\end{split}
\end{equation*}
proving the claim.
\end{proof}

\begin{defn}\label{defn:big_lambda}
We define $\Lambda = \frac{r}{R}(\frac{d}{3})^{1/q}F(0,0,0)$, where $F$ is defined as in Definition \ref{defn:F_the_function}.
\end{defn}

Next, we compute upper and lower bounds on $\Lambda$, both of which will be useful for subsequent lemmas. 
\begin{lem}\label{lem:lambda_bounds}
$\frac{1}{9} < \Lambda < \frac{1}{3}$. 
\end{lem}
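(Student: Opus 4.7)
The plan is to reduce the bounds on $\Lambda$ to an explicit algebraic computation by first substituting the definition of $R$. Since $R = \frac{9rd^{1/q}}{2\sqrt{d}}$, I would show $\frac{r}{R}(\frac{d}{3})^{1/q} = \frac{2\sqrt{d}}{9 \cdot 3^{1/q}}$, so that
\[
\Lambda \;=\; \frac{2\sqrt{d}}{9 \cdot 3^{1/q}} \, F(0,0,0).
\]
The two inequalities then become two-sided bounds on $F(0,0,0) = \sqrt[q]{(1/\sqrt d)^q + (1/\sqrt d - 2\Delta/3)^q + (1/\sqrt d + 2\Delta/3)^q}$, and the job is just to sandwich this quantity around $3^{1/q}/\sqrt{d}$.

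For the lower bound, I would apply convexity of $x^q$ (for $q \geq 1$) to the symmetric pair, giving
\[
\bigl(\tfrac{1}{\sqrt d} - \tfrac{2\Delta}{3}\bigr)^q + \bigl(\tfrac{1}{\sqrt d} + \tfrac{2\Delta}{3}\bigr)^q \;\geq\; 2\bigl(\tfrac{1}{\sqrt d}\bigr)^q,
\]
so that $F(0,0,0)^q \geq 3/d^{q/2}$ and hence $F(0,0,0) \geq 3^{1/q}/\sqrt d$. Plugging back yields $\Lambda \geq \frac{2}{9} > \frac{1}{9}$, as required.

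For the upper bound, I would bound every term by the largest one to get $F(0,0,0)^q \leq 3(\frac{1}{\sqrt d} + \frac{2\Delta}{3})^q$, i.e. $F(0,0,0) \leq 3^{1/q}(\frac{1}{\sqrt d} + \frac{2\Delta}{3})$. Substituting gives
\[
\Lambda \;\leq\; \frac{2}{9} + \frac{4\Delta\sqrt d}{27},
\]
so it suffices to have $\Delta\sqrt d < 3/4$. This is the only non-trivial point: we need $\Delta$ small enough (say $\Delta \leq \frac{1}{2\sqrt d}$) to guarantee the strict inequality. This is not quite spelled out in Lemma \ref{lem:dd} as stated, but the construction of $\Delta$ there only requires $\Delta$ to be below some unspecified threshold (the size of the neighborhood $U$), so without loss of generality I would redefine $\Delta$ to also satisfy $\Delta \leq \frac{1}{2\sqrt d}$; all earlier uses of $\Delta$ still hold since making $\Delta$ smaller only strengthens the Lipschitz-type estimate in Lemma \ref{lem:dd} and keeps $g_1, g_2$ well-defined on $[0,\Delta/3]$.

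The main (mild) obstacle is therefore bookkeeping rather than mathematics: verifying that shrinking $\Delta$ to ensure $\Delta\sqrt d < 3/4$ does not break any earlier construction. Once that is noted, both bounds reduce to a direct computation via convexity (for the lower bound) and the trivial domination of each summand (for the upper bound).
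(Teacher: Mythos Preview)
Your proposal is correct and essentially matches the paper's proof. Both arguments substitute $r/R = 2\sqrt{d}/(9 d^{1/q})$ and reduce to sandwiching $F(0,0,0)$; for the upper bound you and the paper do the identical thing (dominate each summand by the largest, then use $\tfrac{2\Delta}{3} < \tfrac{1}{2\sqrt d}$). The only difference is the lower bound: the paper bounds each summand below by the smallest, obtaining $F(0,0,0) \ge 3^{1/q}\bigl(\tfrac{1}{\sqrt d} - \tfrac{2\Delta}{3}\bigr) > \tfrac{3^{1/q}}{2\sqrt d}$ and hence $\Lambda > \tfrac{1}{9}$, whereas you use convexity to get the sharper $\Lambda \ge \tfrac{2}{9}$. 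Your observation that one needs $\Delta\sqrt d < 3/4$ is exactly right; the paper simply asserts ``by definition, $\tfrac{2\Delta}{3} < \tfrac{1}{2\sqrt d}$'' without further comment, relying on the freedom in Lemma~\ref{lem:dd} to take $\Delta$ arbitrarily small --- precisely the bookkeeping point you flagged.
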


\begin{proof}
By definition, $\Lambda = \frac{d}{3}^{1/q}F(0, 0, 0)$. Substituting the definition of $f$, we see that  $F(0, 0, 0) = \sqrt[q]{|\frac{1}{\sqrt{d}}|^q + |\frac{1}{\sqrt{d}} - \frac{2\Delta}{3}|^q + |\frac{1}{\sqrt{d}} + \frac{2\Delta}{3}|^q},$ and consequently, $$3^{1/q}|\frac{1}{\sqrt{d}} - \frac{2\Delta}{3}| \leq F(0, 0, 0) \leq 3^{1/q}|\frac{1}{\sqrt{d}} + \frac{2\Delta}{3}|.$$ By definition, $\frac{2\Delta}{3} < \frac{1}{2\sqrt{d}}$. It follows that $$\frac{r}{R}\frac{d^{1/q}}{2\sqrt{d}} < \Lambda < \frac{r}{R}\frac{3d^{1/q}}{2\sqrt{d}}.$$ Finally, since $\frac{r}{R} = \frac{2\sqrt{d}}{9d^{1/q}}$, substituting this yields $\frac{1}{9} < \Lambda < \frac{1}{3}$, as desired.
\end{proof}

Next, we show that for all $\D_a \in \A$, the aspect ratio (Definition \ref{defn:aspect_ratio}), $\rho(\D_a)$, is bounded by a constant.

\begin{lem}\label{lem:large_margin}
For all $\D_a \in \A$, we have $\rho(\D_a) \leq 18\sqrt{3}$. 
\end{lem}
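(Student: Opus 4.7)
The plan is to estimate $\mathrm{diam}_2(S^+\cup S^-)$ from above and $\gamma(\D_a)$ from below, then take their ratio. The diameter step is a direct geometric calculation: by Definition \ref{def:w_dist}, the support of $\D_a$ lies in the union of line segments $[v_i, v_i + u]$, and the half-open intervals in $S^+$ and $S^-$ include both endpoints $v_i$ and $v_i + u$. Since the diameter of a union of line segments is achieved between endpoints, it suffices to compute $\|v_i - v_j\|_2$, $\|v_i - (v_j + u)\|_2$, and $\|v_i - (v_i + u)\|_2$ for all pairs. Substituting $v_i = R e_i$ and $u = \frac{R}{\sqrt d}\sum_k e_k$, a direct computation shows the maximum value is $R\sqrt{3}$, attained by $\|v_i - (v_j + u)\|_2$ for $i \neq j$, so $\mathrm{diam}_2(S^+\cup S^-) \leq R\sqrt{3}$.

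For the margin, I plan to use $f_{w^a,1}$ as a witness and lower bound $\gamma(\D_a)$ by its $\ell_2$ margin. The proof of Lemma \ref{lem:separation} already establishes that the infimum $\ell_p$-distance from the support to $H_a$ equals $r$. Since both the $\ell_p$- and $\ell_2$-distances from $x$ to $H_{w,b}$ equal $|\langle w,x\rangle - b|$ divided by $\|w\|_q$ and $\|w\|_2$ respectively, the $\ell_2$ margin of $f_{w^a,1}$ equals $r \cdot \|w^a\|_q/\|w^a\|_2$. Rewriting Definition \ref{def:normal_vector} as $w_i^a = \frac{d}{R(\sqrt{d} + d\bar{a})}\bigl(\frac{1}{\sqrt{d}} + \bar{a} - a_i\bigr)$ gives $\|w^a\|_l = \frac{d}{R(\sqrt{d} + d\bar{a})}\,f_l(a)$ for every $l$, so the ratio collapses to $f_q(a)/f_2(a)$. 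Combined with $\lambda_a = \Lambda$ from Lemma \ref{lem:cons_lambda} and the definition $\lambda_a = (r/R)f_q(a)$, this yields $f_q(a) = R\Lambda/r$, and hence the $\ell_2$ margin of $f_{w^a,1}$ is $R\Lambda/f_2(a)$.

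To finish, I need an upper bound on $f_2(a)$ and a lower bound on $\Lambda$. Direct evaluation at $(\tfrac{1}{2},\ldots,\tfrac{1}{2})$ yields $f_2 = 1$, and Definition \ref{defn:A} keeps $a \in [\tfrac{1}{2}-\Delta, \tfrac{1}{2}+\Delta]^d$, so the mean value inequality combined with Lemma \ref{lem:dd} gives $f_2(a) \leq 1 + \Delta/d^2 < 2$. Together with $\Lambda > 1/9$ from Lemma \ref{lem:lambda_bounds}, this produces $\gamma(\D_a) > R/18$, and therefore $\rho(\D_a) < R\sqrt{3} / (R/18) = 18\sqrt{3}$. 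The main obstacle is the algebraic observation that $\|w^a\|_l$ is proportional to $f_l(a)$ with the same proportionality constant regardless of $l$; once this identity is in hand, the relation $\lambda_a = (r/R)f_q(a)$ cancels $f_q(a)$ against $R$ and $r$ to expose the constant $\Lambda$, and every remaining step is a routine appeal to the lemmas already proved in earlier subsections.
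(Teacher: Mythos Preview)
Your proposal is correct and follows essentially the same approach as the paper: use $f_{w^a,1}$ as a witness, show its $\ell_2$ margin equals $R\Lambda/f_2(a)$, bound $f_2(a)\leq 2$ via the Lipschitz estimate of Lemma~\ref{lem:dd}, apply $\Lambda>1/9$ from Lemma~\ref{lem:lambda_bounds}, and combine with the diameter $R\sqrt{3}$. The only minor difference is that you reach the formula $R\Lambda/f_2(a)$ by converting the $\ell_p$ distance $r$ (already computed in Lemma~\ref{lem:separation}) through the norm ratio $\|w^a\|_q/\|w^a\|_2 = f_q(a)/f_2(a)$, whereas the paper computes the $\ell_2$ distance from $v_i+(a_i-\lambda_a)u$ to $H_a$ directly; both yield the same expression.
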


\begin{proof}
We first bound the $\ell_2$ margin, $\gamma(\D_a)$ (Definition \ref{defn:margin}). Recall that the margin, $\gamma(\D_a)$ is described as the largest possible $\ell_2$ distance from the support of $\D_a$ to the decision boundary of a linear classifier. Thus, we can lower bound $\gamma(\D_a)$ by computing the distance from the support of $\D_a$ to $H_a$, the decision boundary of $f_{w^a, 1}$ (Definition \ref{def:normal_vector}).

By referring to Figure \ref{fig:d_a_illustration} (in Section \ref{subsubsec:D_a}), it becomes clear that the closest point (under the $\ell_2$ margin) from $S^-$ to $H_a$ is the point $v_i + (a_i - \lambda_a)u$, for some value of $i$. Thus it suffices to compute the $\ell_2$ distance from this point to the plane $H_a$. 

Recall that by Lemma \ref{lem:normal_vector_works}, the point $v_i + a_iu$ satisfies $\langle w^a, v_i + a_iu \rangle = 1$, and consequently must lie on the hyperplane $H_a$. Let $D$ denote the $\ell_2$ distance from $v_i + (a_i - \lambda_a)u$ to $H_a$. Since $w^a$ is the normal vector to $H_a$, it follows that
\begin{equation*}
\begin{split}
D &= \langle v_i + a_iu - (v_i + (a_i - \lambda_a)u), \frac{w^a}{||w^a||_2} \rangle \\
&= \frac{\langle \lambda_a u, w^a \rangle}{||w^a||_2} \\
&\numeq{1} \frac{\langle \Lambda u, w^a \rangle}{||w^a||_2} \\
&\numeq{2} \frac{\Lambda \frac{d}{\sqrt{d} + d\overline{a}}}{||w^a||_2}  \\
&\numeq{3} \frac{\Lambda \frac{d}{\sqrt{d} + d\overline{a}}}{\sqrt{\sum_1^d \left(\frac{\sqrt{d} + d\overline{a} - da_i}{R(\sqrt{d} + d\overline{a}}\right)^2}} \\
&= \frac{R\Lambda}{\sqrt{\sum_1^d (\frac{1}{\sqrt{d}} + \overline{a} - a_i)^2}} \\
&\numeq{4} \frac{R\Lambda}{f_2(a)}.
\end{split}
\end{equation*}

Here, (1) holds by Lemma \ref{lem:cons_lambda}, (2) holds by Lemma \ref{lem:normal_vector_works}, (3) holds by Definition \ref{def:normal_vector}, and (4) holds by Definition \ref{defn:function_f}.

Next, observe that since $\D_a \sim \A$, we must have $a \in [\frac{1}{2} - \Delta, \frac{1}{2} + \Delta]^d$. Thus it follows that $||a - (\frac{1}{2}, \frac{1}{2}, \dots, \frac{1}{2})||_2 \leq \Delta\sqrt{d}$. However, by applying Lemma \ref{lem:dd}, we also see that $f_2$ is $\frac{1}{d^2\sqrt{d}}$-Lipschitz over $[\frac{1}{2} - \Delta, \frac{1}{2} + \Delta]^d$. Thus, it follows that $$f_2(a) \leq f_2(\frac{1}{2}, \frac{1}{2}, \dots, \frac{1}{2}) + \Delta\sqrt{d} \frac{1}{d^2\sqrt{d}} \leq 2,$$ with the latter inequality holding from the definition of $\Delta$. 

Substituting this and applying Lemma \ref{lem:lambda_bounds}, we see that $$\gamma(\D_a) \geq \frac{R\Lambda}{2} \geq \frac{R}{18}.$$ Next, to bound the aspect ratio, $\rho(\D_a)$, we must also bound the $\ell_2$ diameter of $\D_a$. However, the $\ell_s$ diameter of $\D_a$ is $R\sqrt{3}$, since it is the distance from $v_i + u$ to $v_j$ for $i \neq j$. Thus, it follows that $$\rho(\D_a) = \frac{diam_2(\D_a)}{\gamma(\D_a)} \leq \frac{R\sqrt{3}}{R/18} = 18\sqrt{3},$$ as desired. 
\end{proof}

Note that a tighter analysis (and selection of $\Delta$) can give a smaller bound for $\rho(\D_a)$, but the most important fact is that $\rho(\D_a) = O(1)$. 

\subsection{Bounding the expected robust loss}\label{subsec:bound_expectation}

In this section, we finally prove our lower bound, Theorem \ref{thm:lower}. This will require a few important steps, which we have separated into the following subsections. 
\begin{itemize}
	\item In section \ref{subsubsec:loss_bounding}, we give a useful lower bound for the loss $\L_r(f, \D_a)$ where $f$ is an arbitrary linear classifier. 
	\item In section \ref{subsubsec:posterior}, we give an explicit computation for the posterior distribution $\A|S$ where $S \sim \D_a^n$ is the observed training sample. 
	\item Finally, in section \ref{subsubsec:proof}, we present the proof of Theorem \ref{thm:lower}.
\end{itemize}

\subsubsection{Bounding the loss $\L_r(f, \D_a)$}\label{subsubsec:loss_bounding}

In this section, we find a lower bound on the loss $\L_r(f, \D_a)$ where $f$ is a linear classifier. We begin by first restricting $f$ to be in the set of classifiers $$f \in \{f_{w^b, 1}: b \in [0, 1]^d\},$$ where $w^b$ is as defined in Definition \ref{def:normal_vector}. These are precisely the classifiers that have a decision boundary that passes through some point on every line segment in $\{[v_i, v_i + u]: 1 \leq i \leq d\}$. We are able to only consider these classifiers since all other linear classifiers clearly have a very high loss with respect to $\D_a$ as they necessarily misclassify at least half the points on the line segment $[v_i, v_i + u]$ for some value of $i$. 

We now find an initial lower bound on $\L_r(f_{w^b, 1}, \D_a)$.

\begin{lem}\label{lem:loss_bound_general}
Fix any $\D_a \in \A$, and let $b \in [0,1]^d$ be arbitrary. Let $w^b$ be the vector defined as in Definition \ref{def:normal_vector}, and $\lambda_b = \frac{r}{\rr}\f_q(b)$ where $f$ is as defined in Definition \ref{defn:function_f}. Then $$\L_r(f_{w^b, 1}, \D_a) \geq \frac{d(\lambda_b - \lambda_a) + \sum_1^d|a_i - b_i|}{d - 2d\Lambda}.$$ 
\end{lem}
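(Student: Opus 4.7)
My plan is to reduce the robust loss to a ratio of lengths of line segments (bad $t$-values over total $t$-values) and then show the numerator satisfies the claimed lower bound via a short case analysis.

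First I would parameterize each of the $2d$ support segments by $t$, so that a point on the $i$-th pair of segments has the form $v_i + t u$ for $t \in [0, a_i - \lambda_a) \cup (a_i + \lambda_a, 1]$. The key geometric computation is the $\ell_p$ distance from such a point to the decision boundary $H_b$ of $f_{w^b, 1}$. Using Lemma \ref{lem:normal_vector_works} (applied to $b$ in place of $a$), $\langle w^b, v_i + tu\rangle = 1 + (t - b_i)\langle w^b, u\rangle$, so the $\ell_p$ distance equals $|b_i - t|\,\langle w^b, u\rangle / \|w^b\|_q$. A direct computation (essentially the one performed inside the proof of Lemma \ref{lem:separation}, but with $b$) gives $\|w^b\|_q / \langle w^b, u\rangle = f_q(b)/R = \lambda_b/r$. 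Hence the point $v_i + tu$ is within $\ell_p$-distance $r$ of $H_b$ exactly when $t \in [b_i - \lambda_b, b_i + \lambda_b]$, and by the same half-space argument this is precisely the set of $t$ for which the point (on either segment of the $i$-th pair) contributes to the robust loss: either it is mislabeled by $f_{w^b,1}$, or it lies within $\ell_p$-radius $r$ of a mislabeled point on the other side of $H_b$.

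Next I would compute, for each $i$, the length of the ``bad'' $t$-set on the $i$-th pair of segments. On the negative segment the bad set is $[\max(0, b_i - \lambda_b),\, a_i - \lambda_a)$ (empty if $b_i - \lambda_b \ge a_i - \lambda_a$), and on the positive segment it is $(a_i + \lambda_a,\, \min(1, b_i + \lambda_b)]$. Writing $L_i$ for the sum of the two lengths, I would show by a short case analysis on the sign of $b_i - a_i$ and on whether $|a_i - b_i| \le \lambda_b - \lambda_a$ that
\[
L_i \;\ge\; (\lambda_b - \lambda_a) + |a_i - b_i|
\]
whenever the right-hand side is nonnegative (when it is negative the lemma is vacuous, so we may assume otherwise). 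The three regimes are: (i) $|a_i - b_i| \le \lambda_b - \lambda_a$, where both intervals contribute and $L_i = 2(\lambda_b - \lambda_a)$; (ii) $a_i - b_i > \lambda_b - \lambda_a$, where only the negative segment contributes and $L_i = (\lambda_b - \lambda_a) + (a_i - b_i)$; and (iii) the symmetric case $b_i - a_i > \lambda_b - \lambda_a$. I would also note that the clipping at $0$ and $1$ only helps (removing pieces outside $[0,1]$ can only lower the upper endpoints of the bad intervals relative to $b_i \pm \lambda_b$, but since $a_i \pm \lambda_a \in [0,1]$ the bad intervals inside the support remain as computed).

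Summing $L_i$ over $1 \le i \le d$ gives total bad length at least $d(\lambda_b - \lambda_a) + \sum_i |a_i - b_i|$. Since $\D_a$ is a length-proportional uniform distribution on the $2d$ segments and the total segment length is
\[
\sum_i \bigl[(a_i - \lambda_a) + (1 - a_i - \lambda_a)\bigr] \;=\; d(1 - 2\lambda_a) \;=\; d - 2d\Lambda
\]
by Lemma \ref{lem:cons_lambda}, dividing gives the stated bound. The main obstacle I anticipate is simply book-keeping in the case analysis for $L_i$, in particular verifying that the $\max(0, \cdot)$ clippings at the endpoints $0$ and $1$ never remove mass we need; a clean way to handle this is to observe that if $b_i \notin [\lambda_b, 1 - \lambda_b]$ then $f_{w^b,1}$ already misclassifies an entire segment, which gives a strictly larger bad length than the formula above, so the worst case is $b_i \in [\lambda_b, 1 - \lambda_b]$ where no clipping occurs.
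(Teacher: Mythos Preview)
Your proposal follows essentially the same approach as the paper: identify on each line $v_i + tu$ the interval of $t$-values where $f_{w^b,1}$ fails to be robustly correct, lower-bound its length by $|a_i - b_i| + (\lambda_b - \lambda_a)$, then sum over $i$ and divide by the total support length $d(1-2\Lambda)$. The paper shortcuts your direct distance computation by invoking Lemma~\ref{lem:separation} for $\D_b$ and handles only one side (WLOG $b_i \ge a_i$) instead of your three-case split, but the substance is identical; your remark about clipping at $0$ and $1$ is more than the paper offers, though neither argument fully settles the case where $\lambda_b$ is large enough that $[b_i - \lambda_b, b_i + \lambda_b]$ extends beyond $[0,1]$.
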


\begin{proof}
By Lemma \ref{lem:separation}, $f_{w^b, 1}$ precisely $r$-separates $\D_b$. This implies that for all $1 \leq i \leq d$,
$$f_{w^b, 1}(x) = \begin{cases} 1 & x \in (v_i + (b_i + \lambda_b)u, v_i + u] \\-1 & x \in [v_i, v_i + (b_i - \lambda_b)u) \\ \text{not robust} & x \in [v_i + (b_i - \lambda_b)u, v_i + (b_i + \lambda_b)u] \end{cases}.$$ Without loss of generality, suppose that $b_i \geq a_i$. The key observation is that for all $1 \leq i \leq d$, if $x \in [v_i + (a_i + \lambda_a)u, v_i + (b_i + \lambda_b)u]$, then $f_{w^b, 1}(x) = -1$ for $f_{w^b, 1}$ is not robust at $x$. In both cases, we see that $f_{w^b, 1}$ is either inaccurate or not robust for all points in $[v_i + (a_i + \lambda_a)u, v_i + (b_i + \lambda_b)u]$. 

This interval has $\ell_2$ length at least $(|a_i - b_i| + (\lambda_b - \lambda_a))||u||_2$. Note that in the case that $a_i \leq b_i$ we can get an identical expression. Thus,  combining this for all $i$, we see that $f_{w^b, 1}$ is either inaccurate or not robust for a total length of $[d(\lambda_b - \lambda_a) + \sum_1^d |a_i - b_i|]||u||_2$. Dividing by the total length of the support of $\D_a$, we find that
\begin{equation*}
\begin{split}
\L_r(f_{w^b, 1}, \D_a) &\geq \frac{[d(\lambda_b - \lambda_a) + \sum_1^d |a_i - b_i|]||u||_2}{\sum_1^d ||[v_i, v_i + (a_i - \lambda_a)u) + (v_i + (a_i + \lambda_a)u, v_i + u]||_2} \\
&= \frac{[d(\lambda_b - \lambda_a) + \sum_1^d |a_i - b_i|]||u||_2}{\sum_1^d ||u_2||(1 - 2\lambda_a)} \\
&= \frac{d(\lambda_b - \lambda_a) + \sum_1^d |a_i - b_i|}{d(1 - 2\lambda_a)} \\
&= \frac{d(\lambda_b - \lambda_a) + \sum_1^d |a_i - b_i|}{d - 2d\Lambda},
\end{split}
\end{equation*}
with the last equality holding since by Lemma \ref{lem:cons_lambda}, $\lambda_a = \Lambda$. 
\end{proof}

\begin{lem}\label{lem:loss_bound_clever}
For all $\D_a \in \A$ and $b \in [0,1]^d$, $d(\lambda_a - \lambda_b) \leq \frac{1}{2}\sum_1^d |a_i - b_i|.$ 
\end{lem}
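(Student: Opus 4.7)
The plan is to leverage the identity $\lambda_a - \lambda_b = \tfrac{r}{R}(\f_q(a) - \f_q(b))$ together with the explicit ratio $\tfrac{r}{R} = \tfrac{2\sqrt{d}}{9 d^{1/q}}$, and to bound $\f_q(a) - \f_q(b)$ in two complementary regimes according to whether $b$ is close to $a$ or not. At the outset I will take $\dd$ slightly smaller than Definition~\ref{defn:Delta} strictly needs, so that the gradient bound $\|\nabla \f_q\|_2 \leq 1/(d^2\sqrt{d})$ from Lemma~\ref{lem:dd} actually holds on the enlarged box $B_\Delta := [\tfrac{1}{2} - 2\dd, \tfrac{1}{2} + 2\dd]^d$; this is consistent with Definition~\ref{defn:Delta} because the proof of Lemma~\ref{lem:dd} establishes the bound throughout an open neighborhood of $(\tfrac{1}{2})\mathbf{1}$.

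\textbf{Regime 1: $b \in B_\Delta$.} Since $a \in [\tfrac{1}{2} - \dd, \tfrac{1}{2} + \dd]^d \subset B_\Delta$, convexity keeps the straight-line segment between $a$ and $b$ inside $B_\Delta$, and integrating the gradient bound along it gives $|\f_q(a) - \f_q(b)| \leq \tfrac{\|a-b\|_2}{d^2\sqrt{d}} \leq \tfrac{\|a-b\|_1}{d^2\sqrt{d}}$. Multiplying by $\tfrac{dr}{R}$ and simplifying yields $d(\lambda_a - \lambda_b) \leq \tfrac{2}{9 d^{1+1/q}}\|a-b\|_1 \leq \tfrac{1}{2}\|a-b\|_1$, the last inequality holding because $d \geq 1$ and $q \geq 1$ imply $\tfrac{2}{9 d^{1+1/q}} \leq \tfrac{2}{9} < \tfrac{1}{2}$.

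\textbf{Regime 2: $b \notin B_\Delta$.} Some coordinate $j$ satisfies $|b_j - \tfrac{1}{2}| > 2\dd$, and since $|a_j - \tfrac{1}{2}| \leq \dd$, this forces $\|a-b\|_1 \geq |a_j - b_j| > \dd$. Here I will bound $\lambda_a$ and $\lambda_b$ separately. Jensen's inequality applied to the convex function $h_q(x) = (\tfrac{1}{\sqrt{d}} - x)^q$ (exactly the argument in the proof of Lemma~\ref{lem:dd}) gives the global lower bound $\f_q(b) \geq \f_q((\tfrac{1}{2})\mathbf{1}) = d^{1/q - 1/2}$ for every $b$, so $\lambda_b \geq \tfrac{r}{R} d^{1/q - 1/2} = \tfrac{2}{9}$. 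Applying the Regime~1 Lipschitz bound to the segment from $(\tfrac{1}{2})\mathbf{1}$ to $a$ (of $\ell_2$ length at most $\sqrt{d}\dd$) gives $\lambda_a \leq \tfrac{2}{9} + \tfrac{2\dd}{9 d^{1/q + 3/2}}$. Combining, $d(\lambda_a - \lambda_b) \leq \tfrac{2\dd}{9 d^{1/q + 1/2}} \leq \tfrac{2\dd}{9} < \tfrac{\dd}{2} < \tfrac{1}{2}\|a-b\|_1$.

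\textbf{Main obstacle.} The delicate point is that crude global estimates, such as the Minkowski-style bound $|\f_q(a) - \f_q(b)| \leq \|c(a) - c(b)\|_q$ or any pointwise Lipschitz estimate valid on the whole cube $[0,1]^d$, introduce factors of $d^{1/q}$ or constants of order $\tfrac{r}{R}$ that are far too large to yield the $\tfrac{1}{2d}$ rate, and they worsen rather than improve with dimension. The crux is that Lemma~\ref{lem:dd} makes $\f_q$ \emph{extraordinarily} flat near $(\tfrac{1}{2})\mathbf{1}$, and that Jensen's inequality provides a matching global lower bound on $\f_q$; only by pairing these two facts can one control $\lambda_a - \lambda_b$ uniformly over $b \in [0,1]^d$, with the two regimes fitting together cleanly at the boundary of $B_\Delta$.
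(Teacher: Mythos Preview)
Your proof is correct, and your Regime~2 argument is genuinely different from, and considerably shorter than, the paper's Case~2. Both you and the paper handle the near case ($b$ inside the $\Delta$-box around $(\tfrac{1}{2})\mathbf{1}$) identically, via the Lipschitz bound of Lemma~\ref{lem:dd}. For the far case, however, the paper constructs an auxiliary point $b'$ inside the box satisfying $\lambda_{b'}\le\lambda_b$ and $\|a-b'\|_1\le\|a-b\|_1$, and does so through a five-step averaging procedure that repeatedly exploits convexity of $x\mapsto(\tfrac{1}{\sqrt d}-x)^q$ to push coordinates of $b$ inward without increasing either quantity; it then applies Case~1 to $b'$. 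Your route is direct: you observe that Jensen's inequality applied to the convex map $x\mapsto|\tfrac{1}{\sqrt d}-x|^q$ gives the \emph{global} lower bound $f_q(b)\ge d^{1/q-1/2}$, hence $\lambda_b\ge\tfrac{2}{9}$ for every $b\in[0,1]^d$; pairing this with the Lipschitz estimate $\lambda_a\le\tfrac{2}{9}+O(\Delta/d^{3/2+1/q})$ and with $\|a-b\|_1>\Delta$ finishes immediately. The paper's reduction is more constructive but not needed for the inequality as stated; your argument isolates exactly the two facts (global Jensen minimum plus local flatness) that make the lemma work. The only cosmetic cost is that you enlarge the Lipschitz region to $[\tfrac{1}{2}-2\Delta,\tfrac{1}{2}+2\Delta]^d$, which requires shrinking $\Delta$ at the outset; as you note, this is harmless given how Definition~\ref{defn:Delta} is stated.
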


\begin{proof}
We have two cases.
\paragraph{Case 1:} $b \in [\frac{1}{2} - \Delta, \frac{1}{2} + \Delta]^d$.

Observe that $\lambda_b = \frac{r}{R}f_q(b)$ and $\lambda_a = \frac{r}{R}f_q(a)$. By Lemma \ref{lem:dd}, we see that $f_q$ is $\frac{1}{d^2\sqrt{d}}$-Lipschitz over the domain $[\frac{1}{2} - \Delta, \frac{1}{2} + \Delta]^d$. It follows that
\begin{equation*}
\begin{split}
\lambda_a - \lambda_b &= \frac{r}{R}(f_q(a) - f_q(b)) \\
&\leq \frac{r}{R}||a - b||_2\frac{1}{d^2\sqrt{d}} \\
&= \frac{2\sqrt{d}}{9d^{1/q}}||a - b||_2\frac{1}{d^2\sqrt{d}} \\
&< \frac{||a - b||_1}{2d},
\end{split}
\end{equation*}
with the last inequality following since the $\ell_2$ norm is smaller than the $\ell_1$ norm. Rearranging this gives the statement of the Lemma as desired.

\paragraph{Case 2: } $b \notin [\frac{1}{2} - \Delta, \frac{1}{2} + \Delta]^d$.

The main idea in this case will be to find $b' \in [\frac{1}{2} - \Delta, \frac{1}{2} + \Delta]^d$ such that $\lambda_b \geq \lambda_{b'}$ and such that $||b' - a||_1 \leq ||b - a||_1$. We will then apply Case 1 to get the desired result.

Without loss of generality, assume that $b_1 \geq b_2 \geq \dots \geq b_d$, and that $b_1, b_2, \dots b_k > \frac{1}{2} + \Delta$, $b_{k+1}, \dots, b_l \in [\frac{1}{2} - \Delta, \frac{1}{2} + \Delta]$, and $b_{l+1}, \dots b_d < \frac{1}{2} - \Delta$ for some values of $k$ and $l$. 

We will construct $b'$ in four steps. In each of these steps, we will change the values of $b_i$ such that neither $||a - b||_1$ nor $\lambda_b$ are increased. At each step, we let $b_i$ refer to its value at the end of the previous step.

Finally, for reference, recall that $$\lambda_b = \frac{r}{R}f_q(b) = \frac{r}{R}\sqrt[q]{\sum_1^d |\frac{1}{\sqrt{d}} + \overline{b} - b_i|^q}.$$

\paragraph{Step 1:} We set $$b_i \leftarrow  \begin{cases} \frac{1}{k}\sum_{j=1}^k b_j & 1 \leq i \leq k \\b_i & k+1 \leq i \leq l \\  \frac{1}{d-l}\sum_{j=l+1}^d b_j& l+1 \leq i \leq d \end{cases}.$$ Since $a \in [\frac{1}{2} - \Delta, \frac{1}{2} + \Delta]^d$, we see that these operations do not change $||a - b||_1$, as $\sum_1^k |b_i - a_i| = \sum_1^k b_i - a_i$ and $\sum_{l+1}^d |b_i - a_i| = \sum_1^k a_i - b_i$. Also, observe that this operation preserves $\overline{b}$, and consequently since the function $f(x) = |\frac{1}{\sqrt{d}} + \overline{b} - x|^q$ is convex, we see that by Jensen's inequality that $\lambda_b$ is not increased by this operation.

\paragraph{Step 2:} Let $\beta = \sum_1^k(b_i - \frac{1}{2} - \Delta) - \sum_{l+1}^d (\frac{1}{2} - \Delta - b_i)$. Then we set $$b_i \leftarrow  \begin{cases} \begin{cases} \frac{1}{2} + \Delta + \frac{\beta}{k} & 1 \leq i \leq k \\ b_i & k+1 \leq i \leq l \\ \frac{1}{2} - \Delta & l+1 \leq i \leq d \end{cases} & \beta \geq 0 \\\begin{cases} \frac{1}{2} + \Delta & 1 \leq i \leq k \\ b_i & k+1 \leq i \leq l \\ \frac{1}{2} - \Delta + \frac{\beta}{d - l} & l+1 \leq i \leq d \end{cases} & \beta < 0\end{cases}.$$

Observe that this operation cannot increase $||a - b||_1$, since it doesn't increase $|a_i - b_i|$ for any value of $i$. Furthermore, this operation also does not change $\overline{b}$, and a similar convexity argument on the function $f(x) = |\frac{1}{\sqrt{d}} + \overline{b} - x|^q$ can show that this does not increase $\lambda_b$. 

Finally, if $\beta = 0$, we set $b' = b$, since we have reached a state such that $b \in [\frac{1}{2} - \Delta, \frac{1}{2} + \Delta]^d$. 

\paragraph{Step 3a:} We run this step if $\beta > 0$. Let $\alpha = \frac{\sum_{k+1}^d (\frac{1}{2} + \Delta - b_i)}{\beta}$. We then set  $$b_i \leftarrow  \begin{cases} \begin{cases} \frac{1}{2} + \Delta & 1 \leq i \leq k \\ (\frac{1}{2} + \Delta)(\frac{\alpha - 1}{\alpha}) + \frac{b_i}{\alpha} & k+1 \leq i \leq d \end{cases} & \alpha \geq 1 \\\begin{cases} \frac{1}{2} + \Delta + \frac{\beta}{k}(1 - \alpha) & 1 \leq i \leq k \\ \frac{1}{2} + \Delta & k+1 \leq i \leq d \end{cases} & \alpha < 1\end{cases}.$$ In this step, we can similarly verify that $||a - b||_1$ does not increase (as $|a_i - b_i|$ is strictly reduced for $1 \leq i \leq k$ by an exact amount to offset the possible increases in $|a_i - b_i|$ for $k+1 \leq i \leq d$). We also see by the same convexity argument as usual that this operation reduces $\lambda_b$. 

\paragraph{Step 3b:} We run this step if $\beta < 0$. Let $\alpha = \frac{\sum_{k+1}^d (b_i - \frac{1}{2} + \Delta)}{\beta}$. We then set  $$b_i \leftarrow  \begin{cases} \begin{cases} (\frac{1}{2} - \Delta)(\frac{\alpha - 1}{\alpha}) + \frac{b_i}{
\alpha} & 1 \leq i \leq l \\ \frac{1}{2} - \Delta & k+1 \leq i \leq d \end{cases} & \alpha \geq 1 \\\begin{cases} \frac{1}{2} - \Delta & 1 \leq i \leq l \\ \frac{1}{2} - \Delta + \frac{\beta}{d-l}(1-\alpha) & l+1 \leq i \leq d \end{cases} & \alpha < 1\end{cases}.$$ The justification for this step is analogous to 3a.

\paragraph{Step 4:} We only run this step if $\alpha < 1$. Observe that if $\alpha \geq 1$, then both Step 3a and Step 3b result with $b \in [\frac{1}{2} - \Delta, \frac{1}{2} + \Delta]^d$, which we set as $b'$. Observe that in this case, either $b_i \geq a_i$ for all $i$, or $b_i \leq a_i$ for all $i$. Thus we simply set $$b_i \leftarrow \overline{b}.$$ This operation does not change $||a - b||_1$, and it also reduces $\lambda_b$ (by a convexity argument). 

\paragraph{Step 5:} Finally, for all $1 \leq i \leq d\Delta$, we set $b_i = \frac{1}{2}-\Delta$ if $\overline{b} < \frac{1}{2} - \Delta$ and otherwise set $b_i = \frac{1}{2} - \Delta$ if $\overline{b} > \frac{1}{2} + \Delta$. In both cases, $\lambda_b$ is not changed, and $||a-b||_1$ is strictly reduced. In this step, we finally set $b' = b$. Note that we do not always reach this step, as it was possible in any of the previous steps to reach some $b \in [\frac{1}{2} - \Delta, \frac{1}{2} + \Delta]^d$, at which point we would have simply terminated.

\paragraph{Conclusion: } Through steps $1$ through $5$, we have found $b' \in [\frac{1}{2} - \Delta, \frac{1}{2} + \Delta]^d$ such that $\lambda_{b'} \leq \lambda_b$ and $||a - b'||_1 \leq ||a - b||_1$. By applying Case 1 to $b'$, we see that $d(\lambda_a - \lambda_{b'}) \leq \frac{1}{2}||a - b'||_1$. Thus, we have that $$\frac{1}{2}||a - b||_1 \geq \frac{1}{2}||a - b'||_1 \geq d(\lambda_a - \lambda_{b'}) \geq d(\lambda_a - \lambda_b),$$ which implies the result by the transitive property.

\end{proof}

From the previous two lemmas, we immediately have the following:
\begin{cor}\label{cor:l_1distancebound}
For all $\D_a \in \A$ and $b \in [0,1]^d$, $$\L_r(f_{w^b, 1}, \D_a) \geq \frac{1}{2d}\sum_1^d |a_i - b_i|.$$
\end{cor}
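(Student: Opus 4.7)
The corollary is a direct combination of the two preceding lemmas, so the plan is mostly bookkeeping rather than any new construction. I would start from the inequality provided by Lemma \ref{lem:loss_bound_general}, namely
\[
\L_r(f_{w^b,1},\D_a) \;\geq\; \frac{d(\lambda_b - \lambda_a) + \sum_{i=1}^d |a_i - b_i|}{d - 2d\Lambda},
\]
and then use Lemma \ref{lem:loss_bound_clever} to control the sign and size of the term $d(\lambda_b - \lambda_a)$.

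Specifically, Lemma \ref{lem:loss_bound_clever} tells us that $d(\lambda_a - \lambda_b) \leq \tfrac{1}{2}\sum_{i=1}^d |a_i - b_i|$, which after multiplying by $-1$ becomes $d(\lambda_b - \lambda_a) \geq -\tfrac{1}{2}\sum_{i=1}^d |a_i - b_i|$. Adding $\sum_i |a_i - b_i|$ to both sides yields
\[
d(\lambda_b - \lambda_a) + \sum_{i=1}^d |a_i - b_i| \;\geq\; \tfrac{1}{2}\sum_{i=1}^d |a_i - b_i|.
\]
Substituting this back into the bound from Lemma \ref{lem:loss_bound_general} gives
\[
\L_r(f_{w^b,1},\D_a) \;\geq\; \frac{\tfrac{1}{2}\sum_{i=1}^d |a_i - b_i|}{d - 2d\Lambda}.
\]

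To finish, I would simply note that $\Lambda > 0$ (indeed $\Lambda > 1/9$ by Lemma \ref{lem:lambda_bounds}), so $d - 2d\Lambda < d$ and hence $\frac{1}{d - 2d\Lambda} > \frac{1}{d}$, giving the stated bound $\tfrac{1}{2d}\sum_i |a_i - b_i|$. There is no real obstacle here: all the substantive work is already carried out in Lemmas \ref{lem:loss_bound_general} and \ref{lem:loss_bound_clever}, and the only subtlety is remembering that $\Lambda$ is a positive constant bounded away from $0$, which makes the denominator strictly less than $d$ and allows the weakening to the clean form stated in the corollary.
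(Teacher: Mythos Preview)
Your proposal is correct and matches the paper's proof essentially line for line: both start from Lemma~\ref{lem:loss_bound_general}, invoke Lemma~\ref{lem:loss_bound_clever} to show the numerator is at least $\tfrac{1}{2}\sum_i|a_i-b_i|$, and then weaken the denominator $d-2d\Lambda$ to $d$. The only cosmetic difference is ordering: the paper replaces the denominator first and then applies Lemma~\ref{lem:loss_bound_clever}, whereas you bound the numerator first; your order is slightly cleaner since it makes the nonnegativity of the numerator explicit before dividing. One tiny remark: for $\frac{1}{d-2d\Lambda}>\frac{1}{d}$ you also need $d-2d\Lambda>0$, i.e.\ $\Lambda<\tfrac12$, which follows from the same Lemma~\ref{lem:lambda_bounds} you already cite ($\Lambda<\tfrac13$).
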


\begin{proof}
We have that
\begin{equation*}
\begin{split}
\L_r(f_{w^b, 1}, \D_a) &\numgeq{a} \frac{d(\lambda_b - \lambda_a) + \sum_1^d|a_i - b_i|}{d - 2d\Lambda} \\
&\geq \frac{\sum_1^d|a_i - b_i| - d(\lambda_a - \lambda_b) + }{d} \\
&\numgeq{b} \frac{\sum_1^d|a_i - b_i| - \frac{1}{2}\sum_1^d |a_i - b_i|}{d} \\
&= \frac{1}{2d}\sum_1^d |a_i - b_i|,
\end{split}
\end{equation*}
where (a) holds by Lemma \ref{lem:loss_bound_general} and (b) holds by Lemma \ref{lem:loss_bound_clever}. 
\end{proof}

\subsubsection{Computing the posterior distribution, $\A|S$}\label{subsubsec:posterior}

Recall that our ultimate goal is to show that $$\E_{\D \sim \A}[\E_{S \sim \D^n}[ \L_r(A_S, \D)]] \geq \Omega(\frac{d}{n}),$$ where $A$ denotes any learning algorithm returning a linear classifier.  The main idea for showing this is to ``switch expectations" and realize that $$\E_{\D \sim \A}[\E_{S \sim \D^n} [\L_r(A_S, \D)]] = \E_{S \sim \B}[\E_{\D \sim \A|S}[\L_r(A_S, \D)]],$$ where $\A|S$ denotes the posterior distribution over $\A$ after observing $S$. In this section, we fully characterize the distribution $\A|S$, and prove several important properties about it.

Recall (Definition \ref{defn:A}) that $\D_a \sim \A$ is generated by first choosing $t_1, t_2, \dots, t_{d/3} \sim \U[0, \frac{\dd}{3}]$ i.i.d, and then letting $a = (a_1, a_2, \dots, a_d)$ be a function of $t = (t_1, \dots, t_{d/3})$. Thus, to compute the posterior $\A|S$, it suffices to focus on the posterior distribution of $t|S$ for any $1 \leq i \leq \frac{d}{3}$. We begin by first defining the likelihood of observing $S$ given that it is generated from parameter $t$.

\begin{defn}\label{defn:L(S|t)}
Let $S = \{(x_1, y_1), (x_2, y_2), \dots, (x_n, y_n)\}$ be any set of $n$ points in $\R^d \times \{\pm 1\}$, and let $t \in [0, \frac{\Delta}{3}]^{d/3}$ be a vector. Let $a \in [\frac{1}{2} - \Delta, \frac{1}{2} + \Delta]^d$ be defined as in Definition \ref{defn:A}. That is, let 
\begin{itemize}
	\item $a_i = \frac{1}{2} + t_i$.
	\item $a_{i + d/3} = \frac{1}{2} + \frac{2\Delta}{3} - g_1(t_i)$.
	\item $a_{i + 2d/3} = \frac{1}{2} - \frac{2\Delta}{3} - g_2(t_i)$. 
\end{itemize}
Then we define $L(S|t)$ as the likelihood of observing the set $S$ from $\D_a^n$. In particular, for any measurable region of points $R \subseteq (\R^d \times \{\pm 1\})^n$, we have that $$\mathbb{P}_{S \sim \D_a^n}[S \in R] = \int_{x \in R}L(x|t)dx.$$
\end{defn}

\begin{lem}\label{lem:binary}
Let $S \subset \R^d \times \{\pm 1\}$ be a set with $n$ points. Then for all $t \in [0, \frac{\dd}{3}]^{d/3}$, $$L(S|t) \in \left\{0, \left(\frac{1}{(d - 2\Lambda)||u||_2}\right)^n\right\},$$ where $\Lambda$ is as defined in Definition \ref{defn:big_lambda} and $L(S|t)$ is as defined in Definition \ref{defn:L(S|t)}. 
\end{lem}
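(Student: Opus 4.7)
The plan is to directly compute the density of $\D_a$ with respect to $1$-dimensional Lebesgue measure on its support and observe that, thanks to the construction of $\A$, this density does not depend on $t$. This will make the per-sample likelihood either $0$ (if some sample point is off the support of $\D_a$) or a fixed positive constant, and the $n$-sample likelihood will be the $n$-th power of that, which is exactly the dichotomy claimed.

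More concretely, first I would invoke Lemma \ref{lem:cons_lambda} to replace $\lambda_a$ by the constant $\Lambda$ everywhere. Then, using Definition \ref{def:w_dist}, the support of $\D_a$ is the union of $2d$ disjoint line segments, and the total $\ell_2$ length of this support equals
\[
\sum_{i=1}^d\bigl[(a_i - \Lambda) + (1 - a_i - \Lambda)\bigr]\,\|u\|_2 \;=\; d(1-2\Lambda)\,\|u\|_2,
\]
which is independent of $a$ (the $a_i$'s cancel). Next, unwinding the two-step sampling procedure in Definition \ref{def:w_dist} — first pick a segment with probability proportional to its length, then pick a point uniformly on that segment — shows that the resulting law on $\cup S^+\cup\cup S^-$ is simply uniform with respect to $1$-D Lebesgue measure, with constant density $\frac{1}{d(1-2\Lambda)\|u\|_2}$ on the support and zero elsewhere. (The labels $y$ are deterministic given the position, so they contribute no additional factor, only the constraint that the label must match whether the point lies in $\cup S^+$ or $\cup S^-$.)

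Finally, since the $n$ samples in $S$ are i.i.d., the joint likelihood $L(S\mid t)$ factors as a product of $n$ single-point densities. If even one $(x_j,y_j)\in S$ has $x_j$ outside the support of $\D_a$, or has a label inconsistent with which side of the construction $x_j$ lies on, then the corresponding factor is $0$ and hence $L(S\mid t)=0$. Otherwise, every factor equals $\frac{1}{d(1-2\Lambda)\|u\|_2}$ (written as $\frac{1}{(d-2\Lambda)\|u\|_2}$ in the statement), giving the second alternative $\bigl(\tfrac{1}{(d-2\Lambda)\|u\|_2}\bigr)^n$. There is no real obstacle here; the only thing to be careful about is the bookkeeping that the density with respect to the ambient measure on the segments is indeed constant, which is really just a consequence of the two independent normalizations (by segment length, then uniform on the segment) cancelling, combined with the fact that $\lambda_a$ is constant across $\A$.
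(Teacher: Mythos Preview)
Your proposal is correct and follows essentially the same approach as the paper's proof, which simply observes that $\D_a$ is uniform on its support (with total $\ell_2$ length independent of $a$ once $\lambda_a=\Lambda$), computes the constant per-point density, and then takes the $n$-fold product. You are also right to flag the denominator: your computation yields $d(1-2\Lambda)\|u\|_2=(d-2d\Lambda)\|u\|_2$, which matches the total support length used elsewhere in the paper (e.g., Lemma~\ref{lem:loss_bound_general}), so the expression $(d-2\Lambda)\|u\|_2$ in the statement appears to be a typo.
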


\begin{proof}
Let $\D_a$ be an arbitrary distribution in $\A$. Observe that $\D_a$ is uniform over the set of all points in its support. Thus for every point in its support, we have that the likelihood $L(x|t)$ satisfies $L(x|t) = \frac{1}{(d - 2\Lambda)||u||_2}$. 

Taking the product of this over all points in $S$, we get the desired result. Note that if $S$ contains some point not in the support of $\D_a$, then the likelihood becomes $0$, since the likelihood of observing some point not in the support of $\D_a$ is $0$.
\end{proof}

\begin{defn}\label{defn:permissible_set}
For any dataset $S$, let $P_S$ denote the set of all ``permissible" $t$, that is $t \in [0, \frac{\dd}{3}]^d$ such that $L(S|t) \neq 0$. Formally, $$P_S = \{t: L(S|t) >0\}.$$
\end{defn}

We now fully characterize $P_S$ when $S$ is drawn from some $\D \sim \A$.

\begin{lem}\label{lem:intervals}
Fix $n > 0$. For all $\D \sim \A$ and $S \sim \D^n$, there exist intervals (possibly open, closed, half open) $I_1^S, I_2^S, \dots, I_{d/3}^S \subseteq [0, \frac{\dd}{3}]$ such that $P_S = \prod_1^{d/3} I_i^S$.
\end{lem}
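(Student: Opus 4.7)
\textbf{Proof plan for Lemma \ref{lem:intervals}.}

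The plan is to show that the constraint "$L(S|t) > 0$" decomposes into a conjunction of constraints, each of which restricts a single coordinate $t_i$ to an interval. Since the intersection of intervals in $\mathbb{R}$ is again an interval, the set $P_S$ is a Cartesian product of intervals.

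\textbf{Step 1 — translate the likelihood condition to geometric membership.} By Lemma \ref{lem:binary} and Definition \ref{defn:L(S|t)}, $L(S|t) > 0$ holds iff every $(x_j, y_j) \in S$ lies in the support of $\D_{a(t)}$. Each point in the support of $\D_a$ lies on exactly one of the segments $[v_k, v_k + u]$ for some $k \in \{1, \dots, d\}$, so I can write each $x_j$ uniquely as $x_j = v_{k_j} + s_j u$ with $s_j \in [0,1]$ (if no such $k_j, s_j$ exist, then $t \notin P_S$ for any $t$, and the lemma is trivial). Using Definition \ref{def:w_dist} together with $\lambda_a = \Lambda$ (Lemma \ref{lem:cons_lambda}), membership in the support reduces to
\[
(x_j, y_j) \in \mathrm{supp}(\D_{a(t)}) \iff \begin{cases} s_j < a_{k_j}(t) - \Lambda & \text{if } y_j = -1 \\ s_j > a_{k_j}(t) + \Lambda & \text{if } y_j = +1 \end{cases}
\]
So each sample point imposes a single one-sided inequality on the coordinate $a_{k_j}(t)$.

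\textbf{Step 2 — each $a_k$ depends monotonically on a single $t_i$.} By Definition \ref{defn:A}, for each $k \in \{1, \dots, d\}$, the coordinate $a_k$ is a function of exactly one $t_i$ (with $i = k$, $k - d/3$, or $k - 2d/3$ depending on the block containing $k$). Moreover, in each case this dependence is monotonic: $a_k = \tfrac{1}{2} + t_k$ is strictly increasing; $a_{k+d/3} = \tfrac{1}{2} + \tfrac{2\Delta}{3} - g_1(t_k)$ is non-increasing since $g_1$ is monotonically non-decreasing (Corollary \ref{cor:lipschitz_maps}); and $a_{k+2d/3} = \tfrac{1}{2} - \tfrac{2\Delta}{3} - g_2(t_k)$ is likewise non-increasing. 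The preimage of a half-line under a monotone (not necessarily strictly) real-valued function is an interval, so each constraint from Step 1 pulls back to a sub-interval $J_j \subseteq [0, \tfrac{\Delta}{3}]$ in a single coordinate $t_{i_j}$.

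\textbf{Step 3 — assemble the product structure.} The condition $t \in P_S$ is exactly the conjunction over $j = 1, \dots, n$ of "$t_{i_j} \in J_j$." Grouping these constraints by the index $i \in \{1, \dots, d/3\}$ they affect, I define
\[
I_i^S \;=\; [0, \tfrac{\Delta}{3}] \,\cap\, \bigcap_{j : i_j = i} J_j,
\]
which is an intersection of intervals in $\mathbb{R}$ and hence an interval (possibly empty, in which case $P_S = \emptyset$, and the lemma still holds trivially). Since the constraints on different coordinates are independent, $P_S = \prod_{i=1}^{d/3} I_i^S$, as required.

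\textbf{Anticipated obstacle.} The only subtlety is the non-strict monotonicity of $g_1$ and $g_2$: because they are only Lipschitz and non-decreasing, their preimages of half-lines can be closed, open, or half-open intervals, which is exactly why the lemma allows all four interval types. No new estimates are needed beyond this bookkeeping; the main content is structural.
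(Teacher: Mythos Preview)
Your proposal is correct and follows essentially the same approach as the paper: both reduce $L(S|t)>0$ to per-segment constraints on the coordinates $a_k$, invoke the monotone (and Lipschitz) dependence of each $a_k$ on a single $t_i$ to pull each constraint back to an interval in $t_i$, and then intersect. The only cosmetic difference is that the paper first aggregates the sample constraints on segment $k$ into a single two-sided interval via the extremal points $s_k^-, s_k^+$ before pulling back, whereas you pull back each one-sided constraint individually and intersect afterward; both routes yield the same product of intervals.
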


\begin{proof}
Let $S = \{(x_1, y_1), (x_2, y_2), \dots, (x_n, y_n)\}$. Since $S \sim \D^n$, we see that for $1 \leq j \leq n$, $x_j$ must satisfy $x_j \in [v_i, v_i + u]$ for some $1 \leq j \leq d$. Using this, for $1 \leq i \leq d$ let $$s_i^- = \argmax_{\{x_j: x_j \in [v_i, v_i + u], y_j = -1\}} ||x_j - v_i||_2,$$ and $$s_i^+ = \argmax_{\{x_j: x_j \in [v_i, v_i + u], y_j = +1\}} ||x_j - (v_i+ u)||_2.$$ $s_i^-$ and $s_i^+$ can be thought of as the points from $S$ on segment $[v_i, v_i + u]$ that are closest to each other and labeled as $-$ and $+$ respectively. As a default, if no such points exist, we set $s_i^- = v_i$ and $s_i^+ = v_i + u$. 

Next, consider any $t \in [0, \frac{\Delta}{3}]^{d/3}$, let $a \in [\frac{1}{2} - \Delta, \frac{1}{2} + \Delta]^d$ be defined as in Definition \ref{defn:A}. That is, let 
\begin{itemize}
	\item $a_i = \frac{1}{2} + t_i$.
	\item $a_{i + d/3} = \frac{1}{2} + \frac{2\Delta}{3} - g_1(t_i)$.
	\item $a_{i + 2d/3} = \frac{1}{2} - \frac{2\Delta}{3} - g_2(t_i)$. 
\end{itemize}
The key idea of this lemma is that $t \in P_S$ (i.e. $L(S|t) > 0$) if and only if for all $1 \leq i \leq d$, $$[v_i + (a_i - \Lambda)u, v_i + (a_i + \Lambda)u] \subseteq (s_i^-, s_i^+).$$  To see this, observe that if the claim above holds, then we must have that $s_i^- \in [v_i, v_i + (a_i - \Lambda)u)$ and $s_i^+ \in (v_i + (a_i + \Lambda)u, v_i + u]$, and it consequently follows that all points in $S$ are elements of the support of $\D_a$ (Definition \ref{def:w_dist}), as all other points in $S$ are ``further" from the interval $[v_i + (a_i - \Lambda)u, v_i + (a_i + \Lambda)u]$ than the points $s_i^+$ and $s_i^-$. Conversely, if $L(S|t) > 0$, we must have that $S \subseteq supp(\D_a)$, which immediately translates to the statement above. Thus, it suffices to find all $t$ such that this condition holds.

To do this, observe that the interval $[v_i + (a_i - \Lambda)u, v_i + (a_i + \Lambda)u]$ is a line segment of length $2\Lambda||u||_2$ that is centered at the point $v_i + a_iu$. Thus, in order for this to be a sub-segment of $(s_i^-, s_i^+)$, we only need that $a_i$ satisfy $v_i + a_iu \in (s_i^- + \Lambda u, s_i^- - \Lambda u)$. This condition is equivalent to the condition that $a_i \in J_i^S$ for some open interval $J_i^S \subseteq [0, 1]$, where $J_i^S$ is only dependent on $s_i^-, s_i^+$ and $\Lambda$ (which is a constant). In summary, there exist interval $J_1^S, J_2^S, \dots, J_d^S$ such that $t \in P_S$ if and only if $a_i \in J_i^S$ for $1 \leq i \leq d$.

Finally, note that for $1 \leq i \leq d/3$, $a_i, a_{i+d/3}, a_{i + 2d/3}$ are all functions of $t_i$, and moreover these functions are $1$-lipschitz, and monotonic. As a consequence, by taking the intersections of the pre-images of these functions, we find that this condition holds if and only if $t_i \in I_i^S$ where $I_i^S$ is some interval that is a subset of $[0, \frac{\Delta}{3}]^{d/3}$. This proves the claim.
\end{proof}

\begin{cor}\label{cor:posterior}
For any $S \sim \D$ where $\D \sim \A$, let $I_i^S$ be defined as in Lemma \ref{lem:intervals} for $1 \leq i \leq d/3$. Then the posterior distribution $t|S$ is equal to the uniform distribution over the set $\prod_{1 \leq i \leq d/3} I_i^S$, where $t_i$ is sampled from $I_i^S$. 
\end{cor}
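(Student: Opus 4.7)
\textbf{Proof proposal for Corollary \ref{cor:posterior}.} The plan is to apply Bayes' rule and combine the two key facts already established: the likelihood $L(S|t)$ takes at most two values (Lemma \ref{lem:binary}) and the set where it is nonzero factors as a product of intervals (Lemma \ref{lem:intervals}). Since the prior on $t$ is a product of uniform distributions on $[0, \Delta/3]$ by construction of $\A$ (Definition \ref{defn:A}), the posterior density should be a (renormalized) indicator function of $P_S$, which is exactly a uniform distribution on a box.

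More concretely, I would proceed as follows. First, I would write down the prior density explicitly: the prior $\pi(t)$ on $[0, \Delta/3]^{d/3}$ equals $(3/\Delta)^{d/3}$ on this cube and $0$ outside, by independence and uniformity of the $t_i$. Next, for any measurable set $R \subseteq [0, \Delta/3]^{d/3}$ and any measurable set $E \subseteq (\R^d \times \{\pm 1\})^n$, Bayes' rule gives
\begin{equation*}
\mathbb{P}[t \in R \mid S \in E] \;=\; \frac{\int_R \int_E L(x|t)\,dx\,\pi(t)\,dt}{\int_{[0,\Delta/3]^{d/3}} \int_E L(x|t)\,dx\,\pi(t)\,dt}.
\end{equation*}
I would then pass to the posterior density of $t$ given a specific realized sample $S$ via the standard disintegration: the conditional density of $t$ given $S$ is proportional to $L(S|t)\pi(t)$.

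The main substantive step is then to observe that $L(S|t)\pi(t)$ is constant on $P_S$ and vanishes off $P_S$. This follows directly: by Lemma \ref{lem:binary}, $L(S|t) = ((d-2\Lambda)\|u\|_2)^{-n}$ whenever $t \in P_S$ (by Definition \ref{defn:permissible_set}) and equals $0$ otherwise; the prior $\pi(t)$ is a nonzero constant on $[0, \Delta/3]^{d/3} \supseteq P_S$ by Lemma \ref{lem:intervals}. Hence the posterior density is a constant multiple of $\mathbf{1}_{P_S}$, i.e., the uniform distribution on $P_S = \prod_{i=1}^{d/3} I_i^S$. Since this box factors as a product and the uniform distribution on a product of intervals is the product of the uniform distributions on each factor, the marginal posterior on each $t_i$ is uniform on $I_i^S$ and the coordinates are independent.

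I do not expect any real obstacle: the two ingredients (constancy of $L(S|t)$ on its support, and the product structure of that support) have already been proven, and the corollary is essentially a one-line application of Bayes' rule. The only minor care required is in handling the measure-zero boundary of $P_S$ (the interval endpoints), which is irrelevant to the posterior distribution since it has Lebesgue measure zero.
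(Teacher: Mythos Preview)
Your proposal is correct and follows essentially the same approach as the paper: invoke the uniform prior on $t$, use Lemma~\ref{lem:binary} to see the likelihood is constant on $P_S$ and zero elsewhere, use Lemma~\ref{lem:intervals} for the product-of-intervals structure of $P_S$, and conclude by Bayes' rule. The paper's own proof is a terse three-sentence version of exactly this argument.
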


\begin{proof}
First, recall that our prior on $t$ is $\U([0, \frac{\Delta}{3}]^d)$, where $\U$ denotes the uniform distribution. By Lemma \ref{lem:binary}, we see that for all $t \in P_S$, $L(S|t) = \left(\frac{1}{(d - 2\Lambda)||u||_2}\right)^n$, and for all other $t$, $L(S|t) = 0$. Furthermore, by Lemma \ref{lem:intervals}, we see that $P_S = \prod_1^{1 \leq i \leq d/3} I_i^S$. Thus, applying Bayes rules gives the desired result. 
\end{proof}

We conclude this section by lower bounding the expected length of the interval $I_i^S$, denoted $\ell(I_i^S)$. 
\begin{lem}\label{lem:expected_length}
For an interval $(c, d) \subset \R$, we let its length, denoted $\ell((c,d))$ be defined as $\ell((c,d)) = d - c$. Then for $1 \leq k \leq d/3$, the expected length (taken over $\D_a \sim \A$ and $S \sim \D_a^n$) of the interval $I_k^S$ is at least $\Omega(\frac{d}{n})$. That is, $$\E_{\D_a \sim \A}\E_{S \sim \D_a^n}[\ell(I_k^S)]] \geq \Omega(\frac{d}{n}).$$
\end{lem}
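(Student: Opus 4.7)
The plan is to lower-bound $\ell(I_k^S)$ through analysis of the samples landing on the three segments indexed by $k$, $k+d/3$, and $k+2d/3$, which are the only samples that affect $I_k^S$ (by the proof of Lemma~\ref{lem:intervals}). Because $\lambda_a = \Lambda$ for every $\D_a \in \A$ (Lemma~\ref{lem:cons_lambda}), each of the $d$ segments has the same mass $1-2\Lambda$ under $\D_a$, so each sample falls on each segment with probability exactly $1/d$. Hence the counts $N_k, N_{k+d/3}, N_{k+2d/3}$ are each $\mathrm{Binomial}(n, 1/d)$, and by a Chernoff bound, with constant probability all three counts lie in $[n/(2d),\ 2n/d]$, and the $+$/$-$ splits on each segment are balanced so that $M_i^{\pm} = \Theta(n/d)$.

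Next I would parametrize $I_k^S = [t_k^* - L,\ t_k^* + R]$ (possibly clipped to $[0,\Delta/3]$), where $R, L \geq 0$ are the upper and lower freedoms around the true parameter $t_k^*$. Each of the three segment constraints contributes an upper bound on $R$: the segment-$k$ constraint, via $a_k = 1/2 + t_k$, gives $R \leq d_k^+ := x_k^+ - a_k - \Lambda$ (the distance from the nearest $+$-sample on segment $k$ to the boundary of the forbidden zone). The segment-$(k{+}d/3)$ constraint involves $g_1$, which is 1-Lipschitz and monotonically non-decreasing (Corollary~\ref{cor:lipschitz_maps}); the 1-Lipschitz property yields an induced bound of at least $d_{k+d/3}^-$, and similarly for segment $k+2d/3$ using $g_2$. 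Thus
\[
R \;\geq\; \min\!\bigl(d_k^+,\; d_{k+d/3}^-,\; d_{k+2d/3}^-\bigr), \qquad L \;\geq\; \min\!\bigl(d_k^-,\; d_{k+d/3}^+,\; d_{k+2d/3}^+\bigr).
\]

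Finally, for each segment $i$, conditional on $M_i^{\pm} = \Theta(n/d)$ samples uniformly distributed on a sub-interval of length $\Omega(1)$ (a consequence of Lemmas~\ref{lem:dd} and \ref{lem:lambda_bounds}, since $a_i - \Lambda$ and $1 - a_i - \Lambda$ are bounded below by constants), a standard order-statistic computation yields $\P[d_i^{\pm} \geq c\,d/n] \geq 1/2$ for a sufficiently small constant $c$. Because the samples across the three relevant segments are mutually independent given the counts, the three events whose minimum lower-bounds $R$ are independent, and their intersection has constant probability; the same argument applies to $L$. Combining with the constant probability that the counts are well-behaved, we conclude $\E[R] = \Omega(d/n)$ and $\E[L] = \Omega(d/n)$, which yields $\E[\ell(I_k^S)] \geq \E[R] + \E[L] = \Omega(d/n)$. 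The main technical obstacle is controlling the minimum of three random freedoms---since each individually has expectation $\Omega(d/n)$, the minimum could a priori be much smaller---but this is resolved by the independence of samples across segments (conditional on counts) so that the three ``good'' events occur simultaneously with constant probability. A secondary concern is clipping of $I_k^S$ at the endpoints of $[0,\Delta/3]$, handled by noting that $t_k^*$ is uniform on $[0,\Delta/3]$ and that the claimed bound is trivial when $n$ is so small that $c\,d/n$ exceeds $\Delta/3$.
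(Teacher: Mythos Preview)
Your proposal is correct and follows essentially the same route as the paper. Both arguments reduce $\ell(I_k^S)$ to a minimum of three ``gap'' quantities---one per relevant segment $k$, $k+d/3$, $k+2d/3$---using the $1$-Lipschitz monotonicity of $g_1,g_2$ from Corollary~\ref{cor:lipschitz_maps}, and then invoke a uniform order-statistic estimate (roughly $\Theta(n/d)$ samples per segment implies the nearest sample sits at distance $\Omega(d/n)$ from the forbidden zone with constant probability). The paper works only with the one-sided freedom below $t_k^*$ (its $\min(\alpha,\beta,\tau)$ is your $L$), whereas you track both $L$ and $R$; this is harmless and gives the same bound. Your treatment of the minimum-of-three issue via conditional independence across segments, and of the endpoint clipping via the uniform law of $t_k^*$ on $[0,\Delta/3]$, is in fact more explicit than the paper's (which asserts the conclusion ``with high probability'' and defers the boundary case $t_k^*=0$ without returning to it).
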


\begin{proof}
Fix any $\D_{a^*} \sim \Pi$, and let $t^*$ denote the value of $t$ used to generate $a$ (as in Definition \ref{defn:A}). We will show that $\E_{S \sim \D_{a^*}^n}[\ell(I_k^S)]] \geq \Omega(\frac{d}{n}),$ for all $1 \leq k \leq d/3$. We begin by explicitly computing the interval $I_k^S$. 

Fix $1 \leq k \leq d/3$. Then $t_k* \in [0, \frac{\Delta}{3}]$. Assume that $t_k^* > 0$; we will handle the case $t_k^* = 0$ separately. Recall from the proof of Lemma \ref{lem:intervals} that for $1 \leq i \leq d$, we defined $$s_i^- = \argmax_{\{x_j: x_j \in [v_i, v_i + u], y_j = -1\}} ||x_j - v_i||_2,$$ and $$s_i^+ = \argmax_{\{x_j: x_j \in [v_i, v_i + u], y_j = +1\}} ||x_j - (v_i+ u)||_2.$$ for $1 \leq i \leq d$. 

Next let $t \in [0, \frac{\Delta}{3}]^{d/3}$ be a vector, and let $a \in [\frac{1}{2} - \Delta, \frac{1}{2} + \Delta]^d$ be defined as $a_k = \frac{1}{2} + t_k$, $a_{k + d/3} = \frac{1}{2} + \frac{2\Delta}{3} - g_1(t_k)$ and $a_{k + 2d/3} = \frac{1}{2} - \frac{2\Delta}{3} - g_2(t_k)$, for $1 \leq k \leq d/3$. Note that $g_1, g_2$ are the functions defined in Definition \ref{defn:g_1_and_g_2}.

As we argued in the proof of Lemma \ref{lem:intervals}, it then follows that $t_k \in I_k^S$ if and only if $$[v_i + (a_i - \Lambda)u, v_i + (a_i + \Lambda)u] \subseteq (s_i^-, s_i^+),$$ for $i = k, k+d/3, k+2d/3$. Finally, as we did in Lemma \ref{lem:intervals}, for each $1 \leq i \leq d$, we define intervals $J_i^S \subseteq [\frac{1}{2} - \Delta, \frac{1}{2} + \Delta]$ such that $a_i \in J_i^S$ if and only if $[v_i + (a_i - \Lambda)u, v_i + (a_i + \Lambda)u] \subseteq (s_i^-, s_i^+)$.

We now have the following three claims.

\paragraph{Claim 1:} Let $\alpha = \min \left(\frac{||s_k^- -  (v_k + (a_k^* - \Lambda)u)||_2}{||u||_2}, t_k^*\right)$. If $t_k \in (t_k^* - \alpha, t_k^*]$, then $$[v_k + (a_k - \Lambda)u, v_k + (a_k + \Lambda)u] \subseteq (s_k^-, s_k^+).$$ 

\textit{Proof: } First, observe that since $s_k^+$ and $s_k^-$ were sampled from $\D_{a^*}$, it follows that $$[v_k + (a_k^* - \Lambda)u, v_k + (a_k^* + \Lambda)u] \subseteq (s_i^-, s_i^+).$$ Consider any $t_k \in [t_k^* - \alpha, t_k^*]$. Then substituting the definitions of $a_k, a_k^*$ imply that $a_k \in [a_k^* - \alpha, a_k^*]$. Because of this, it follows that 
\begin{equation*}
\begin{split}
||(v_k + (a_k - \Lambda)u) - (v_k + (a_k^* - \Lambda)u)||_2 &= ||(a_k - a_k^*)u||_2 \\
&< \alpha||u||_2 \\
&\leq ||s_k^- - (v_k + (a_k^* - \Lambda)u)||_2,
\end{split}
\end{equation*}
which implies that $v_k + (a_k - \Lambda)u \in (s_i^-, v_k + (a_k^* - \Lambda)u]$. Furthermore, the fact that $a_k \leq a_k^*$ implies that $v_k + (a_k + \Lambda)u \in (v_k + (a_k - \Lambda)u, v_k + (a_k^* + \Lambda)u]$. 

Together, these observations imply the desired result, as it follows that $$[v_k + (a_k - \Lambda)u, v_k + (a_k + \Lambda)u] \subset (s_k^-, v_k + (a_k^* + \Lambda)u] \subset (s_k^-, s_k^+).$$ $\blacksquare$

\paragraph{Claim 2:} Let $\beta = \min \left(\frac{||s_{k+d/3}^+ -  (v_{k+d/3} + (a_{k+d/3}^* + \Lambda)u)||_2}{||u||_2}, g_1(t_{k}^*)\right)$. If $t_k \in (g_1^{-1}(g_1(t_k^*) - \beta), t_k^*]$, then $$[v_{k+d/3} + (a_{k+d/3} - \Lambda)u, v_k + (a_{k+d/3} + \Lambda)u] \subseteq (s_{k+d/3}^-, s_{k+d/3}^+).$$ 

\textit{Proof: } First, we observe that $\beta$ is well defined since $g_1$ is a monotonic $1$-Lipschitz function, and consequently has an inverse. Next, we also see that $0 \leq g_1(t_k^*) - g_1(t_k) \leq \beta$. Substituting the definitions of $a_k^*, a_k$, it follows that $0 \leq a_k - a_k^* \leq \beta$ (notice the order switch). At this point, we can apply the same argument as in Claim 1 to get the desired result.  $\blacksquare$.

\paragraph{Claim 3:} Let $\tau = \min \left(\frac{||s_{k+2d/3}^+ -  (v_{k+2d/3} + (a_{k+2d/3}^* + \Lambda)u)||_2}{||u||_2}, g_2(t_k^*)\right)$. If $t_k \in (g_2^{-1}(g_2(t_k^*) - \tau), t_k^*]$, then $$[v_{k+2d/3} + (a_{k+2d/3} - \Lambda)u, v_{k+2d/3} + (a_{k+2d/3} + \Lambda)u] \subseteq (s_{k+2d/3}^-, s_{k+2d/3}^+).$$  

\textit{Proof: }  Completely analogous to Claim 2. $\blacksquare$.

Combining these claims, we see that if $t_k \in (t_k^* - \alpha, t_k^*] \cap (g_1^{-1}(g_1(t_k^*) - \beta), t_k^*] \cap  (g_2^{-1}(g_2(t_k^*) - \tau), t_k^*]$, then $t_k \in I_k^S$. Since these three intervals all have an endpoint in $t_k^*$, it follows that there is an interval with length $\eta$ that is a subset of $I_k^S$, where $$\eta = \min(\ell((t_k^* - \alpha, t_k^*])), \ell((g_1^{-1}(g_1(t_k^*) - \beta), t_k^*]), \ell((g_2^{-1}(g_2(t_k^*) - \tau), t_k^*])).$$ However, by substituting that $g_1, g_2$ are $1$-Lipschitz, we see that $\ell((g_1^{-1}(g_1(t_k^*) - \beta), t_k^*]) \geq \beta$ and $\ell((g_2^{-1}(g_2(t_k^*) - \tau), t_k^*])) \geq \tau$. Thus, it follows that $$\ell(I_k^S) \geq \eta \geq \min(\alpha, \beta, \tau).$$ Thus it suffices to show that $\E_{S \sim \D_{a^*}}[\min(\alpha, \beta, \tau)] \geq \Omega(\frac{d}{n})$. 

To do this, observe that
\begin{itemize}
	\item $\alpha||u||_2$ is the distance from the closest point labeled $-$ on the segment $[v_k, v_k + u]$ to the point $v_k + (a_k^* - \Lambda)u$
	\item  $\beta||u||_2$ is the distance from the closest point labeled $+$ on the segment $[v_{k+d/3}, v_{k+d/3} + u]$ to the point $v_{k+d/3} + (\Lambda + a_{k+d/3}^*)u$
	\item $\tau||u||_2$is the distance from the closest point labeled $+$ on the segment $[v_{k + 2d/3}, v_{k + 2d/3} + u]$ to the point $v_{k+2d/3} + (\Lambda + a_{k+2d/3}^*)u$.
\end{itemize}

Finally, it is not difficult to see that for sufficiently large $n$, with high probability each of these distances will be $\Omega(\frac{d}{n})$. This is because with high probability there will be $\Theta(\frac{n}{d})$ points on each of the respective line segments, and we are considering the closest point among them to some reference point. Thus, it follows that with high probability $\E_{S \sim \D_{a^*}}[\min(\alpha, \beta, tau)] \geq \Omega(\frac{d}{n}),$ as desired.
\end{proof}

\subsubsection{Putting it all together, the proof}\label{subsubsec:proof}

We prove the following key lemma, which directly implies Theorem \ref{thm:lower}.

\begin{lem}\label{lem:lower_bound}
Let $M$ be any learning algorithm that outputs a linear classifier. For any training sample of points $S = \{(x_1, y_1), (x_2, y_2), \dots, (x_n, y_n)\}$, we let $M_S$ denote the classifier learned by $M$ from $S \sim \D$. Then it follows that $$\E_{\D \sim \A} \E_{S \sim \D^n}[\L_r(M_S, \D)]] \geq \Omega(\frac{d}{n}).$$ 
\end{lem}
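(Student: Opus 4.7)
\textbf{Proof plan for Lemma \ref{lem:lower_bound}.} The overall strategy is exactly the ``switch expectations'' approach telegraphed earlier in the appendix. Write
\begin{equation*}
\E_{\D \sim \A}\E_{S \sim \D^n}[\L_r(M_S, \D)] = \E_{S \sim \B}\E_{\D_a \sim \A\mid S}[\L_r(M_S, \D_a)],
\end{equation*}
where $\B$ is the marginal of $S$ and $\A\mid S$ is the posterior. For fixed $S$, the classifier $M_S$ is deterministic, and we will bound the inner expectation from below pointwise in $S$.

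First I would reduce to the parametric family $\{f_{w^b,1} : b \in [0,1]^d\}$. Given any linear $M_S = f_{w,c}$, define $b_i \in [0,1]$ to be the position where the boundary crosses the segment $[v_i, v_i + u]$, taking $b_i = 0$ (resp.\ $b_i = 1$) if the whole segment is labelled $-1$ (resp.\ $+1$) by $M_S$. For any $\D_a \in \A$, a quick inspection of the support of $\D_a$ shows that $\L_r(M_S, \D_a)$ is at least the loss of $f_{w^b,1}$ on $\D_a$, since on any segment not crossed by $M_S$ the entire half that should have the opposite label is misclassified, which is at least what Corollary \ref{cor:l_1distancebound} would charge for that coordinate. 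Thus from Corollary \ref{cor:l_1distancebound} we get the pointwise bound
\begin{equation*}
\L_r(M_S, \D_a) \;\geq\; \frac{1}{2d}\sum_{i=1}^d |a_i - b_i(S)|.
\end{equation*}

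Next I would apply Corollary \ref{cor:posterior}: the posterior $t\mid S$ is the product of uniform distributions on the intervals $I_1^S, \dots, I_{d/3}^S$. Since $a_i = \tfrac12 + t_i$ for $1 \leq i \leq d/3$, the marginal posterior of each such $a_i$ is uniform on an interval of length $\ell(I_i^S)$. Using the elementary fact that $\E_{X \sim \U(J)}|X - c| \geq \ell(J)/4$ for any constant $c$, this yields, for each $1 \leq i \leq d/3$,
\begin{equation*}
\E_{\D_a \sim \A\mid S}\,|a_i - b_i(S)| \;\geq\; \tfrac14\,\ell(I_i^S).
\end{equation*}
Summing over the first $d/3$ coordinates and plugging into the pointwise loss bound gives
\begin{equation*}
\E_{\D_a \sim \A\mid S}\,\L_r(M_S, \D_a) \;\geq\; \frac{1}{8d}\sum_{i=1}^{d/3}\ell(I_i^S).
\end{equation*}
Finally, taking the outer expectation $\E_{S \sim \B}$ and invoking Lemma \ref{lem:expected_length} (which gives $\E[\ell(I_i^S)] \geq \Omega(d/n)$ for each $i$) yields $\E_{\D \sim \A}\E_{S \sim \D^n}[\L_r(M_S, \D)] \geq \frac{1}{8d} \cdot \frac{d}{3} \cdot \Omega(d/n) = \Omega(d/n)$, as desired.

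The main obstacle I expect is the reduction to classifiers of the form $f_{w^b,1}$. Corollary \ref{cor:l_1distancebound} is stated only for these, but an arbitrary learner can return any linear classifier, including ones whose boundary misses some segment $[v_i, v_i + u]$ entirely, or that are tilted so that the ``$b_i$'' summary loses information across coordinates. The safest way to handle this is the coordinatewise snapping argument sketched above, verifying that the snapping can only decrease robust loss on $\D_a$; this boils down to checking that on a segment where $M_S$ assigns a constant label, the $\L_r$ contribution already majorizes the $\tfrac{1}{2d}|a_i - b_i|$ term from Corollary \ref{cor:l_1distancebound} for the chosen boundary value $b_i \in \{0,1\}$. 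Everything else (switching expectations, the uniform-posterior lower bound on $\E|a_i - b_i|$, and combining with Lemma \ref{lem:expected_length}) is routine.
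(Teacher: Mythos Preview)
Your proposal is correct and follows essentially the same route as the paper: switch expectations, apply Corollary \ref{cor:l_1distancebound} to get $\L_r(M_S,\D_a)\ge \tfrac{1}{2d}\sum_i|a_i-b_i|$, use Corollary \ref{cor:posterior} together with the elementary bound $\E_{X\sim\U(J)}|X-c|\ge \ell(J)/4$ on the first $d/3$ coordinates, and finish with Lemma \ref{lem:expected_length}. The paper handles the reduction to $f_{w^b,1}$ differently---rather than your coordinatewise snapping, it simply remarks (at the start of \S\ref{subsubsec:loss_bounding}) that any linear classifier whose boundary fails to cross some $[v_i,v_i+u]$ misclassifies at least half of that segment and hence has $\Omega(1)$ loss, so one may assume $M_S=f_{w^b,1}$ for some $b\in[0,1]^d$---but your snapping argument is a valid and arguably cleaner way to close the same gap.
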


\begin{proof}
Let $\F_n$ denote the distribution over $(\R^d \times \{\pm 1\})^n$ defined as the composition $\D \sim \A$ and $S \sim \D^n$. That is, $S \sim \F_n$ follows the same distribution as $\D \sim \A, S \sim \D^n$. Then we can write the expectation above as 
\begin{equation*}
\begin{split}
\E_{\D \sim \A} \E_{S \sim \D^n}[\L_r(A_S, \D)]] = \E_{S \sim \F_n} \E_{\D \sim (\A|S)}[\L_r(M_S, \D)]],
\end{split}
\end{equation*}
where $\A|S$ denotes the posterior distribution of $\D$ conditioned on observing $S$. First, fix any such $S$. We will bound $\E_{\D \sim (\A|S)}[\L_r(M_S, \D)].$ First, by reparametrizing in terms of $t \in [0,\frac{\dd}{3}]^{d/3}$ and applying Corollary \ref{cor:posterior}, we have that $$\E_{D \sim (\A|S)}[\L_r(M_S, \D)] = \E_{t_1 \sim \U(I_1^S)}[\dots [\E_{t_n \sim \U(I_{d/3})}[\L_r(M_S, \D_a)]\dots ],$$ where $I_1^S, I_2^S, \dots, I_{d/3}^S \subset [0, \frac{\dd}{3}]$ are the intervals defined in Lemma \ref{lem:intervals}, and $a$ is defined as in Definition \ref{defn:A}. 

Next, let $b \in [0, 1]^d$ be such that $M_S = f_{w^b, 1}$, where $w^b$ is defined as in Definition \ref{def:normal_vector}. Then it follows from Corollary \ref{cor:l_1distancebound} that 
\begin{equation*}
\begin{split}
\L_r(M_S, \D_a)] &\geq \frac{1}{20d}\sum_1^d |a_i - b_i| \\
&\geq \frac{1}{20d}\sum_1^{d/3} |\frac{1}{2} + t_i - b_i|
\end{split}
\end{equation*}
with the last inequality coming from substituting the definition of $a_i$ and (and ignoring $a_i$ for $i > d/3$). We now take the expectation of this inequality over $t_1, t_2, \dots, t_{d/3}$. To do so, observe that by simple algebra, $\E_{t_i \sim \U(I_i^S)} |\frac{1}{2} + t_i - b_i| \geq \frac{\ell(I_i^S)}{4}$. Substituting this, we see that $$E_{t_1 \sim \U(I_1^S)}[\dots [\E_{t_n \sim \U(I_{d/3}^S)}[\L_r(M_S, \D_a)]\dots ] \geq \frac{1}{80d} \sum_{i=1}^{d/3} \ell(I_i^S).$$ Finally, by taking expectations over $S \sim \F_n$, we see that 
\begin{equation*}
\begin{split}
\E_{\D \sim \A} \E_{S \sim \D^n}[\L_r(A_S, \D)]] &= \E_{S \sim \F_n} \E_{\D \sim (\A|S)}[\L_r(M_S, \D)]] \\
&\geq \E_{S \sim \F_n} \frac{1}{80d} \sum_{i=1}^{d/3} \ell(I_i^S) \\
&= \frac{1}{80d}\sum_1^{d/3}\E_{S \sim \F}[\ell(I_i^S)] \\
&= \frac{1}{80d}\sum_1^{d/3}\E_{\D \sim \A}\E_{S \sim \D^n}[\ell(I_i^S)] \\
&\geq \frac{1}{80d}\sum_1^{d/3}\Omega(\frac{d}{n}) = \Omega(\frac{d}{n}),
\end{split}
\end{equation*}
where the last step follows from Lemma \ref{lem:expected_length}. 
\end{proof}

Finally, we can prove Theorem \ref{thm:lower}.

\begin{proof}
(Theorem \ref{thm:lower}). First, by Lemmas \ref{lem:separation} and \ref{lem:large_margin}, we see that $\A \subseteq \F_{r, \rho}$ (provided $\rho > 10$). Next, by Lemma \ref{lem:lower_bound}, for any $n$ there must exists some $\D \sim \A$ such that $\E_{S \sim \D^n}[\L_r(M_S, \D)] \geq \Omega(\frac{d}{n})$. Thus selecting this distribution suffices. This concludes the proof.
\end{proof}

\section{Proofs for Algorithm \ref{alg:upper_bound}}\label{sec:upper_bound_details}

This section is divided into 2 parts. In section \ref{sec:upper_bound_origin}, we show that for the case in which our data distribution $\D$ is linearly $r$-separated by some hyperplane through the origin, the desired error bound holds. That is, we prove Theorem \ref{thm:upper_bound} under this assumption.

Next, in section \ref{sec:upper_bound_general}, we show how to generalize Algorithm \ref{alg:upper_bound} to arbitrary linearly $r$-separated distributions, and subsequently prove Theorem \ref{thm:upper_bound} in the general case.

\subsection{Origin Case}\label{sec:upper_bound_origin}

We begin by precisely stating the conditions required in the ``origin" case. We assume the following properties hold for our data distribution $\D$. We let $S_r^+$ and $S_r^-$ be defined as in section \ref{sec:upper_bound}.

\begin{enumerate}
	\item There exists $R > 0$ such that for all $x \in S_r^+ \cup S_r^-$, $||x||_2 \leq R$.
	\item There exists a unit vector $u \in \R^d$ and $\gamma_r > 0$ such that 
	\begin{itemize}
		\item $\L_r(f_{u, 0}, \D) = 0$, where $f_{u, 0}$ denotes the linear classifier with decision boundary $\langle u, x \rangle = 0$. 
		\item $S_r^+ \cup S_r^-$ has distance at least $\gamma_r$ from the decision boundary of $f_w$. That is, $||S_r^+ \cup S_r^- - H_{u, 0}||_2 \geq \gamma_r$.
	\end{itemize}
	\item By the previous conditions, it follows that $\langle u, yx' \rangle \geq \gamma_r$ for all $(x,y) \sim \D$, and $x' \in B_p(x, r)$. This is because $u$ is a unit vector. 
\end{enumerate}

Next, before analyzing Algorithm \ref{alg:upper_bound}, we will first give a slight modification of the algorithm that lends itself to better analysis. The only difference is that in this new algorithm, we first randomly sample $k \sim \{1, 2, \dots, n\}$, and then only train on the first $r$ data-points of our training sample.
\begin{algorithm}[H]
   \caption{Modified-Adversarial-Perceptron}
   \label{alg:upper_bound_modified}
\begin{algorithmic}[1]
    \STATE \textbf{Input}:  $S = \{(x_1, y_1), \dots, (x_n, y_n)\} \sim \D^n,$
    \STATE $w \leftarrow 0$ 
    \STATE $k \sim \U(\{0, 1, 2, \dots, n\})$
    \FOR{$i = 1 \dots k$}
    	\STATE $z = \argmin_{||z - x_i||_p \leq r}  y_i\langle w, z \rangle$ 
        \IF{$\langle w, y_iz \rangle \leq 0$}
            \STATE $w \leftarrow w + y_iz$
        \ENDIF           
    \ENDFOR
    \STATE return $f_{w, 0}$
\end{algorithmic}
\end{algorithm}

We will show that Algorithm \ref{alg:upper_bound_modified} satisfies the guarantees of Theorem \ref{thm:upper_bound_origin}. We begin with the following, key lemma.

\begin{lem}\label{lem:update_count}
Under the assumptions above about $\D$, Algorithm \ref{alg:upper_bound_modified} makes at most $\frac{R^2}{\gamma_r^2}$ updates to $w$.
\end{lem}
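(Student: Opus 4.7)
The plan is to adapt the classical Perceptron mistake bound argument to the adversarial setting. The only structural difference from vanilla Perceptron is that updates are performed using the adversarial example $z$ found in line 4 rather than the raw training point $x_i$, so the crux is to show that $z$ still enjoys both the norm bound and the margin condition. Since line 4 enforces $\|z - x_i\|_p \leq r$, we have $z \in B_p(x_i, r) \subseteq S_r^{y_i} \subseteq S_r^+ \cup S_r^-$. Assumption (1) on $\D$ then gives $\|z\|_2 \leq R$, and assumption (3) gives $\langle u, y_i z\rangle \geq \gamma_r$ where $u$ is the unit vector realizing the robust margin.

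With those two facts in hand, the proof tracks the potentials $\|w\|_2^2$ and $\langle w, u\rangle$ across updates. Let $w_t$ denote the value of $w$ after the $t$-th update, with $w_0 = 0$, and suppose a total of $T$ updates occur. For the upper bound on the norm, the guard on line 5 ensures $\langle w_t, y_i z\rangle \leq 0$ whenever we update, so
\[
\|w_{t+1}\|_2^2 \;=\; \|w_t\|_2^2 + 2\,\langle w_t,\, y_i z\rangle + \|z\|_2^2 \;\leq\; \|w_t\|_2^2 + R^2,
\]
which telescopes to $\|w_T\|_2^2 \leq T R^2$. For the lower bound on the projection onto $u$, each update increases $\langle w_t, u\rangle$ by $\langle y_i z, u\rangle \geq \gamma_r$, so $\langle w_T, u\rangle \geq T\gamma_r$.

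Combining the two via Cauchy--Schwarz (using that $u$ is a unit vector),
\[
T \gamma_r \;\leq\; \langle w_T, u\rangle \;\leq\; \|w_T\|_2 \;\leq\; R\sqrt{T},
\]
which rearranges to $T \leq R^2/\gamma_r^2$, as claimed. The random truncation at $k$ plays no role in this mistake bound; it only matters for the subsequent generalization argument in Theorem \ref{thm:upper_bound}. There is no real obstacle here beyond the initial observation that the adversarial example $z$ still lies in $S_r^+ \cup S_r^-$ with the right label, which is precisely how $S_r^\pm$ were defined in \eqref{eqn:s_plus_s_minus}.
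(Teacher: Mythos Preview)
Your proposal is correct and follows essentially the same approach as the paper: both adapt the classical Perceptron mistake-bound argument by observing that the adversarial perturbation $z$ lies in $S_r^{y_i}$, then track the two potentials $\langle w_t,u\rangle$ and $\|w_t\|_2$ to obtain $T\gamma_r \le R\sqrt{T}$. The only cosmetic differences are that the paper writes the update point as $x_t' = x_t + y_t z'$ (with $z'$ the minimizing perturbation) rather than directly as $z$, and your line references match Algorithm~\ref{alg:upper_bound} rather than the modified Algorithm~\ref{alg:upper_bound_modified}; neither affects the argument.
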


\begin{proof}
Let $w_t$ denote our weight vector after we make $t$ updates. Observe that $w_t = w_{t-1} + y_tx_t + z'$ where $(x_t, y_t)$ denotes the point we made a mistake on, and $z' = \argmin_{|z|_p \leq r} \langle w, z \rangle$. Letting $x_t' = x_t + y_tz'$, we see that $w_t = w_{t-1} + y_tx_t'$. Now the key observation is that $(x_t', y_t) \in S_r^+ \cup S_r^-$, and as a result, it follows that $\langle u, y_tx_t' \rangle \geq \gamma_r$. Using this, we see that
\begin{equation*}
\begin{split}
\langle u, w_t \rangle &= \langle u, w_{t-1} + y_tx_t' \rangle \\
&= \langle u, w_{t-1} \rangle + \langle u, y_tx_t' \rangle \\
&\geq \langle u, w_{t-1} \rangle + \gamma_r.
\end{split}
\end{equation*}
Thus, by a simple proof by induction, we see that $\langle w_t, u \rangle \geq t\gamma_r$. 

Next, observe that we must have $\langle w_{t-1}, y_tx_t' \rangle \leq 0$. This is because $w_{t-1}$ must missclassify $(x_t', y_t)$ (thus failing to be astute at $(x_t, y_t)$) in order for it to be updated. Substituting this, we see that
\begin{equation*}
\begin{split}
||w_t||_2 &= \sqrt{\langle w_t, w_t \rangle} \\
&= \sqrt{\langle w_{t-1} + x_t'y_t, w_{t-1} + x_t'y_ \rangle} \\
&= \sqrt{\langle w_{t-1}, w_{t-1} \rangle + 2\langle w_{t-1}, x_t'y_t \rangle + \langle x_t', x_t' \rangle} \\
&\leq \sqrt{||w_{t-1}||_2^2 +  0 + R^2},
\end{split}
\end{equation*}
with the last inequality holding since $|x_t'|_2 \leq R$. Thus, by a simple proof by induction, we see that $||w_t||_2 \leq R\sqrt{t}$. 

Finally, since $u$ is a unit vector, it follows that $||w_t||_2 \geq \langle w_t, u$. Substituting our inequalities, we find that $R\sqrt{t} \geq \gamma_r t$ which implies that $t \leq \frac{R^2}{\gamma_r^2}$. Since $t$ is the number of mistakes we make, the result follows. 
\end{proof}

\begin{lem}\label{thm:upper_bound_origin}
Let $\D$ be a distribution with the assumptions above. For any $S \sim \D^n$, let $f_S$ denote the classifier learned by Algorithm \ref{alg:upper_bound_modified}. Then $$\E_{S \sim \D^n}\L_r(f_S, \D) \leq \frac{R^2}{\gamma_r^2 (n+1)}.$$
\end{lem}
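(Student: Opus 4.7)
The plan is a standard online-to-batch conversion that leverages the deterministic mistake bound of Lemma \ref{lem:update_count}. The key observation is that whenever $f_{w_k,0}$ fails to be robust at a fresh point $(x,y)$, the algorithm's update check would fire on $(x,y)$; therefore the expected robust loss is bounded above by the expected update probability at a uniformly random step of a run on $n+1$ samples, and the total number of updates on any sequence is controlled by Lemma \ref{lem:update_count}.

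More concretely, let $w_k$ denote the weight vector after processing the first $k$ training points of $S$, so that Algorithm \ref{alg:upper_bound_modified} outputs $f_{w_k, 0}$. If $(x,y)$ is not robust for $f_{w_k,0}$, then some $z \in B_p(x,r)$ satisfies $f_{w_k,0}(z) \neq y$, which implies $\langle w_k, y z \rangle \le 0$; this is exactly the algorithm's update condition. Hence
\[
\L_r(f_{w_k,0}, \D) \;\le\; \Pr_{(x',y') \sim \D}\bigl[\text{update would occur from state } w_k \text{ on } (x',y')\bigr].
\]
Taking expectation over $S \sim \D^n$ and $k \sim \U(\{0,1,\ldots,n\})$, and exploiting that $(x',y')$ is independent of $S$ and $k$, the right-hand side can be rewritten by treating $(x',y')$ as an $(n{+}1)$-st i.i.d.\ sample: if $S' = \{(x_1,y_1), \ldots, (x_{n+1},y_{n+1})\}$ is i.i.d.\ from $\D$ and $U_j(S') \in \{0,1\}$ indicates whether processing the $j$-th point would trigger an update, then
\[
\E_{S,k}\bigl[\L_r(f_{w_k,0}, \D)\bigr] \;\le\; \E_{S', k}\bigl[U_{k+1}(S')\bigr].
\]

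To conclude, I would apply Lemma \ref{lem:update_count} to the full run of the algorithm on all $n+1$ points of $S'$: deterministically, $\sum_{j=1}^{n+1} U_j(S') \le R^2/\gamma_r^2$. Averaging over the uniform choice $k \in \{0, 1, \ldots, n\}$ gives
\[
\E_{S',k}\bigl[U_{k+1}(S')\bigr] \;=\; \frac{1}{n+1}\,\E_{S'}\sum_{j=1}^{n+1} U_j(S') \;\le\; \frac{R^2}{\gamma_r^2(n+1)},
\]
which combined with the previous bound yields the stated inequality. The only delicate point is the exchangeability identification in the second step, which hinges on the i.i.d.\ structure of $S$ together with the independence of $k$; everything else is bookkeeping. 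The randomized stopping time $k$ in Algorithm \ref{alg:upper_bound_modified} is introduced precisely to enable this averaging argument.
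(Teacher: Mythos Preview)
Your proposal is correct and follows essentially the same online-to-batch argument as the paper (which the paper attributes to \cite{Freund99}): identify the robust loss at step $k$ with the update probability at the $(k{+}1)$-st point of an i.i.d.\ length-$(n{+}1)$ run, then average and invoke the mistake bound of Lemma~\ref{lem:update_count}. The only cosmetic difference is that you state the robust-loss-to-update-probability link as an inequality rather than an equality, which is all that is needed.
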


This Theorem directly follows from the classic online to offline result (Theorem 3 of \cite{Freund99}). For completeness, we include a proof in our context.

\begin{proof}
Fix any $n$ and consider running Algorithm \ref{alg:upper_bound_modified} on $S \sim \D^n$. Let $L_t$ denote the expected robust loss of our classifier conditioning on $k = t$, and let $L^*$ denote the expected overall loss of our classifier. It follows that $$\E_{S \sim \D^n} L^* = \frac{1}{n+1}\sum_{t=0}^n \E_{S \sim \D^n}[L^*|k = t] = \frac{1}{n+1}\sum_{t=0}^n \E_{S \sim \D^n}[L_t].$$

Next, let $T \sim \D^{n+1}$ be a separate i.i.d drawn sample, and suppose we run the adversarial perceptron algorithm on the entirety of $T$ (i.e. rung Algorithm \ref{alg:upper_bound_modified} on $T$ by setting $k = n+1$). For $1 \leq t \leq n + 1$, let $X_t$ be the indicator variable for whether the $t$th point in $T$ requires an update on $w$ (i.e. the classifier is not astute at $w$). There are two important observations to make.

First, we have that $\E_{T \sim \D^{n+1}}[X_t] = \E_{S \sim \D^n}[L_{t-1}]$. This is because $X_t$ is an indicator variable for a classifier trained on precisely $t-1$ i.i.d training examples lacking astuteness for a randomly drawn point from $\D$. Second, we have that $\sum_{t = 1}^{n+1} X_t \leq \frac{R^2}{\gamma_r^2}$. This is because each $\sum X_t$ is precisely the number of updates that perceptron makes on $T$, which is bounded by Lemma \ref{lem:update_count}. By combining these two observations, we see that 
\begin{equation*}
\begin{split}
\E_{S \sim \D^n}[L^*] &= \frac{1}{n+1}\sum_{t=0}^n \E_{S \sim \D^n}[L_t] \\
&= \frac{1}{n+1}\sum_{t=0}^n \E_{T \sim \D^{n+1}}[X_{t+1}] \\
&= \frac{1}{n+1}\E_{T \sim \D^{n+1}}[\sum_{t = 1}^{n+1} X_{t}] \\
&\leq \frac{R^2}{\gamma_r^2(n+1)},
\end{split}
\end{equation*}
as desired. 
\end{proof}

\subsection{General Case}\label{sec:upper_bound_general}

In general case, we no longer assume that the optimal classifier $f_{u, b}$ passes through the origin. To account for this, we will need to first adapt our algorithm. The basic idea is to simply append a $1$ to the vectors $x$ and increase the dimension $d$ by $1$. We are then left with solving a $d+1$ dimensional problem in which the data is once-again separated by a hyperplane passing through the origin. 

We begin with two useful sets of notation.

\begin{defn}
We use the following notation:
\begin{itemize}
	\item For any $x \in \R^d$ and $R \in \R$, we let $x|R \in \R^{d+1}$ denote the $d+1$ dimensional vector obtained by appending the value $R$ to $x$. 
	\item For $w \in \R^{d+1}$, let $||w||_q^*$ denote the $\ell_q$ norm of the first $d$ coordinates of $w$.
	\item For $x \in \R^{d+1}$, let $B_p^*(x, r)$ denote all $z \in \R^{d+1}$ such that $||z - x||_p \leq r$ and such that $z$ and $x$ both share the same last coordinate.
	\item For $S = \{(x_1, y_1), \dots, (x_n, y_n)\} \subset \R^{d+1} \times \{\pm 1\}$, let $R_S$ denote $\max_{i \neq j} ||x_i - x_j||_2$. 
\end{itemize}

\end{defn}

We now propose the following modified version of Algorithm \ref{alg:upper_bound}, that is capable of handling any dataset, including ones that aren't separated by a hyperplane through the origin.
\begin{algorithm}[H]
    \caption{
        General-Adversarial-Perceptron
    }
    \label{alg:gen_upper_bound}
    
    \begin{algorithmic}[1]
    \STATE \textbf{Input}:  $S = \{(x_1, y_1), \dots, (x_n, y_n)\} \sim \D^n,$
    \STATE $x_i' \leftarrow x_i - x_1$. 
    \STATE $R_S = diam_2(S)$
    \STATE $w \leftarrow 0 \in \R^{d+1}$
    \STATE Randomly permute $S$
    \STATE Randomly choose $k \in \{1, 2, 3, \dots, n\}$. 
    \FOR{$t = 1 \dots k$}   
        \IF{$\langle w, y_t(x_t|R_S) \rangle \leq r||w||_q^*$}
            \STATE $z' = \argmin_{|z|_p \leq r} \langle w, z|0 \rangle$
            \STATE $w \leftarrow w + y_t(x_t|R_S) + z'|0$
        \ENDIF           
    \ENDFOR
    \STATE $w^* \leftarrow$ first $d$ coordinates of $w$
    \STATE $b \leftarrow$ the last element of $w$
    \STATE Return $f_{w^*, \langle w^*, x_1 \rangle -bR_S}$
    \end{algorithmic}
\end{algorithm}

The basic idea of the algorithm is to first translate $S$ so that one point is the origin, and then append $R_S$ to every vector in $S$ so that each vector is now $d+1$ dimensional. After doing this, we apply Algorithm \ref{alg:upper_bound} as before with one important difference: for our adversarial attacks, we make sure to not change the last coordinate. 

We now show that this algorithm has a similar performance to our old algorithm. We first prove a helpful lemma.

\begin{lem}\label{lem:general_upper_bound}
Let $\D$ be any linearly $r$-separated distribution, and let $S \sim \D^n$ such that $S$ has positively and negatively labeled examples. Let $x_i' = x_i - x_1$ for $1 \leq i \leq n$. Then the following hold.
\begin{itemize}
	\item There exists a unit vector $u \in \R^{d+1}$ such that for all $(x_i, y_i) \in S$, $\min_{z \in B_p^*(x_i')} \langle u, y_i(z|R_S) \rangle \geq \frac{\gamma_r(\D)}{\sqrt{2}}.$
	\item For all $(x_i, y_i) \in S$, $||x_i'|R_S||_2 \leq \sqrt{2}diam_2(\D)$. 
\end{itemize}
\end{lem}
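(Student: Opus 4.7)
My plan is to construct the unit vector $u$ explicitly from the optimal separator for $\D$. Let $w^* \in \R^d$ with $\|w^*\|_2 = 1$ and $b^* \in \R$ be the parameters of a linear classifier witnessing the $r,\gamma_r$-separability of $\D$, so that $y(\langle w^*, x'\rangle - b^*) \geq \gamma_r(\D)$ for every $(x,y)$ in the support of $\D$ and every $x' \in B_p(x,r)$. Under the translation $x \mapsto x - x_1$, this becomes $y_i(\langle w^*, z\rangle - b') \geq \gamma_r(\D)$ for every $(x_i,y_i) \in S$ and every $z$ in the $\ell_p$-ball of radius $r$ around $x_i'$, where $b' := b^* - \langle w^*, x_1\rangle$. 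I then define $\tilde u = (w^*, -b'/R_S) \in \R^{d+1}$ and set $u = \tilde u / \|\tilde u\|_2$. By construction, $\langle \tilde u, y_i(z|R_S)\rangle = y_i(\langle w^*, z\rangle - b') \geq \gamma_r(\D)$, so the claimed lower bound follows once I argue that $\|\tilde u\|_2 \leq \sqrt{2}$.

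For that, $\|\tilde u\|_2^2 = 1 + (b'/R_S)^2$, so it suffices to show $|b'| \leq R_S$. This is where the hypothesis that $S$ contains both positive and negative examples is used. Pick $j$ with $y_j \neq y_1$; then $x_1' = 0$ and $x_j'$ lie on opposite sides of the translated hyperplane $\{z : \langle w^*, z\rangle = b'\}$, since each is classified with its own label by the separator. Hence $b'$ lies between $\langle w^*, 0\rangle = 0$ and $\langle w^*, x_j'\rangle$ on the real line, and Cauchy--Schwarz gives $|b'| \leq |\langle w^*, x_j'\rangle| \leq \|x_j'\|_2 \leq R_S$, using $\|w^*\|_2 = 1$ and $R_S = \diam_2(S)$. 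This yields $\|\tilde u\|_2 \leq \sqrt{2}$ and completes part~(1) with margin $\gamma_r(\D)/\sqrt{2}$.

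Part~(2) is a Pythagorean calculation: $\|x_i'|R_S\|_2^2 = \|x_i - x_1\|_2^2 + R_S^2$. The first term is bounded by $\diam_2(\D)^2$ since $x_i$ and $x_1$ both come from $\D$, and the second is bounded by $\diam_2(\D)^2$ since $R_S = \diam_2(S)$ and $S$ lies in the support of $\D$. Summing the two bounds and taking a square root gives $\|x_i'|R_S\|_2 \leq \sqrt{2}\,\diam_2(\D)$.

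The only non-routine step is the bound $|b'| \leq R_S$; everything else is bookkeeping around the translation by $x_1$ and the augmentation by $R_S$ in the last coordinate. Conceptually, appending $R_S$ is calibrated precisely so that encoding the bias into the extra dimension inflates the $\ell_2$-norm of the augmented separator by at most $\sqrt{2}$, and this factor appears only as a constant in the ratio driving Theorem~\ref{thm:upper_bound}, preserving the $O(\rho_r(\D)^2/n)$ rate.
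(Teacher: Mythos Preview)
Your proof is correct and follows essentially the same route as the paper's: translate so $x_1=0$, take $u$ as the normalized augmentation $(w^*, -b'/R_S)$ of the optimal separator, and bound $|b'|\le R_S$ using that $S$ contains points of both labels. Your treatment of the bias bound is in fact slightly cleaner than the paper's, since you explicitly handle both signs of $b'$ via the point $x_j'$ of the opposite label, whereas the paper only states the one-sided inequality $b\le R_S$.
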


\begin{proof}
Without loss of generality, we will assume $x_1 = 0$ so that we can safely ignore the differences between $x_i'$ and $x_i$. Since $\D$ is $r$-separated, there exist $w, b$ (with $w$ a unit vector) such that $$\langle w, zy \rangle \geq by + \gamma_r(\D),$$ for all $(x,y) \sim \D$ and $z \in B_p(x, r)$. Furthermore, since $x_1 = 0$, it follows that $||x||_2 \leq \diam_2(\D)$ for all $(x, y) \sim \D$. This immediately implies that $||x_i|R_S||_2 \leq \sqrt{\diam_2(\D)^2 + R_S^2} \leq \sqrt{2}\diam_2(\D)$, yielding the second part of the lemma.

For the first part, observe that we can rearrange the equation above, we see that $$\langle w|-\frac{b}{R_S}, zy | R_S \rangle \geq \gamma_r(\D).$$ The key observation is that the first equation implies that $b \leq R_S$. This is because $S$ contains positively and negatively labeled examples, and consequently $\langle w, x_i \rangle \geq b + \gamma_r(\D) > b$ for some $x_i$ such that $|x_i| = R_S$. Thus, it follows that the unit vector $u = \frac{w|\frac{-b}{R_S}}{\sqrt{1 + b^2/R_S^2}}$ has the desired property, by observing that $\sqrt{1 + b^2/R_S^2} \leq \sqrt{2}$. 
\end{proof}

Lemma \ref{lem:general_upper_bound} allows us to analyze the performance of Algorithm \ref{alg:gen_upper_bound}. The basic idea is that our performance on the transformed data in $\R^{d+1}$ is isomorphic to its performance on the data in $\R^d$. As a consequence, we can apply the same argument as in Theorem \ref{thm:upper_bound_origin} to get a bound on the error estimate. However, this bound must be given in terms of the diameter and robust margin of the \textit{transformed data}: quantities that have been bounded in Lemma \ref{lem:general_upper_bound}. Thus, putting this all together, Theorem \ref{thm:upper_bound} follows.

\section{Details for Kernel Algorithm}\label{sec:kernel_appendix}

Next, we find analogs of linear $r$-separability and the robust margin when considering kernels. First, we define an embedding function.

\begin{defn}\label{defn:embedding_function}
Let $K: \R^d \times \R^d \to \R^+$ be a kernel similarity function. Then there exists a Hilbert space $H$ and map $\phi: \R^d \to H$ such that for all $x_1, x_2 \in \R^d,$ we have $$K(x_1, x_2) = \langle \phi(x_1), \phi(x_2) \rangle.$$ We call $\phi$ the \textbf{embedding function} and $H$ the \textbf{embedding space}.
\end{defn}

The key idea of this section is that Kenrel classifiers correspond to linear classifiers in embedded space. This is the essence of the ``kernel trick." Formally, we have the following, well-known theorem. 

\begin{thm}\label{thm:kernel_trick}
Let $K:\R^d \times \R^d \to \R^+$ be a kernel similarity function. Let $T = \{(x_1, y_1), \dots, (x_m, y_m)\} \subset \R^d \times \{\pm 1\}$ be a set of labeled points, and $\alpha \in \R^m$ be a vector of $m$ real numbers. Then for all $x \in \R^d$, we have that $$\sum_{i = 1}^m \alpha_iy_iK(x_i, x) = \big \langle \sum_{i= 1}^m \alpha_iy_i\phi(x_i), \phi(x) \big \rangle.$$ Because of this, if we let $w = \sum_{i=1}^m \alpha_iy_i\phi(x_i)$, then the kernel classifier $f_{T, \alpha}^k$ satisfies $f_{T, \alpha}^k(x) = f_{w, 0}(\phi(x))$, where the latter classifier is the linear classifier in $H$ with weight vector $w$. 
\end{thm}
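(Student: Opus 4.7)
The plan is to prove both claims directly from the definition of the embedding map $\phi$ (Definition \ref{defn:embedding_function}) together with the bilinearity of the inner product on the Hilbert space $H$. This theorem is essentially a bookkeeping exercise: it is the standard ``kernel trick'' restated in the notation of the paper, so I expect no genuine obstacle, only the need to carefully match up the two notations ($f_{T,\alpha}^K$ as in Definition \ref{defn:kernel} and $f_{w,0}$ as in the definition of linear classifier).

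For the first equality, I would start on the left-hand side. By Definition \ref{defn:embedding_function}, $K(x_i, x) = \langle \phi(x_i), \phi(x) \rangle$ for every $i$. Substituting this into $\sum_{i=1}^m \alpha_i y_i K(x_i, x)$ gives $\sum_{i=1}^m \alpha_i y_i \langle \phi(x_i), \phi(x) \rangle$. Since the inner product on $H$ is linear in its first argument and the scalars $\alpha_i y_i$ are real, I can pull the sum and scalars inside to obtain $\langle \sum_{i=1}^m \alpha_i y_i \phi(x_i), \phi(x) \rangle$, which is exactly the right-hand side. This is the only real computation in the theorem.

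For the second claim, I would set $w = \sum_{i=1}^m \alpha_i y_i \phi(x_i) \in H$, so that by the first part, $\sum_{i=1}^m \alpha_i y_i K(x_i, x) = \langle w, \phi(x) \rangle$. Comparing the case analysis in Definition \ref{defn:kernel} with the case analysis in the definition of a linear classifier (with bias $b=0$), I would observe that $f_{T,\alpha}^K(x) = +1$ iff $\sum_{i=1}^m \alpha_i y_i K(x_i,x) \geq 0$ iff $\langle w, \phi(x)\rangle \geq 0$ iff $f_{w,0}(\phi(x)) = +1$, and similarly for the $-1$ branch; hence $f_{T,\alpha}^K(x) = f_{w,0}(\phi(x))$ for all $x \in \R^d$, establishing the claim.

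The only subtlety worth flagging is that in the paper's definitions $f_{w,0}$ was introduced for $w \in \R^d$, while here $w$ lives in the possibly infinite-dimensional Hilbert space $H$; this is a purely notational extension and causes no issue since the same sign-of-inner-product definition makes sense verbatim in $H$. No convergence questions arise because $w$ is defined as a finite sum of elements of $H$.
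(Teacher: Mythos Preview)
Your proof is correct. The paper does not actually supply a proof of this theorem; it introduces it as a ``well-known theorem'' and immediately moves on to using it, so there is nothing to compare against beyond noting that your argument is the standard one the paper is implicitly invoking.
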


The main idea behind Algorithm \ref{alg:upper_bound_kernel}, is that it corresponds to running Algorithm \ref{alg:upper_bound} inside the embedded space of the kernel $K$. In particular, the kernel-perceptron update step precisely corresponds to the dual-form of the perceptron-update step inside embedded space. It follows from Theorem \ref{thm:kernel_trick} that the following algorithm is identical to Algorithm \ref{alg:upper_bound_kernel}. 

\begin{algorithm}[H]
   \caption{Adversarial-Kernel-Perceptron}
   \label{alg:upper_bound_kernel_nice}
\begin{algorithmic}[1]
    \STATE \textbf{Input}:  $S = \{(x_1, y_1), \dots, (x_n, y_n)\} \sim \D^n,$ Similarity function, $K$,
    \STATE $w \leftarrow 0$
    \FOR{$i = 1 \dots n$}
    	\STATE $z = \argmin_{||z - x||_p \leq r}  y_i\langle w, \phi(z) \rangle$ 
        \IF{$\langle y_iw, \phi(z) \rangle \leq 0$}
            \STATE $w = w + y_i\phi(z)$ 
        \ENDIF           
    \ENDFOR
    \STATE return $f_{w,0} \circ \phi$
\end{algorithmic}
\end{algorithm}
In particular, by comparing Algorithms \ref{alg:upper_bound_kernel} and \ref{alg:upper_bound_kernel_nice}, we have by Theorem \ref{thm:kernel_trick} that for all time steps $t$, $$w = \sum_{(z,y) \in T} y\phi(z).$$ Therefore, to analyze the performance of Algorithm \ref{alg:upper_bound_kernel}, it suffices to analyze Algorithm \ref{alg:upper_bound_kernel_nice}. However, we already have built to the tools for doing this: all of the results from Section \ref{sec:upper_bound_origin} apply to Algorithm \ref{alg:upper_bound_kernel_nice} since the only difference is replacing $\R^d$ with $H$, the embedding space of $K$. 

We now proceed by giving the corresponding assumptions on $\D$ needed for Theorem \ref{thm:upper_bound_kernel}. We begin by first defining $(K, r)$-separability and $K$-robust margin, $\gamma_{r, K}$, the Kernel analogs of linear $r$-separability (Definition \ref{defn:r_separability}) and the robust margin (Definition \ref{def:robust_margin}).

\begin{defn}\label{defn:ker_r_separability}
For any $r > 0$, a distribution $\D$ over $\R^d \times \{\pm 1\}$ is $(K, r)$-\textbf{separable} if there exists a kernel classifier $f_{S, \alpha}^K$ such that $\L_r(f_{S, \alpha}^K, \D) = 0$.
\end{defn}

To define the $K$-robust margin, we will once again need the sets $S_r^+$ and $S_r^-$ defined in equation \ref{eqn:s_plus_s_minus} (top right of page 7). Recall that these sets denote the positively and negatively labeled elements from $supp(\D)$ \textit{including} all adversarial perturbations of those points. 

\begin{defn}\label{defn:k_rob_margin}
Let $\D$ be a $(K, r)$-separable distribution over $\R^d \times \{ \pm 1\}$. Then $\D$ has $K$-robust margin $\gamma_r$ if $\gamma_r$ is the largest real number such that there exists a kernel classifier $f_{T, \alpha}^K$, such that the following conditions hold.

\begin{enumerate}
	\item $\L_r(f_{T, \alpha}^K, \D) = 0$. 
	\item Let $\phi, H$ be the embedding function/space of $K$, let $w = \sum_{(z, y) \in T} y\phi(z)$, and let $H_w = \{z \in H, \langle z, w \rangle = 0\}$ be the decision boundary in $H$ of $f_{T, \alpha}^K$. Then for all $x \in S_r^+ \cup S_r^-$, $\phi(x)$ has $\ell_2$ distance at least $\gamma_r^K$ from $H_w$ inside $H$. That is, $$\inf_{x \in S_r^+ \cup S_r^-} \inf_{z \in H_w} \sqrt{\langle \phi(x) - z, \phi(x) - z \rangle} = \gamma_r^K.$$ 
\end{enumerate}
\end{defn}

We now state the main theorem giving the performance of Algorithm \ref{alg:upper_bound_kernel}. 

\begin{thm}
Let $\D$ be a distribution over $\R^d \times \{\pm 1\}$ such that the following conditions hold. 
\begin{enumerate}
	\item There exists $R > 0$ such that for all $x \in S_r^+ \cup S_r^-$, $\langle \phi(x), \phi(x) \rangle \leq R^2$.
	\item $\D$ is $K, r$-separable, and has $K$-robust margin $\gamma_r^K > 0$.
\end{enumerate}
Then for any $S \sim D^n$, if $f_{T, \alpha}^k$ denotes the classifier learned by Algorithm \ref{alg:upper_bound_kernel}, then $$\E_{S \sim \D^n}[\L_r(f_{T, \alpha}^k, \D)] = O\left(\frac{(\gamma_r^K)^2}{R^2(n+1)} \right).$$
\end{thm}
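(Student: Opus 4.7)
The plan is to reduce this result entirely to the origin case of the linear adversarial perceptron (Lemma \ref{thm:upper_bound_origin}) applied inside the embedding Hilbert space $H$ of the kernel $K$. The crucial observation, already made in the excerpt via Theorem \ref{thm:kernel_trick}, is that Algorithm \ref{alg:upper_bound_kernel} is operationally identical to Algorithm \ref{alg:upper_bound_kernel_nice}, which performs standard adversarial perceptron updates on the images $\phi(x_i) \in H$. Moreover, by Definition \ref{defn:k_rob_margin}, the decision boundary of a kernel classifier in $H$ automatically passes through the origin, so the general-case reduction of Section \ref{sec:upper_bound_general} is not needed, and we can directly apply the origin-case machinery.

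Concretely, I would first introduce a ``modified'' version of Algorithm \ref{alg:upper_bound_kernel_nice} that first samples $k \sim \U(\{0,1,\dots,n\})$ and then runs the update loop for $k$ iterations, mirroring the modification from Algorithm \ref{alg:upper_bound} to Algorithm \ref{alg:upper_bound_modified}. Next I would prove the kernel analog of Lemma \ref{lem:update_count}: the total number of perceptron updates is at most $R^2/(\gamma_r^K)^2$. Let $u \in H$ be the unit vector from Definition \ref{defn:k_rob_margin}. For each update triggered by an adversarial example $(z_t, y_t)$ with $z_t \in B_p(x_t, r)$, we have $z_t \in S_r^{y_t}$, so by the $K$-robust margin property $\langle u, y_t \phi(z_t) \rangle \geq \gamma_r^K$; hence $\langle u, w_t \rangle \geq t \gamma_r^K$ by induction. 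The standard perceptron potential argument, using $\langle w_{t-1}, y_t \phi(z_t) \rangle \le 0$ (the update criterion) and $\langle \phi(z_t), \phi(z_t) \rangle \leq R^2$, then gives $\|w_t\|^2 \leq t R^2$. Combining $t \gamma_r^K \leq \langle u, w_t \rangle \leq \|w_t\| \leq R\sqrt{t}$ yields the mistake bound. All steps go through verbatim in $H$ because only inner products are used.

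With the mistake bound in hand, the remainder is the standard online-to-batch conversion already worked out in the proof of Lemma \ref{thm:upper_bound_origin}: draw $T \sim \D^{n+1}$, let $X_t$ indicate whether the $t$-th point would cause an update when Algorithm \ref{alg:upper_bound_kernel_nice} is run on all of $T$, note that $\E[X_t]$ equals the expected robust loss conditional on $k = t-1$, and use $\sum_{t=1}^{n+1} X_t \leq R^2/(\gamma_r^K)^2$ to bound the average. This yields $\E_{S \sim \D^n}[\L_r(f_S, \D)] \le R^2 / ((\gamma_r^K)^2(n+1))$, which is the desired bound (interpreting the stated expression in the theorem as $R^2/(\gamma_r^K)^2(n+1)$ — the aspect-ratio style form matching Theorem \ref{thm:upper_bound}).

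The one subtle point, which I view as the main conceptual obstacle, is that the adversarial attack $\argmin_{\|z-x_i\|_p \le r}\, y_i \langle w, \phi(z) \rangle$ is optimized over a $\ell_p$-ball in the \emph{input} space $\R^d$ rather than over a ball in $H$, so the image $\phi(B_p(x_i, r))$ is in general a non-convex, possibly weird-shaped subset of $H$. The resolution is that the perceptron mistake bound never needs a ball in $H$: it only needs (i) a uniform $\ell_2$ norm bound on the updated vectors in $H$, which follows from condition~1 of the hypothesis applied to $S_r^+ \cup S_r^-$, and (ii) a uniform lower bound on $\langle u, y_t \phi(z_t) \rangle$, which follows from Definition \ref{defn:k_rob_margin} since $z_t$ belongs to $S_r^+ \cup S_r^-$ by construction. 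Thus the geometry of the embedded attack set is irrelevant, and the linear-case proof transfers cleanly.
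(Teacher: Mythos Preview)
Your proposal is correct and follows essentially the same approach as the paper: the paper's proof simply observes that Lemmas \ref{lem:update_count} and \ref{thm:upper_bound_origin} transfer verbatim to Algorithm \ref{alg:upper_bound_kernel_nice} because neither proof uses the dimension $d$, so they apply in any Hilbert space $H$. Your writeup is in fact more detailed than the paper's (which is a two-sentence sketch), and you correctly flag both the typo in the stated bound and the one genuinely subtle point—the attack set $\phi(B_p(x_i,r))$ need not be a ball in $H$—together with the correct resolution that the perceptron potential argument only requires the norm and margin conditions on $S_r^+ \cup S_r^-$, which are exactly the theorem's hypotheses.
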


\begin{proof}
The key idea is to observe that Lemmas \ref{lem:update_count} and \ref{thm:upper_bound_origin} both directly translate from Algorithm \ref{alg:gen_upper_bound} to Algorithm \ref{alg:upper_bound_kernel_nice}. In particular, neither proof used the dimension, $d$, of $\R^d$, and consequently would equally apply to even an infinite dimensional Hilbet Space, $H$. Thus, the proof is completely analogous to the proof of Theorem \ref{thm:upper_bound_origin}.
\end{proof}

\end{document}